\documentclass{article}

\usepackage{microtype}
\usepackage{graphicx}
\usepackage{subcaption}
\usepackage{booktabs} 

\usepackage{hyperref}

\usepackage{multirow}

\usepackage[accepted]{icml2026}

\usepackage{amsmath}
\usepackage{amssymb}
\usepackage{mathtools}
\usepackage{amsthm}
\usepackage{booktabs}
\usepackage{multirow}
\usepackage{colortbl}
\usepackage{graphicx}
\usepackage{xcolor}
\usepackage{array} 
\usepackage{booktabs}
\usepackage[table]{xcolor}
\usepackage[capitalize,noabbrev]{cleveref}

\theoremstyle{plain}
\newtheorem{theorem}{Theorem}[section]
\newtheorem{proposition}[theorem]{Proposition}

\theoremstyle{definition}

\theoremstyle{remark}
\newtheorem{remark}[theorem]{Remark}

\usepackage[disable,textsize=tiny]{todonotes}

\icmltitlerunning{PULSE: Generative Phase Evolution for Non-Stationary Time Series Forecasting}

\begin{document}

\twocolumn[
  \icmltitle{PULSE: Generative Phase Evolution for Non-Stationary Time Series Forecasting}



  \icmlsetsymbol{equal}{*}

\begin{icmlauthorlist}
  \icmlauthor{Yangyou Liu}{scu}
  \icmlauthor{Zezhi Shao}{ict}
  \icmlauthor{Xinyu Chen}{ucf}
  \icmlauthor{Hu Chen}{scu}
  \icmlauthor{Fei Wang}{ict,ucas}
  \icmlauthor{Yuankai Wu}{scu}
\end{icmlauthorlist}

\icmlaffiliation{scu}{College of Computer Science, Sichuan University, Chengdu, China}
\icmlaffiliation{ucas}{University of Chinese Academy of Sciences, Beijing, China}
\icmlaffiliation{ict}{Institute of Computing Technology, Chinese Academy of Sciences, Beijing, China}
\icmlaffiliation{ucf}{Institute of Artificial Intelligence, University of Central Florida, Orlando, USA}

\icmlcorrespondingauthor{Yuankai Wu}{wuyk0@scu.edu.cn}



  \icmlkeywords{Machine Learning, ICML}

  \vskip 0.3in
]



\printAffiliationsAndNotice{}  

\begin{abstract}
Time series forecasting under non-stationarity faces a fundamental tension between capturing stable representations and adapting to distribution shifts. Existing methods implicitly rely on static historical assumptions, leading to a critical failure mode we term \textit{Phase Amnesia}, where models become blind to the evolving global context. To resolve this, we formalize non-stationary dynamics through three physical hypotheses: wold decomposition, dynamical phase evolution, and heteroscedastic manifold generation. These principles inspire \textbf{PULSE}, a physics-informed, plug-and-play framework adopting a \textit{Disentangle--Evolve--Simulate} design philosophy. Specifically, PULSE utilizes phase-anchored disentanglement to resolve optimization interference caused by dominant trends, employs a Phase Router to actively generate future trajectories, and introduces Statistic-Aware Mixup (SAM) to ensure robustness against out-of-distribution volatility. Empirically, PULSE enables a simple MLP backbone to achieve state-of-the-art or highly competitive performance across 12 real-world benchmarks. This validates that a correct physics-informed inductive bias is far more critical than raw architectural complexity for non-stationary forecasting. The code is available at: https://github.com/Gemost/PULSE.
\end{abstract}

\section{Introduction}
\label{sec:intro}

 Time Series Forecasting (TSF) fundamentally addresses the challenge of conditional generation under non-stationarity \citep{dickey1979distribution,liu2022non}. The core tension lies in the conflict between \textit{learning stable representations} and \textit{modeling distribution shifts}. To address this tension, existing deep learning approaches primarily diverge into two distinct paradigms. Normalization-based methods \citep{kim2021reversible,liu2022non,fan2023dish,liu2023adaptive,ye2024frequency,dai2024ddn} attempt to eliminate non-stationarity by standardizing the input data, but they always assume future statistics can be simply restored from historical moments. Conversely, structure-based methods \citep{wu2021autoformer,wutimesnet,cai2024msgnet,dai2024periodicity,lin2024cyclenet} aim to model the distribution shift by capturing invariant patterns like periodicity, yet they rigidly assume these structures persist into the future. Consequently, both paradigms suffer from restrictive inductive biases, leaving them incapable of expressing the complex, dynamic behaviors underlying real-world time series. 

Fundamentally, these static assumptions lead to a critical failure mode we term \textit{Phase Amnesia} (Figure~\ref{fig:motivation}, left). By either aggressively neutralizing statistical shifts or strictly adhering to past cycles, existing models lose the ``coordinates'' of the evolving global context. Consequently, they become blind to the distinction between a genuine structural shift and mere temporary stochastic noise. To resolve Phase Amnesia, we propose a paradigm shift from conventional ``passive fitting'' to ``generative evolution'' (Figure~\ref{fig:motivation}, right). Grounded in this perspective, we formalize the distinct mechanisms of non-stationary dynamics through three physical hypotheses, which inspire a novel \textit{Disentangle--Evolve--Simulate} design philosophy for our framework:
\begin{figure*}[t]
    \centering
    \begin{subfigure}[b]{0.48\textwidth}
        \centering
        \includegraphics[height=5.5cm, keepaspectratio]{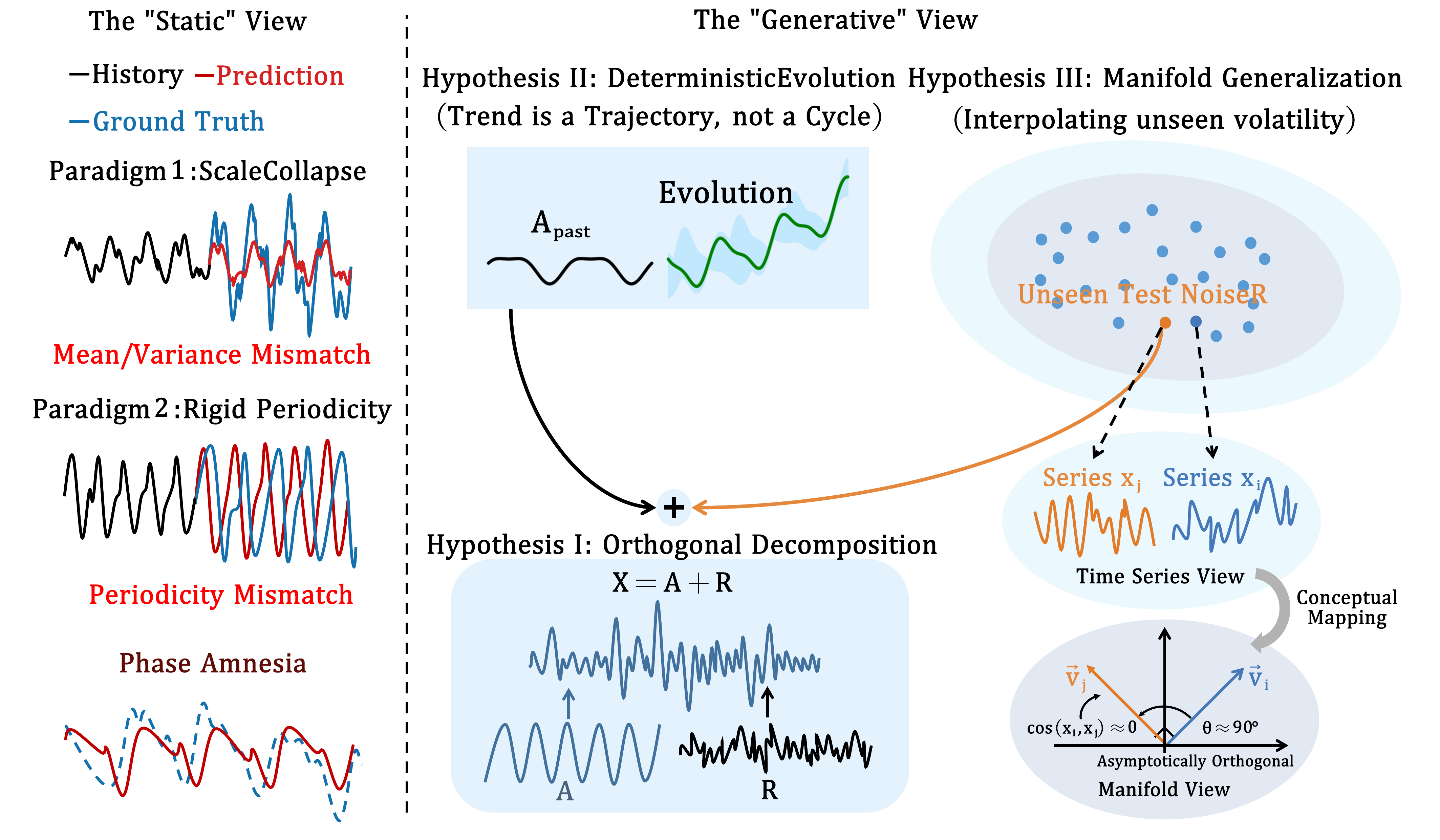}
        \caption{\textbf{Motivation Analysis.} (Left) Traditional methods suffer from \textbf{Phase Amnesia} by relying on static statistics or rigid periodicity. (Right) PULSE is grounded in three physical hypotheses: (I) Wold Decomposition; (II) Dynamical Phase Evolution; (III) Heteroscedastic Residual Manifold.}
        \label{fig:motivation}
    \end{subfigure}
    \hfill
    \begin{subfigure}[b]{0.48\textwidth}
        \centering
        \includegraphics[height=5.5cm, keepaspectratio]{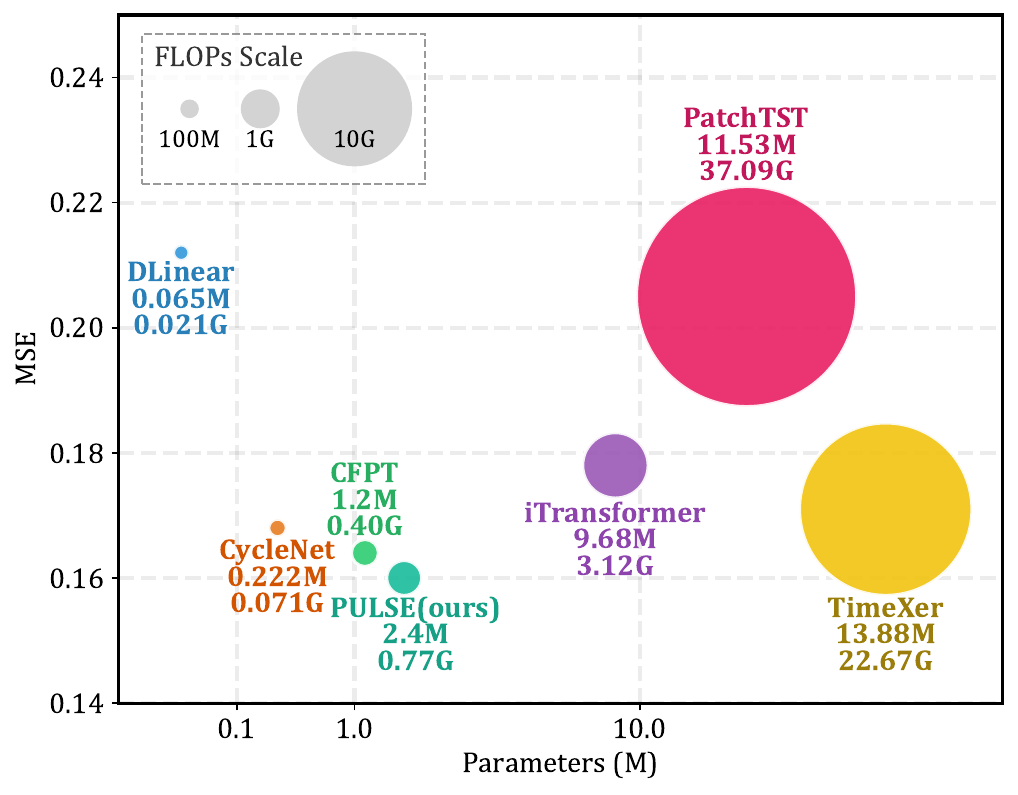}
        \caption{\textbf{Efficiency Analysis.} Computational efficiency of PULSE. Comparison of prediction accuracy (MSE), parameter size, and computational cost (FLOPs) on the Electricity dataset. All metrics are averaged over four prediction horizons.}
        \label{fig:efficiency}
    \end{subfigure}
    \caption{Motivation and Efficiency of PULSE. }
    \label{fig:combined_motivation_efficiency}
\end{figure*}

\textbf{Hypothesis I: The Wold Decomposition Principle.}
Inspired by the Wold Representation Theorem and its extension to non-stationary processes \citep{wold1938study}, we posit that complex time series should be conceptualized as the superposition of a \textit{deterministic structural component} and a \textit{stochastic residual component}. Deep forecasters often inadvertently conflate these two distinct generating mechanisms. From a classical signal processing perspective \citep{koopmans1995spectral}, this conflation leads to Optimization Interference: high-energy deterministic trends impose ``spectral dominance'' on the objective function, suppressing the gradient updates needed to capture low-amplitude, information-rich stochastic fluctuations. 

\textit{Constraint I:} A valid non-stationary model must explicitly disentangle these components to optimize them in their own subspaces.

\textbf{Hypothesis II: Dynamical Phase Evolution.}
In contrast to classical Fourier assumptions of global, static periodicity \citep{brigham1988fast}, we approach time series from a non-linear dynamical systems perspective. Real-world signals inherently exhibit non-stationary behaviors, characterized by shifting \textit{instantaneous phase} and amplitude modulation \citep{huang1998empirical}. 
Consequently, existing models that rely on ``direct historical copying''\citep{lin2024cyclenet}---such as rigid seasonal-trend decomposition or fixed-window attention lookup---suffer from severe \textit{extrapolation failure} when the generating mechanism shifts. History does not rigidly repeat; it evolves. 
We thus hypothesize that the deterministic component is not a fixed cycle, but a continuous trajectory in phase space governed by a time-varying \textit{evolution operator} \citep{brunton2016discovering}. 

\textit{Constraint II:} A robust model must abandon passive historical lookup; instead, it must actively generate the future trajectory by learning this continuous evolution operator.

\textbf{Hypothesis III: Heteroscedastic Residual Manifold.}
Addressing the inherent heteroscedasticity of complex systems \citep{bollerslev1986generalized}, we hypothesize that residuals do not follow a static noise profile. Instead, from an information geometry perspective \citep{amari2016information}, the time-varying probability distributions form a continuous statistical manifold. A fundamental limitation of standard deep forecasters is their reliance on Empirical Risk Minimization (ERM), which naively assumes independent and identically distributed (i.i.d.) noise. When test-time volatility shifts Out-of-Distribution (OOD), this i.i.d. assumption is violated, leading ERM to suffer catastrophic performance degradation \citep{du2021adarnn}. 

\textit{Constraint III:} Residual learning must transcend passive noise fitting and be formulated as a distributional generalization task. 

Based on this constrained formulation, we present \textbf{PULSE} (\textbf{P}hased \textbf{U}nfolding \& \textbf{L}atent \textbf{S}tochastic \textbf{E}volution), a universal framework that constitutes a {minimal realization} satisfying all three hypotheses.
Rather than stacking complex layers, PULSE enforces these physical constraints through:
(1) \textbf{Phase-Anchored Disentanglement} to isolate orthogonal components;
(2) A \textbf{Generative Phase Router} to model structural evolution; and
(3) \textbf{Statistic-Aware Mixup} to simulate the residual manifold. Empirically, we demonstrate that this physics-informed disentanglement is more critical than raw model capacity. 

Our main contributions are summarized as follows:
\begin{itemize}
    \item \textbf{Physics-Informed Framework:} We formalize non-stationarity through three hypotheses and propose PULSE, a framework that resolves Phase Amnesia by disentangling evolving structures from stochastic residuals.
    \item \textbf{Generative Phase Evolution:} Addressing Hypothesis II, we design the {Phase Router}, a generative module that actively predicts the evolution of deterministic trends, breaking the static periodicity assumption.
    \item \textbf{Manifold-Aware Generalization:} Addressing Hypothesis III, we introduce {Statistic-Aware Mixup (SAM)}, a theoretically grounded strategy that simulates unseen residual distributions to prevent scale collapse under heteroscedastic shifts.
    \item \textbf{SOTA with Minimal Complexity:}  We show that our plug-and-play framework empowers a simple MLP to achieve state-of-the-art or highly competitive performance compared with complex forecasting models (Figure~\ref{fig:efficiency}), highlighting the importance of appropriate inductive biases.
\end{itemize}

\section{Related Work }
\label{sec:relatedwork}
In recent years, deep learning-based approaches have made significant strides in addressing the challenges of non-stationary Time Series Forecasting (TSF)~\cite{zeng2023transformers,nie2023a,wang2024timexer,cai2024msgnet,liuitransformer,koucfpt,niu2026phaseformer}. To effectively model and mitigate temporal distribution shifts, existing methods generally adopt two primary strategies:
\subsection{{Normalization-based Paradigm}}
This family of methods aims to handle distribution shifts by mapping non-stationary series into a latent stationary space. Typically, a reversible normalization is applied before the backbone network, and predictions are inversely transformed back to the original scale. Representative works include RevIN~\cite{kim2021reversible}, Non-stationary Transformer~\cite{liu2022non}, Dish-TS~\cite{fan2023dish}, SAN~\cite{liu2023adaptive}, FAN~\cite{ye2024frequency}, and DDN~\cite{dai2024ddn}. These approaches effectively stabilize training and improve generalization under smooth distribution changes. However, they usually assume that future statistical properties can be inferred from historical windows, which may not fully account for the heteroscedastic and evolving nature of real-world systems.
\subsection{{Structure-based Paradigm}}
Another line of work tackles non-stationarity by decomposing time series into semantic components such as trend and seasonality. 
\emph{Implicit decomposition} methods embed decomposition operations into the model architecture, allowing temporal patterns to be separated during representation learning (e.g., Autoformer~\cite{wu2021autoformer}, FEDformer~\cite{zhou2022fedformer}, MICN~\cite{wang2023micn}, DLinear~\cite{zeng2023transformers}, TimeMixer~\cite{wangtimemixer}, SSCNN~\cite{deng2024parsimony}, and TimeMixer++~\cite{wang2025timemixer++}). 
In contrast, \emph{explicit decomposition} methods introduce dedicated modules to directly extract periodic or cyclic structures before modeling (e.g., TimesNet~\cite{wutimesnet}, MSGNet~\cite{cai2024msgnet},  PDF~\cite{dai2024periodicity}, CycleNet~\cite{lin2024cyclenet}, TimeBase~\cite{huangtimebase}, TQNet~\cite{lintemporal} and TimeEmb~\cite{xiatimeemb}). 
Despite their effectiveness in capturing recurring patterns, most structure-based approaches assume relatively stable periodicity, which limits their adaptability to dynamically evolving frequency and phase shifts.

\section{Methodology}
\label{sec:method}

\subsection{Problem Formulation \& Overview}
\begin{figure}[hbt!]
    \centering
    \includegraphics[width=\columnwidth]{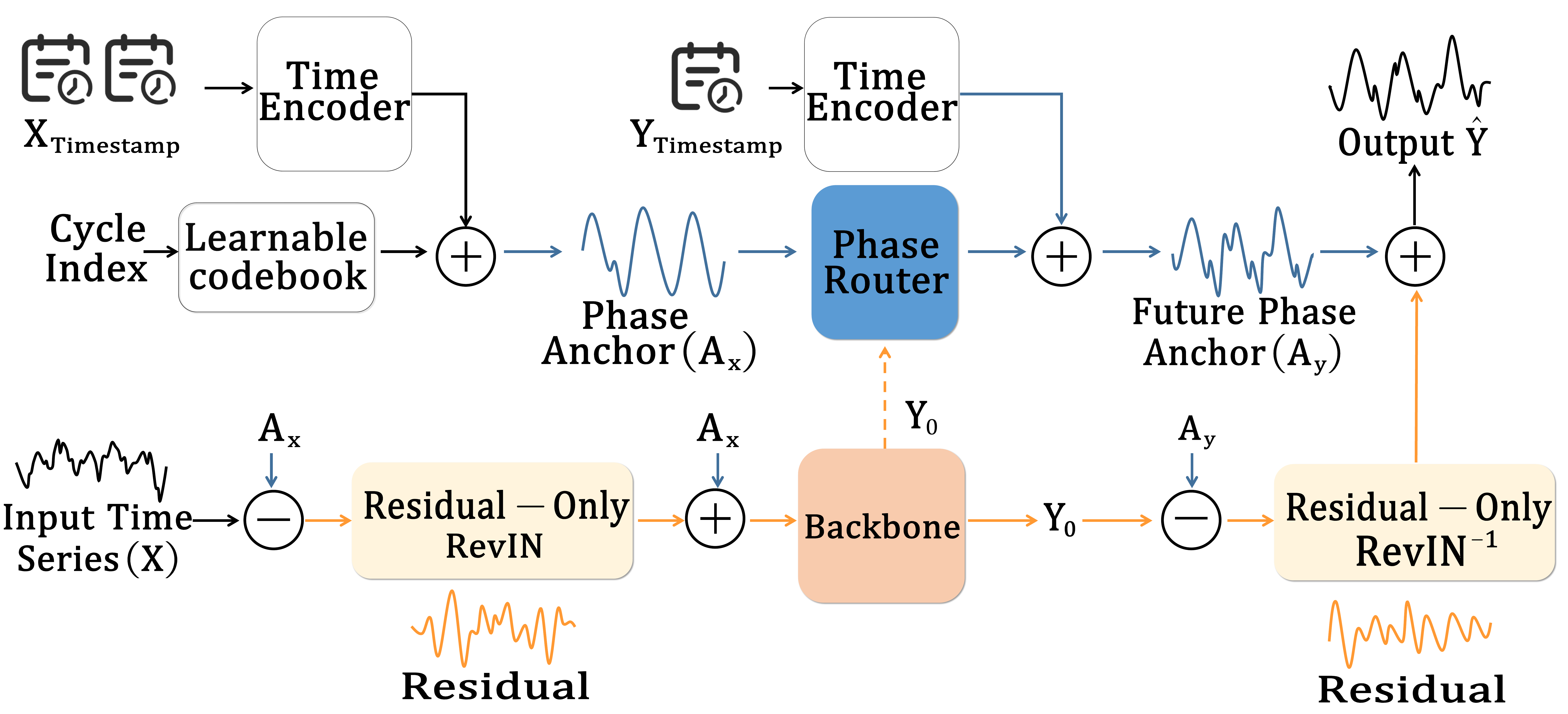} 
    \caption{Overall architecture of the proposed \textbf{PULSE} framework.}
    \label{fig:model_arch}
\end{figure}
Multivariate Time Series Forecasting (MTSF) aims to predict future sequences $\mathbf{Y}\in\mathbb{R}^{H\times C}$ over a look-ahead horizon $H$ across $C$ variates, given historical observations $\mathbf{X}\in\mathbb{R}^{T\times C}$ of look-back window $T$.
Following the Generalized Decomposition Principle (Hypothesis I), we assume the underlying multivariate process is a superposition of a deterministic structural component (Phase Anchor $\mathbf{A}$) and a stochastic residual component ($\mathbf{R}$). 
Consequently, both the historical input and the future target are additively decomposed as: $ \mathbf{X} = \mathbf{A}_{x} + \mathbf{R}_{x},  \mathbf{Y} = \mathbf{A}_{y} + \mathbf{R}_{y} $,
where $\{\mathbf{A}_{x}, \mathbf{A}_{y}\}$ represent the evolving deterministic trajectory in the phase space, and $\{\mathbf{R}_{x}, \mathbf{R}_{y}\}$ capture the time-varying stochastic fluctuations. To address this, our \textbf{PULSE} framework (see Figure~\ref{fig:model_arch}) follows a physics-informed \textbf{Disentangle--Evolve--Simulate} paradigm:
\begin{enumerate}
    \item \textbf{Phase-Anchored Disentanglement (Section~\ref{sec:norm}):} Decouples input into Phase Anchor $\mathbf{A}_x$ and Residual $\mathbf{R}$. We normalize \textit{only} the residual to stabilize optimization while preserving the anchor as a structural prior.
    \item \textbf{Dual-Stream Evolution (Section~\ref{sec:backbone}):} The anchor-preserved representation $\tilde{\mathbf{X}}$ flows into a generic Backbone to predict a latent future representation $\mathbf{Y}_0$, while the anchor evolves via a \textbf{Generative Phase Router} to actively generate the future reference $\mathbf{A}_y$.
    \item \textbf{Generative DeNorm (Section~\ref{sec:denorm}):} Reconstruction is performed by denormalizing the residual and re-injecting $\mathbf{A}_y$, recovering the physical phase structure.
\end{enumerate}


\subsection{Phase-Anchored Disentanglement}
\label{sec:norm}

To capture the deterministic trajectory $\mathbf{A}_x$, we introduce the \textbf{Phase-Anchor} module. 
Its primary function is to construct a dynamic reference frame that absorbs the high-energy structural shifts of the non-stationary series. 
By explicitly disentangling this dominant evolution, we can perform normalization exclusively on the stochastic residual $\mathbf{R}_x$.

\paragraph{\textbf{Phase Anchor Construction.}}
Specifically, given a global period $W$ (derived empirically via autocorrelation function (ACF)~\citep{madsen2007time} or pre-defined domain knowledge) and the ending timestamp $t_{\mathrm{end}}$ of the look-back window, we map the global phase $p_W = t_{\mathrm{end}} \bmod W$ to a learnable local codebook $\mathbf{M}\in\mathbb{R}^{L\times C}$ of size $L\times C$. The mapping resolution and the circular retrieval index $\mathrm{idx}(h)$ for each historical step $h \in \{0, \ldots, T-1\}$ are computed as:
\begin{equation}
    \quad \mathrm{idx}(h) = (p_W - h)\bmod L.
\end{equation}
This indexing operation yields a phase-aligned discrete prototype sequence $\mathbf{M}[\mathrm{idx}]\in\mathbb{R}^{T\times C}$. To complement these discrete prototypes with continuous global temporal cues, we incorporate standard multiscale timestamp features $\mathbf{X}_{\mathrm{timestamp}}$ (e.g., hour-of-day, day-of-week) through a light-weight $\mathrm{TimeEncoder}$. Structurally, it comprises an MLP for feature projection, a 1D convolution for local refinement, and a linear adapter. The final historical Phase-Anchor is aggregated as:
\begin{equation}
    \mathbf{A}_x = \mathbf{M}[\mathrm{idx}] + \mathrm{TimeEncoder}(\mathbf{X}_{\mathrm{timestamp}}).
\end{equation}

\paragraph{\textbf{Residual-Only Normalization.}}
Standard instance normalization operates naively on the raw input $\mathbf{X}$. However, as posited in Hypothesis I, when the high-energy anchor dominates the signal ($\sigma_A \gg \sigma_R$), this global scaling leads to a fundamental issue we term \textit{Variance Masking}. Dividing the entire sequence by the total standard deviation ($\sigma_X \approx \sigma_A$) inevitably suppresses the subtle residual dynamics. We formalize this ``Optimization Interference'' phenomenon in the following proposition:

\begin{proposition}[Gradient Sensitivity under Dominant Anchor]
\label{prop:gradient}
Assume an additive decomposition $\mathbf{X} = \mathbf{A} + \mathbf{R}$,
where the anchor component $\mathbf{A}$ dominates the residual $\mathbf{R}$
in variance, i.e., $\sigma^2_A \gg \sigma^2_R$.
Then, standard normalization
$\tilde{\mathbf{X}}_{\mathrm{std}} = \mathrm{RevIN}(\mathbf{X})$
scales residual gradients on the order of $O(\sigma_A^{-1})$,
whereas our formulation
$\tilde{\mathbf{X}}_{\mathrm{ours}} = \mathrm{RevIN}(\mathbf{R}) + \mathbf{A}$
scales residual gradients on the order of $O(\sigma_R^{-1})$. (Proof in Appendix~\ref{app:gradient})
\end{proposition}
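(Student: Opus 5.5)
The argument is a direct Jacobian computation: track how a perturbation of the residual $\mathbf{R}$ propagates through each normalization map to the backbone input, and hence how the backbone's loss gradient is transported back onto $\mathbf{R}$. We treat one channel at a time, since RevIN acts channel-wise. Write $\mu_X,\sigma_X$ for the instance mean and standard deviation of $\mathbf{X}$ over the look-back window, and similarly $\mu_R,\sigma_R$. We keep the affine RevIN parameters $\gamma,\beta$ fixed at $O(1)$ and ignore the stabilizing $\epsilon$; both only contribute constant factors.

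\textbf{Step 1 (standard normalization).} Here $\tilde{\mathbf{X}}_{\mathrm{std}} = \gamma(\mathbf{X}-\mu_X)/\sigma_X + \beta$. Because $\mathbf{X}=\mathbf{A}+\mathbf{R}$, we have $\partial \tilde{\mathbf{X}}_{\mathrm{std}}/\partial \mathbf{R} = \partial \tilde{\mathbf{X}}_{\mathrm{std}}/\partial \mathbf{X}$. The standard layer-norm Jacobian gives
\[
\frac{\partial \tilde{\mathbf{X}}_{\mathrm{std}}}{\partial \mathbf{X}} = \frac{\gamma}{\sigma_X}\Big(\mathbf{I} - \tfrac{1}{T}\mathbf{1}\mathbf{1}^\top - \tfrac{1}{T}\mathbf{z}\mathbf{z}^\top\Big),
\qquad \mathbf{z} = (\mathbf{X}-\mu_X)/\sigma_X .
\]
The bracketed matrix is an orthogonal projection, so its operator norm is at most $1$, and it equals $1$ on the subspace orthogonal to $\mathbf{1}$ and $\mathbf{z}$.

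\textbf{Step 2 (the dominance assumption).} We need $\sigma_X \approx \sigma_A$. From $\sigma_X^2 = \sigma_A^2 + \sigma_R^2 + 2\,\mathrm{Cov}(\mathbf{A},\mathbf{R})$ and Cauchy--Schwarz, $|\mathrm{Cov}(\mathbf{A},\mathbf{R})| \le \sigma_A\sigma_R$. This yields
\[
\sigma_A - \sigma_R \;\le\; \sigma_X \;\le\; \sigma_A + \sigma_R ,
\]
so $\sigma_X = \sigma_A(1+O(\sigma_R/\sigma_A))$. No orthogonality between $\mathbf{A}$ and $\mathbf{R}$ is needed, which fits the paper's ``isolate orthogonal components'' framing without depending on it.

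\textbf{Step 3 (conclusion for the standard branch).} By the chain rule, $\nabla_{\mathbf{R}}\mathcal{L} = (\partial \tilde{\mathbf{X}}/\partial \mathbf{R})^\top \nabla_{\tilde{\mathbf{X}}}\mathcal{L}$. Combining Steps 1 and 2, residual gradients in the standard branch are scaled by $\Theta(\sigma_A^{-1})$.

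\textbf{Step 4 (our formulation).} Here $\tilde{\mathbf{X}}_{\mathrm{ours}} = \gamma(\mathbf{R}-\mu_R)/\sigma_R + \beta + \mathbf{A}$, where $\mathbf{A}$ is produced by the Phase Anchor and does not depend on $\mathbf{R}$. Hence
\[
\frac{\partial \tilde{\mathbf{X}}_{\mathrm{ours}}}{\partial \mathbf{R}} = \frac{\gamma}{\sigma_R}\Big(\mathbf{I} - \tfrac{1}{T}\mathbf{1}\mathbf{1}^\top - \tfrac{1}{T}\mathbf{z}_R\mathbf{z}_R^\top\Big),
\qquad \mathbf{z}_R = (\mathbf{R}-\mu_R)/\sigma_R ,
\]
which scales residual gradients by $\Theta(\sigma_R^{-1})$. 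Taking the ratio of Steps 3 and 4, our formulation amplifies residual gradients by a factor of about $\sigma_A/\sigma_R \gg 1$.

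A complementary reading is that the residual's contribution to the backbone input has magnitude about $\sigma_R/\sigma_A \ll 1$ under RevIN, versus about $1$ in our formulation. This is the forward-pass counterpart of the same result, i.e.\ the paper's Variance Masking.

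\textbf{Main obstacle.} The difficulty is making ``scales gradients on the order of'' precise, because the projection removes the directions $\mathbf{1}$ and $\mathbf{z}$. The plan is to state the bound for the operator norm on the complement of those two directions, where it is exact, and as an upper bound overall. Since $T \gg 2$, generic residual perturbations lie almost entirely in that complement. This is sufficient for the order-of-magnitude claim.
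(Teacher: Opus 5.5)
Your proof is correct and follows the same route as the paper: compute the instance-normalization Jacobian in each branch, note that it carries a factor $1/\sigma_X$ in the standard branch versus $1/\sigma_R$ in yours, and use the variance decomposition to get $\sigma_X=\Theta(\sigma_A)$. Your version is slightly sharper than the appendix in two places. The Cauchy--Schwarz bound $\sigma_A-\sigma_R\le\sigma_X\le\sigma_A+\sigma_R$ removes the paper's negligible-covariance assumption. Identifying $\mathbf{I}-\tfrac{1}{T}\mathbf{1}\mathbf{1}^\top-\tfrac{1}{T}\mathbf{z}\mathbf{z}^\top$ as an orthogonal projection gives the exact norm $\gamma/\sigma_X$, whereas the paper calls the rank-one term lower order even though its norm is also $1/\sigma$.
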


Motivated by this theoretical insight, we propose to strictly decouple the normalization process to bypass the $O(\sigma_A^{-1})$ gradient attenuation. Specifically, we isolate the raw residual $\mathbf{R}_x = \mathbf{X} - \mathbf{A}_x$ and apply standard RevIN~\citep{kim2021reversible} \textit{exclusively} to this stochastic component, while leaving the deterministic anchor intact:
\begin{equation}
\label{eq:norm_final}
\tilde{\mathbf{X}} = 
\underbrace{\mathrm{RevIN}(\mathbf{R}_x)}_{\text{Standardized Residual}} 
+ 
\underbrace{\mathbf{A}_x}_{\text{Deterministic Anchor}}.
\end{equation}

By normalizing the residual via its intrinsic volatility $\sigma_R$, this design directly restores the gradient sensitivity to $O(\sigma_R^{-1})$ as guaranteed by Proposition~\ref{prop:gradient}. Consequently, it fundamentally decouples the optimization scales, allowing the network to capture low-amplitude stochastic fluctuations without sacrificing the global deterministic structure.

\subsection{Dual-Stream Prediction}
\label{sec:backbone}

\paragraph{\textbf{Residual Stream (Backbone).}}
The anchor-preserved input $\tilde{\mathbf{X}}$ is fed into a backbone $f(\cdot)$ to predict a latent future representation $\mathbf{Y}_0 = f(\tilde{\mathbf{X}})$. While we employ a lightweight MLP as the default backbone to demonstrate that our disentanglement mechanism inherently drives state-of-the-art results, our framework is fundamentally model-agnostic. Extensive experiments (see Sec.~\ref{sec:exp}) confirm its \textit{plug-and-play} capability: integrating our framework into various mainstream forecasters (e.g., Transformers, CNNs) consistently boosts their baseline performance, validating its broad applicability across diverse architectures.

\paragraph{\textbf{Anchor Stream (Phase Router).}}
The Phase Router synthesizes the future phase anchor $\mathbf{A}_y$ from the historical anchor $\mathbf{A}_x$ and the backbone's latent sequence $\mathbf{Y}_0$.
Both inputs are split into non-overlapping patches of length $P$, resulting in $N_x$ patches from $\mathbf{A}_x$ and $N_y$ patches from $\mathbf{Y}_0$.
These patches are flattened and then projected via separate linear layers, which encode the \emph{count and order} of the patches rather than the fine-grained content within each patch.
This yields compact phase tokens $\mathbf{T}_x \in \mathbb{R}^{P \times d}$ and $\mathbf{T}_y \in \mathbb{R}^{P \times d}$.

A two-stage cross-attention is applied over these tokens, allowing the predictive tokens $\mathbf{T}_y$ to query and adaptively re-weight historical phase information from $\mathbf{T}_x$:
\begin{equation}
\mathbf{Z} = \mathrm{Attn}(\mathbf{T}_x, \mathbf{T}_y, \mathbf{T}_y), \quad
\mathbf{E} = \mathrm{Attn}(\mathbf{T}_y, \mathbf{Z}, \mathbf{Z}).
\end{equation}
The attended tokens $\mathbf{E}$ are projected back to the original feature dimension via an MLP, and the sequence is restored to its original order by reversing the patching operation.
The final anchor is obtained by incorporating future timestamp encodings:
\begin{equation}
\mathbf{A}_y = \mathrm{Proj}(\mathbf{E}) + \mathrm{TimeEncoder}(\mathbf{Y}_{\text{timestamp}}).
\end{equation}
This process enables the anchor to evolve dynamically, capturing shifts and modulations in future periodic patterns.

\subsection{Generative DeNorm \& Training Strategy}
\label{sec:denorm}

\paragraph{\textbf{Coordinate-Consistent Reconstruction.}}
We reconstruct the forecast $\hat{\mathbf{Y}}$ by denormalizing the extracted normalized residual and re-introducing the generated anchor:
\begin{equation}
\label{eq:final}
\hat{\mathbf{Y}} = \mathrm{RevIN}^{-1}(\mathbf{Y}_0 - \mathbf{A}_y) + \mathbf{A}_y.
\end{equation}
This ensures the affine transformation acts only on fluctuations, avoiding distortion of the phase structure.

\subsection{Training with Statistic-Aware Mixup}

As posited in Hypothesis III, the residual component $\mathbf{R}$ resides on a continuous statistical manifold of time-varying volatility. Standard empirical risk minimization on limited historical data risks overfitting to specific noise patterns, leading to failure when the residual distribution shifts OOD at test time. To bridge this gap, we formulate residual learning as a {distributional generalization} task. 

\paragraph{\textbf{The Scale Collapse Trap.}}
While Mixup \citep{ansari2024chronos} is a natural choice for manifold traversal, applying it naively to raw residual signals introduces a critical failure mode we term \textit{Scale Collapse}. Due to the wave nature of residuals, mixing anti-phase samples triggers destructive interference, causing the latent representation scales to vanish and triggering gradient explosion. We formalize this instability in Theorem~\ref{thm:stability}.

\begin{theorem}[Stability Guarantee]
\label{thm:stability}
For any two residual signals with correlation $\rho < 1$, the naive mixup scaling $\sigma_{\mathrm{naive}} = \sigma(\lambda\mathbf{R}_i + (1-\lambda)\mathbf{R}_j)$ systematically underestimates the feature scale and collapses to zero as $\rho \to -1$. In contrast, the proposed linear scaling $\sigma_{\mathrm{ours}} = \lambda\sigma_i + (1-\lambda)\sigma_j$ admits a strictly positive lower bound $\min(\sigma_i, \sigma_j) > 0$, ensuring well-conditioned gradients.
\end{theorem}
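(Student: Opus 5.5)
The plan is to reduce everything to the variance identity for a convex combination of two random signals. Write $\sigma_i = \sigma(\mathbf{R}_i)$, $\sigma_j = \sigma(\mathbf{R}_j)$, and let $\rho$ be their (Pearson) correlation over the time axis. Assume both are strictly positive, since a residual with zero scale would be degenerate for RevIN anyway. Fix $\lambda \in (0,1)$. Bilinearity of covariance gives
\begin{equation*}
\sigma_{\mathrm{naive}}^2 = \lambda^2\sigma_i^2 + (1-\lambda)^2\sigma_j^2 + 2\lambda(1-\lambda)\rho\,\sigma_i\sigma_j .
\end{equation*}
The linear scaling satisfies
\begin{equation*}
\sigma_{\mathrm{ours}}^2 = \lambda^2\sigma_i^2 + (1-\lambda)^2\sigma_j^2 + 2\lambda(1-\lambda)\sigma_i\sigma_j .
\end{equation*}
Subtracting the two expressions yields
\begin{equation*}
\sigma_{\mathrm{ours}}^2 - \sigma_{\mathrm{naive}}^2 = 2\lambda(1-\lambda)(1-\rho)\sigma_i\sigma_j .
\end{equation*}
This is strictly positive whenever $\rho<1$ and $\lambda\in(0,1)$. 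Hence $\sigma_{\mathrm{naive}} < \sigma_{\mathrm{ours}}$: naive mixup systematically underestimates the scale. The underestimate is measured against the interpolated scale, which is the natural reference for a sample on the statistical manifold between the two residuals.

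Next comes the collapse as $\rho\to-1$. Setting $\rho=-1$ makes the expression a perfect square, $\sigma_{\mathrm{naive}}^2 = (\lambda\sigma_i - (1-\lambda)\sigma_j)^2$. So in the limit $\sigma_{\mathrm{naive}} = |\lambda\sigma_i - (1-\lambda)\sigma_j|$. This vanishes at the mixing weight $\lambda^\star = \sigma_j/(\sigma_i+\sigma_j)\in(0,1)$, which is admissible under any Beta mixing distribution with full support. By continuity of $\sigma_{\mathrm{naive}}$ in $(\rho,\lambda)$, for $\rho$ near $-1$ and $\lambda$ near $\lambda^\star$ the naive scale is arbitrarily small. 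Then $\mathrm{RevIN}$ divides by a near-zero quantity, and the Jacobian $\partial \tilde{\mathbf{R}}/\partial \mathbf{R} \propto \sigma_{\mathrm{naive}}^{-1}$ blows up. This is the gradient-explosion half of the claim, analogous to the $O(\sigma^{-1})$ gradient scaling in Proposition~\ref{prop:gradient}.

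For the lower bound, I will observe that $\sigma_{\mathrm{ours}}$ is a convex combination of $\sigma_i$ and $\sigma_j$. It therefore lies in $[\min(\sigma_i,\sigma_j),\max(\sigma_i,\sigma_j)]$ regardless of $\rho$, so $\sigma_{\mathrm{ours}} \ge \min(\sigma_i,\sigma_j)>0$. The normalization Jacobian under our scaling is then bounded by $\min(\sigma_i,\sigma_j)^{-1}$ uniformly over $\lambda$ and over all pairs. This gives the well-conditioning statement.

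I expect the main obstacle to be interpretive rather than computational. The statement says $\sigma_{\mathrm{naive}}$ "collapses to zero as $\rho\to-1$," but this is only true at (or near) the specific weight $\lambda^\star$, or when $\sigma_i=\sigma_j$ and $\lambda=1/2$. For a fixed arbitrary $\lambda$, the limit is the nonzero value $|\lambda\sigma_i-(1-\lambda)\sigma_j|$. I will state the collapse precisely as $\inf_{\lambda\in(0,1)}\sigma_{\mathrm{naive}} \to 0$ as $\rho\to-1$, with equality to zero attained at $\lambda^\star$ when $\rho=-1$. This is exactly the regime that the random sampling of $\lambda$ encounters with positive probability.
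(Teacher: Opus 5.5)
Your proposal is correct and is essentially the paper's proof: you use the same variance expansion (your identity $\sigma_{\mathrm{ours}}^2-\sigma_{\mathrm{naive}}^2=2\lambda(1-\lambda)(1-\rho)\sigma_i\sigma_j$ is the paper's ratio formula for $(\sigma_{\mathrm{naive}}/\sigma_{\mathrm{ours}})^2$ multiplied through by $\sigma_{\mathrm{ours}}^2$), the same vanishing point $\lambda^\star=\sigma_j/(\sigma_i+\sigma_j)$ at $\rho=-1$, and the same convex-combination bound $\sigma_{\mathrm{ours}}\ge\min(\sigma_i,\sigma_j)$. There are two small differences: your $\inf_{\lambda}$ formulation of the collapse states precisely what the paper only exhibits at the single point $(\rho,\lambda)=(-1,\lambda^\star)$, and for the instability step the paper argues on the decoding side, where SAM actually uses the interpolated scale ($\hat{\mathbf{y}}=\mathbf{z}\,\sigma_{\mathrm{naive}}$ forces $\mathbf{z}$ to diverge for any nonzero mixed target), rather than through the input-normalization Jacobian.
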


\paragraph{\textbf{Statistic-Aware Mixup (SAM).}}
Motivated by Theorem~\ref{thm:stability}, we propose SAM to enable robust volatility simulation without numerical instability. 
During training, given a random permutation $\pi$ and $\lambda \sim \mathrm{Beta}(0.15,0.15)$, we mix $\tilde{\mathbf{X}}$ while explicitly interpolating the residual decoding statistics, so that the input-level mixup remains anchor-preserved whereas the decoding state is defined in the residual coordinate:
\begin{equation}
\label{eq:mix_state}
\begin{aligned}
\tilde{\mathbf{X}}^{\mathrm{mix}}
&=
\lambda \tilde{\mathbf{X}}
+
(1-\lambda)\tilde{\mathbf{X}}^{\pi}, \\
\mu_R^{\mathrm{mix}}
&=
\lambda \mu_R
+
(1-\lambda)\mu_R^{\pi}, \\
\sigma_R^{\mathrm{mix}}
&=
\lambda \sigma_R
+
(1-\lambda)\sigma_R^{\pi}.
\end{aligned}
\end{equation}
Here, $\mu_R$ and $\sigma_R$ denote the mean and standard deviation of the historical residual $\mathbf{R}_x$. 
The mixed input is fed into the backbone to obtain $\mathbf{Y}_0^{\mathrm{mix}}=f(\tilde{\mathbf{X}}^{\mathrm{mix}})$, and the Phase Router generates the corresponding future anchor $\mathbf{A}_y^{\mathrm{mix}}$. 
The normalized residual part is then extracted and decoded as
\begin{equation}
\hat{\mathbf{Y}}^{\mathrm{mix}}
=
\mathrm{RevIN}^{-1}_{\mu_R^{\mathrm{mix}},\,\sigma_R^{\mathrm{mix}}}
\left(
\mathbf{Y}_0^{\mathrm{mix}}-\mathbf{A}_y^{\mathrm{mix}}
\right)
+
\mathbf{A}_y^{\mathrm{mix}}.
\end{equation}
\begin{remark}[Residual Statistical Interpolation]
SAM does not estimate the residual scale from the mixed waveform. 
Instead, it explicitly interpolates $(\mu_R,\sigma_R)$, preventing anti-phase residual cancellation from causing artificial scale collapse.
\end{remark}

\paragraph{\textbf{Optimization Objective.}}
To further align with our phase-aware philosophy, we adopt the Frequency-Domain MAE \citep{wangfredf} as the primary optimization objective.
The mixed target is constructed consistently as $\mathbf{Y}^{\mathrm{mix}}=\lambda\mathbf{Y}+(1-\lambda)\mathbf{Y}^{\pi}$, matching the same sample pairing used in SAM:
\begin{equation}
    \mathbf{L} = \|\mathbf{F}(\hat{\mathbf{Y}}^{\mathrm{mix}}) - \mathbf{F}(\mathbf{Y}^{\mathrm{mix}})\|_1,
    \label{eq:loss_fft}
\end{equation}
where $\mathbf{F}(\cdot)$ denotes the Fast Fourier Transform (FFT), and $\|\cdot\|_1$ refers to the $\ell_1$-norm in the frequency domain. 
We empirically observe that optimizing directly on the spectrum prevents the ``washing out'' of high-frequency components and facilitates faster convergence.
\paragraph{Computational Complexity.} 
PULSE achieves asymptotically linear temporal complexity by operating at a fixed phase resolution $P$, making the Phase Router independent of input length. 
The total cost combines lightweight temporal projections and fixed-size routing attention, yielding $\mathbf{O}(T)+\mathbf{O}(P^2)$ complexity. 
Since $P$ is small and fixed in practice, attention behaves as a sequence-length-independent constant, so the overhead is dominated by linear projections. 
We further use a compact hidden dimension ($d_{\mathrm{model}} \le 32$) and a single-layer router. 
This design follows the Low-Rank Hypothesis~\cite{huangtimebase}: deterministic phase trajectories evolve on compact low-dimensional manifolds, making deep over-parameterized networks unnecessary. 
Thus, PULSE remains lightweight while modeling non-linear phase evolution. 
A detailed derivation is provided in Appendix~\ref{sec:Computational}.
\section{Experiment}
\begin{table*}[t!]
\centering
\caption{A comparison of multivariate time series forecasting performance across 12 real-world datasets is presented below. The best results are highlighted in \textbf{bold}, the second best are \underline{underlined}, and the “1\textsuperscript{st} Count” row records the number of times each model achieves top-1 ranking. Detailed results are provided in Table~\ref{tab:full_results}}
\label{tab:results}
\renewcommand{\arraystretch}{1.35} 
\setlength{\tabcolsep}{3.5pt}
\resizebox{\textwidth}{!}{%
\begin{tabular}{l | cc | cc | cc | cc | cc | cc | cc | cc | cc | cc}
\noalign{\hrule height 1.5pt}
Model
 & \multicolumn{2}{c|}{\begin{tabular}[c]{@{}c@{}}PULSE\\(ours)\end{tabular}} 
 & \multicolumn{2}{c|}{\begin{tabular}[c]{@{}c@{}}CFPT\\\textcolor{blue}{(\citeyear{koucfpt})}\end{tabular}} 
 & \multicolumn{2}{c|}{\begin{tabular}[c]{@{}c@{}}TimeXer\\\textcolor{blue}{(\citeyear{wang2024timexer})}\end{tabular}} 
 & \multicolumn{2}{c|}{\begin{tabular}[c]{@{}c@{}}CycleNet\\\textcolor{blue}{(\citeyear{lin2024cyclenet})}\end{tabular}} 
 & \multicolumn{2}{c|}{\begin{tabular}[c]{@{}c@{}}iTransformer\\\textcolor{blue}{(\citeyear{liuitransformer})}\end{tabular}} 
 & \multicolumn{2}{c|}{\begin{tabular}[c]{@{}c@{}}MSGNet\\\textcolor{blue}{(\citeyear{cai2024msgnet})}\end{tabular}} 
 & \multicolumn{2}{c|}{\begin{tabular}[c]{@{}c@{}}TimesNet\\\textcolor{blue}{(\citeyear{wutimesnet})}\end{tabular}} 
 & \multicolumn{2}{c|}{\begin{tabular}[c]{@{}c@{}}PatchTST\\\textcolor{blue}{(\citeyear{nie2023a})}\end{tabular}} 
 & \multicolumn{2}{c|}{\begin{tabular}[c]{@{}c@{}}Crossformer\\\textcolor{blue}{(\citeyear{zhang2023crossformer})}\end{tabular}} 
 & \multicolumn{2}{c}{\begin{tabular}[c]{@{}c@{}}DLinear\\\textcolor{blue}{(\citeyear{zeng2023transformers})}\end{tabular}} \\
\cmidrule(l){1-21}
Metric & MSE & MAE & MSE & MAE & MSE & MAE & MSE & MAE & MSE & MAE & MSE & MAE & MSE & MAE & MSE & MAE & MSE & MAE & MSE & MAE \\ 
\cmidrule(l){1-21}
ETTh1 & \textbf{0.412} & \textbf{0.422} & \underline{0.433} & \underline{0.429} & 0.437 & 0.437 & 0.457 & 0.441 & 0.454 & 0.447 & 0.453 & 0.453 & 0.458 & 0.450 & 0.469 & 0.455 & 0.529 & 0.522 & 0.456 & 0.452 \\
ETTh2 & \textbf{0.361} & \textbf{0.391} & \underline{0.364} & \underline{0.393} & 0.368 & 0.396 & 0.388 & 0.409 & 0.383 & 0.407 & 0.413 & 0.427 & 0.414 & 0.427 & 0.387 & 0.407 & 0.942 & 0.684 & 0.559 & 0.515 \\
ETTm1 & \textbf{0.363} & \textbf{0.388} & \underline{0.374} & \underline{0.393} & 0.382 & 0.397 & 0.379 & 0.396 & 0.407 & 0.410 & 0.400 & 0.412 & 0.400 & 0.406 & 0.387 & 0.400 & 0.513 & 0.495 & 0.403 & 0.407 \\
ETTm2 & \textbf{0.262} & \textbf{0.314} & 0.269 & \underline{0.315} & 0.274 & 0.322 & \underline{0.266} & \textbf{0.314} & 0.288 & 0.332 & 0.289 & 0.330 & 0.291 & 0.333 & 0.281 & 0.326 & 0.757 & 0.611 & 0.350 & 0.401 \\
Electricity & \textbf{0.160} & \textbf{0.255} & \underline{0.164} & \underline{0.259} & 0.171 & 0.270 & 0.168 & \underline{0.259} & 0.178 & 0.270 & 0.194 & 0.301 & 0.193 & 0.295 & 0.205 & 0.290 & 0.244 & 0.334 & 0.212 & 0.300 \\
Solar & \textbf{0.194} & \textbf{0.243} & 0.233 & 0.267 & 0.237 & 0.302 & \underline{0.210} & \underline{0.261} & 0.233 & 0.262 & 0.263 & 0.292 & 0.301 & 0.319 & 0.270 & 0.307 & 0.641 & 0.639 & 0.330 & 0.401 \\
Traffic & \underline{0.460} & \underline{0.286} & 0.470 & 0.289 & 0.466 & 0.287 & 0.472 & 0.301 & \textbf{0.428} & \textbf{0.282} & 0.660 & 0.382 & 0.620 & 0.336 & 0.481 & 0.304 & 0.550 & 0.304 & 0.625 & 0.383 \\
Weather & \textbf{0.239} & \underline{0.271} & \underline{0.240} & \textbf{0.267} & 0.241 & \underline{0.271} & 0.243 & \underline{0.271} & 0.258 & 0.278 & 0.249 & 0.278 & 0.259 & 0.287 & 0.259 & 0.281 & 0.259 & 0.315 & 0.265 & 0.317 \\
PEMS03 & \textbf{0.103} & \textbf{0.209} & 0.143 & 0.250 & \underline{0.112} & \underline{0.214} & 0.118 & 0.226 & 0.113 & 0.221 & 0.150 & 0.251 & 0.147 & 0.248 & 0.180 & 0.291 & 0.169 & 0.282 & 0.278 & 0.375 \\
PEMS04 & \textbf{0.104} & \underline{0.210} & 0.158 & 0.264 & \underline{0.105} & \textbf{0.209} & 0.119 & 0.232 & 0.111 & 0.221 & 0.122 & 0.239 & 0.129 & 0.241 & 0.195 & 0.307 & 0.209 & 0.314 & 0.295 & 0.388 \\
PEMS07 & \underline{0.094} & \underline{0.190} & 0.125 & 0.226 & \textbf{0.085} & \textbf{0.182} & 0.113 & 0.214 & 0.101 & 0.204 & 0.122 & 0.227 & 0.125 & 0.226 & 0.211 & 0.303 & 0.235 & 0.315 & 0.329 & 0.396 \\
PEMS08 & \textbf{0.138} & \textbf{0.220} & 0.187 & 0.266 & 0.175 & 0.250 & \underline{0.150} & 0.246 & \underline{0.150} & \underline{0.226} & 0.205 & 0.285 & 0.193 & 0.271 & 0.280 & 0.321 & 0.268 & 0.307 & 0.379 & 0.416 \\ \cmidrule(l){1-21}
1\textsuperscript{st} Count & \textbf{10} & \textbf{8} & 0 & 1 & 1 & 2 & 0 & 1 & 1 & 1 & 0 & 0 & 0 & 0 & 0 & 0 & 0 & 0 & 0 & 0 \\ 
\noalign{\hrule height 1.5pt}
\end{tabular}%
}
\end{table*}
\label{sec:exp}
We conduct extensive experiments to validate the effectiveness of the proposed PULSE framework, addressing three core questions:

\textbf{Q1.} Does PULSE achieve state-of-the-art forecasting accuracy with a minimal backbone?

\textbf{Q2.} As a plug-and-play framework, does it generalize across diverse backbone architectures?

\textbf{Q3.} How essential is each component, and do results align with our theoretical hypotheses?

The remainder of this section is organized as follows: 
Section~\ref{sec:setup} describes the experimental setup; 
Section~\ref{sec:main_results} reports the main forecasting results and evaluates the plug-and-play generality of PULSE; 
and Section~\ref{sec:ablation} presents ablation studies and further analysis.
\subsection{Setup}
\label{sec:setup}
\paragraph{\textbf{Datasets.}} 
To ensure a comprehensive evaluation, we employ 12 widely recognized real-world datasets, stratified into long-term and short-term forecasting tasks. The long-term category comprises the ETT series~\citep{zhou2021informer}, Solar~\cite{lai2018modeling}, Electricity, Traffic, and Weather datasets~\citep{wu2021autoformer}, while the short-term category encompasses the PEMS series~\citep{liu2022scinet}. In alignment with standard evaluation protocols, {the historical look-back window $T$ is fixed to 96 for all tasks}. The prediction horizon $H$ is configured as \{12, 24, 48, 96\} for the PEMS datasets and \{96, 192, 336, 720\} for the long-term benchmarks. Detailed specifications are provided in Appendix~\ref{sec:appendix_datasets}.

\paragraph{\textbf{Baselines.}} To rigorously evaluate the performance of PULSE, we compare it against a wide range of state-of-the-art forecasting models, including : CFPT~\cite{koucfpt}, TimeXer~\cite{wang2024timexer}, CycleNet~\cite{lin2024cyclenet}, iTransformer~\cite{liuitransformer}, MSGNet~\cite{cai2024msgnet}, TimesNet~\cite{wutimesnet}, PatchTST~\cite{nie2023a}, Crossformer~\cite{zhang2023crossformer}, and DLinear~\cite{zeng2023transformers}. 
\subsection{Main Results} \label{sec:main_results}

\paragraph{\textbf{Forecasting Performance.}}
Table~\ref{tab:results} compares PULSE with state-of-the-art baselines across twelve real-world datasets. 
PULSE ranks first on 10 out of 12 datasets in terms of MSE and on 8 out of 12 datasets in terms of MAE, yielding 18 first-place entries among the 24 reported results. 
It also substantially outperforms CFPT, a recent long-term forecasting baseline based on cross-frequency interactions, indicating that phase-anchored disentanglement captures non-stationary evolution more effectively than conventional frequency-based mixing.

The cases where PULSE does not rank first mainly occur on highly dense multivariate benchmarks, such as Traffic with 862 variables and PEMS07 with 883 variables. 
On these datasets, attention-based models such as iTransformer and TimeXer remain competitive because their inverted attention mechanisms explicitly model cross-channel correlations. 
This inductive bias is useful when forecasting errors are influenced by both temporal non-stationarity and sensor-level dependencies. 
By contrast, PULSE is designed to model temporal phase evolution with a minimal backbone, rather than to explicitly encode inter-variable graphs. 
This design choice explains the slightly lower rank of PULSE on Traffic and PEMS07, where dense cross-variate interactions play a more prominent role. 
Nevertheless, PULSE still achieves the best performance on PEMS08 with 170 variables and remains among the strongest methods on Traffic. 
These results suggest that phase-evolution modeling is highly effective when temporal distribution shifts dominate, while incorporating graph-aware or channel-aware mechanisms is a promising direction for densely correlated multivariate benchmarks.
\begin{table*}[t!]
\centering
\caption{Evaluation of plug-and-play capability. ``Promotion'' denotes the relative improvement, highlighting the ability of PULSE to enhance different model architectures. Detailed results are provided in Table~\ref{tab:full_pulg-and-play_results}.}
\label{tab:plug and play}
\renewcommand{\arraystretch}{1.35} 
\setlength{\tabcolsep}{3.5pt} 
\resizebox{\textwidth}{!}{%
\begin{tabular}{l | cc | cc | cc | cc | cc | cc | cc | cc}
\noalign{\hrule height 1.5pt}
\multirow{2}{*}{Model} 
 & \multicolumn{2}{c|}{ETTh1} 
 & \multicolumn{2}{c|}{ETTh2} 
 & \multicolumn{2}{c|}{ETTm1} 
 & \multicolumn{2}{c|}{ETTm2} 
 & \multicolumn{2}{c|}{ECL} 
 & \multicolumn{2}{c|}{Traffic} 
 & \multicolumn{2}{c|}{Weather} 
 & \multicolumn{2}{c}{Solar} \\ 
\cmidrule(l){2-17} 
 & MSE & MAE & MSE & MAE & MSE & MAE & MSE & MAE & MSE & MAE & MSE & MAE & MSE & MAE & MSE & MAE \\ 
\cmidrule(l){1-17} 

iTransformer 
& 0.454 & 0.447 & 0.383 & 0.407 & 0.407 & 0.410 & 0.288 & 0.332 & 0.178 & 0.270 & 0.428 & 0.282 & 0.258 & 0.282 & 0.233 & 0.262 \\
\rowcolor{gray!20}
\textbf{+ PULSE} 
& \textbf{0.409} & \textbf{0.422} & \textbf{0.372} & \textbf{0.399} & \textbf{0.362} & \textbf{0.383} & \textbf{0.269} & \textbf{0.316} & \textbf{0.159} & \textbf{0.258} & \textbf{0.424} & \textbf{0.272} & \textbf{0.237} & \textbf{0.275} & \textbf{0.193} & \textbf{0.257} \\
Promotion
& \textbf{9.9\%} & \textbf{5.6\%} & \textbf{2.9\%} & \textbf{2.0\%} & \textbf{11.1\%} & \textbf{6.6\%} & \textbf{6.5\%} & \textbf{4.8\%} & \textbf{10.7\%} & \textbf{4.4\%} & \textbf{0.9\%} & \textbf{3.5\%} & \textbf{8.1\%} & \textbf{2.5\%} & \textbf{17.2\%} & \textbf{1.9\%} \\
\cmidrule{1-17} 
PatchTST 
& 0.469 & 0.454 & 0.387 & 0.407 & 0.387 & 0.400 & 0.281 & 0.326 & 0.205 & 0.290 & 0.481 & 0.304 & 0.259 & 0.281 & 0.270 & 0.307 \\
\rowcolor{gray!20}
\textbf{+ PULSE} 
& \textbf{0.440} & \textbf{0.448} & \textbf{0.375} & \textbf{0.403} & \textbf{0.367} & \textbf{0.396} & \textbf{0.269} & \textbf{0.319} & \textbf{0.176} & \textbf{0.279} & \textbf{\textcolor{red}{0.565}} & \textbf{0.301} & \textbf{0.242} & \textbf{0.274} & \textbf{0.216} & \textbf{0.280} \\
Promotion
& \textbf{6.2\%} & \textbf{1.3\%} & \textbf{3.1\%} & \textbf{1.0\%} & \textbf{5.2\%} & \textbf{1.0\%} & \textbf{4.3\%} & \textbf{2.1\%} & \textbf{14.4\%} & \textbf{3.8\%} & \textbf{\textcolor{red}{-17.5\%}} & \textbf{1.0\%} & \textbf{6.6\%} & \textbf{2.5\%} & \textbf{20.0\%} & \textbf{8.8\%} \\
\cmidrule{1-17} 
TimesNet 
& 0.458 & 0.507 & 0.414 & 0.427 & 0.400 & 0.406 & 0.291 & 0.333 & 0.192 & 0.295 & 0.620 & 0.336 & 0.259 & 0.287 & 0.301 & 0.319 \\
\rowcolor{gray!20}
\textbf{+ PULSE} 
& \textbf{0.422} & \textbf{0.429} & \textbf{0.383} & \textbf{0.409} & \textbf{0.372} & \textbf{0.397} & \textbf{0.269} & \textbf{0.318} & \textbf{0.182} & \textbf{0.287} & \textbf{0.553} & \textbf{0.319} & \textbf{0.234} & \textbf{0.285} & \textbf{0.203} & \textbf{0.266} \\
Promotion
& \textbf{7.9\%} & \textbf{15.4\%} & \textbf{7.5\%} & \textbf{4.2\%} & \textbf{7.0\%} & \textbf{2.2\%} & \textbf{7.6\%} & \textbf{4.5\%} & \textbf{5.2\%} & \textbf{2.7\%} & \textbf{10.8\%} & \textbf{5.1\%} & \textbf{9.7\%} & \textbf{0.6\%} & \textbf{32.6\%} & \textbf{16.6\%} \\
\cmidrule{1-17} 
DLinear 
& 0.456 & 0.452 & 0.559 & 0.515 & 0.403 & 0.407 & 0.350 & 0.401 & 0.212 & 0.300 & 0.625 & 0.383 & 0.265 & 0.317 & 0.330 & 0.401 \\
\rowcolor{gray!20}
\textbf{+ PULSE} 
& \textbf{0.428} & \textbf{0.424} & \textbf{0.373} & \textbf{0.396} & \textbf{0.376} & \textbf{0.394} & \textbf{0.274} & \textbf{0.322} & \textbf{0.168} & \textbf{0.267} & \textbf{0.515} & \textbf{0.313} & \textbf{0.250} & \textbf{0.285} & \textbf{0.216} & \textbf{0.272} \\
Promotion
& \textbf{6.1\%} & \textbf{6.2\%} & \textbf{33.3\%} & \textbf{23.1\%} & \textbf{6.7\%} & \textbf{3.2\%} & \textbf{21.7\%} & \textbf{19.7\%} & \textbf{20.8\%} & \textbf{11.0\%} & \textbf{17.6\%} & \textbf{18.3\%} & \textbf{5.7\%} & \textbf{10.1\%} & \textbf{34.5\%} & \textbf{32.2\%} \\
\noalign{\hrule height 1.5pt}
\end{tabular}%
}
\end{table*}
\begin{table*}[t!]
\centering
\caption{Ablation studies of the proposed PULSE framework.}
\label{tab:ablation}
\renewcommand{\arraystretch}{1.35} 
\setlength{\tabcolsep}{2 pt} 
\resizebox{\textwidth}{!}{%
\begin{tabular}{l | cc | cc | cc | cc | cc | cc | cc | cc | cc | cc}
\noalign{\hrule height 1.5pt}
\multirow{2}{*}{\textbf{Model}} 
 & \multicolumn{2}{c|}{\textbf{ETTh1}} 
 & \multicolumn{2}{c|}{\textbf{ETTh2}} 
 & \multicolumn{2}{c|}{\textbf{ETTm1}} 
 & \multicolumn{2}{c|}{\textbf{ETTm2}} 
 & \multicolumn{2}{c|}{\textbf{ECL}} 
 & \multicolumn{2}{c|}{\textbf{Traffic}} 
 & \multicolumn{2}{c|}{\textbf{Weather}} 
 & \multicolumn{2}{c|}{\textbf{Solar}} 
 & \multicolumn{2}{c|}{\textbf{Average}} 
 & \multicolumn{2}{c}{\textbf{Promotion}} \\
\cmidrule(l){2-21}
 & MSE & MAE & MSE & MAE & MSE & MAE & MSE & MAE & MSE & MAE & MSE & MAE & MSE & MAE & MSE & MAE & MSE & MAE & MSE & MAE \\ 
\cmidrule(l){1-21}

\rowcolor{gray!20}
\textbf{PULSE (Ours)} 
& \textbf{0.412} & \textbf{0.422} & \textbf{0.361} & \textbf{0.391} & \textbf{0.363} & \textbf{0.388} & \textbf{0.262} & \textbf{0.314} & \textbf{0.160} & \textbf{0.255} & \textbf{0.460} & 0.286 & \textbf{0.239} & \textbf{0.271} & \textbf{0.194} & \textbf{0.243} & \textbf{0.306} & \textbf{0.321} & \textbf{-} & \textbf{-} \\

w/o Phase Anchor 
& 0.430 & 0.425 & 0.365 & 0.393 & 0.375 & 0.393 & 0.273 & 0.319 & 0.173 & 0.264 & 0.497 & 0.309 & 0.252 & 0.283 & 0.239 & 0.289 & 0.326 & 0.334 & 6.0\% & 4.0\% \\

w/o SAM 
& 0.417 & 0.424 & 0.366 & 0.393 & 0.364 & 0.389 & 0.265 & 0.316 & \textbf{0.160} & 0.256 & 0.498 & \textbf{0.282} & 0.241 & \textbf{0.271} & 0.203 & 0.249 & 0.314 & 0.323 & 2.6\% & 0.6\% \\

w/o Statistic-Aware 
& 0.419 & 0.425 & 0.367 & 0.393 & 0.372 & 0.391 & 0.268 & 0.317 & \textbf{0.160} & 0.257 & 0.475 & 0.292 & 0.243 & 0.275 & 0.199 & 0.249 & 0.313 & 0.325 & 2.2\% & 1.2\% \\

w/o Phase Router 
& 0.426 & 0.426 & 0.369 & 0.396 & 0.373 & 0.395 & 0.268 & 0.318 & 0.163 & 0.258 & 0.480 & 0.291 & 0.244 & 0.275 & 0.200 & 0.248 & 0.315 & 0.326 & 2.9\% & 1.5\% \\

\noalign{\hrule height 1.5pt}
\end{tabular}%
}
\end{table*}
\paragraph{\textbf{Plug-and-play Capability.}}
To demonstrate the versatility of our framework, Table~\ref{tab:plug and play} evaluates PULSE by replacing its default lightweight MLP backbone with four state-of-the-art architectures. This reveals PULSE's strong universality, as it consistently unlocks further performance gains across diverse model families. Most notably, DLinear achieves a striking \textbf{33.3\%} improvement on ETTh2, effectively alleviating the ``static scale'' limitation inherent in linear forecasters. Advanced architectures such as iTransformer and TimesNet also show robust synergistic gains within our framework.

However, a performance drop of \textbf{-17.5\%} occurs when PatchTST is employed as the backbone on the Traffic dataset. 
This exception requires a more careful interpretation, because Traffic is a densely correlated multivariate benchmark where channel-independent designs are often viewed as a potential weakness. 
Yet channel independence alone cannot explain the degradation, since the default MLP backbone in PULSE is also channel-independent and still performs competitively on Traffic. 
Instead, the limitation more likely stems from an architectural mismatch between PatchTST's patch-level aggregation and the residual stream produced by Phase-Anchored Disentanglement. 
After the dominant structural component is absorbed by the anchor stream, the remaining residual becomes more stochastic and less locally smooth, weakening the stable local-patch assumption used by PatchTST. 
This issue is further amplified on Traffic, where dense sensor interactions make local temporal patterns more complex. 
In contrast, the lightweight MLP imposes a weaker locality prior and can model the residual signal more directly under the SAM-based training objective. 
Therefore, the degradation of PatchTST+PULSE on Traffic is better interpreted as a dataset-specific inductive-bias mismatch, rather than as an inherent limitation of PULSE or channel-independent backbones. 
We further dissect the contribution of each component in the following ablation studies.

\begin{figure*}[t!]
    \centering
    \includegraphics[width=\textwidth]{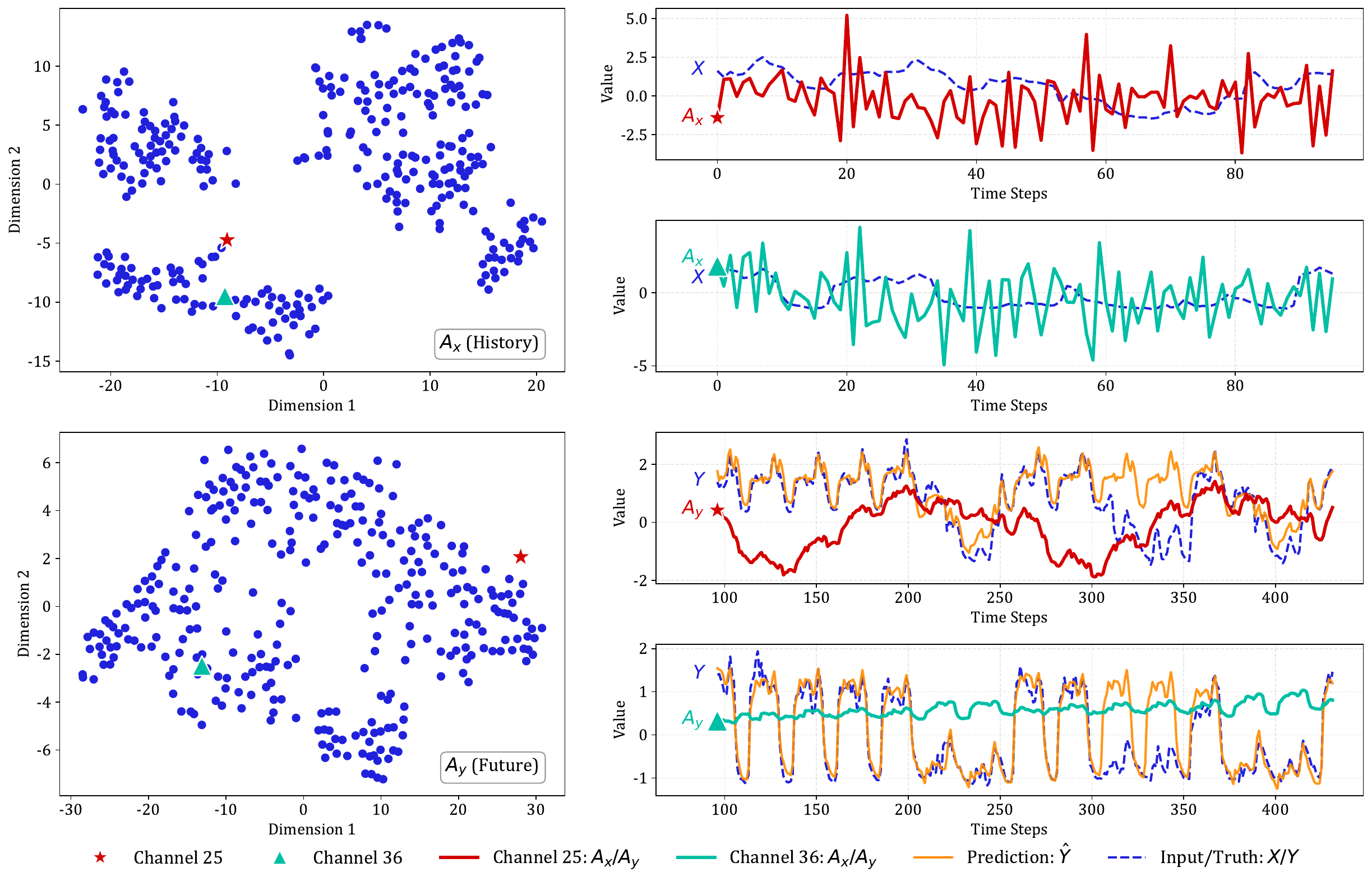} 
    \caption{Visual analysis of learned Phase Anchors versus raw data. The data are sourced from the Electricity dataset.}
    \label{fig:tsne}
\end{figure*}
\subsection{Ablation Studies and Analysis}
\label{sec:ablation}
\paragraph{\textbf{Ablation Studies.}} 
To validate the {Disentangle--Evolve--Simulate} design, we conduct component-wise ablation studies across all evaluated datasets. 
PULSE consists of three synergistic mechanisms, each addressing a distinct aspect of non-stationary forecasting: 
(1) removing the \textbf{Phase Anchor} eliminates the structural reference frame, forcing the model to learn non-stationary evolution from entangled raw observations; 
(2) removing the \textbf{Phase Router} reduces the framework to a static architecture that handles different temporal dynamics uniformly, without adapting the future phase trajectory; 
(3) removing \textbf{SAM} or disabling the \textbf{Statistic-Aware} mechanism removes statistical calibration, increasing vulnerability to OOD shifts and unstable residual scaling.

As shown in Table~\ref{tab:ablation}, PULSE consistently outperforms all variants, confirming the necessity of each module. 
The \textbf{Phase Anchor} is the foundational component; removing it causes the largest degradation, with an average MSE drop of \textbf{6.0\%}. 
This result supports Hypothesis I, showing that a stable disentanglement basis is necessary for forecasting under non-stationarity. 
The \textbf{Phase Router} provides a \textbf{2.9\%} gain by dynamically dispatching temporal patterns, confirming that future structures should be generated rather than copied from history. 
Finally, \textbf{SAM} and the \textbf{Statistic-Aware} mechanism yield gains of 2.6\% and 2.2\%, respectively. 
These results support Theorem~\ref{thm:stability}, showing that statistical interpolation helps prevent scale collapse and mitigate volatility shifts. 
The advantage of PULSE does not come from simply accumulating modules, but from their coherent integration: the Anchor provides the structural basis, the Router enables adaptive evolution, and SAM improves robustness under distributional changes.

\begin{figure}[htb!]
    \centering
    \includegraphics[width=\columnwidth]{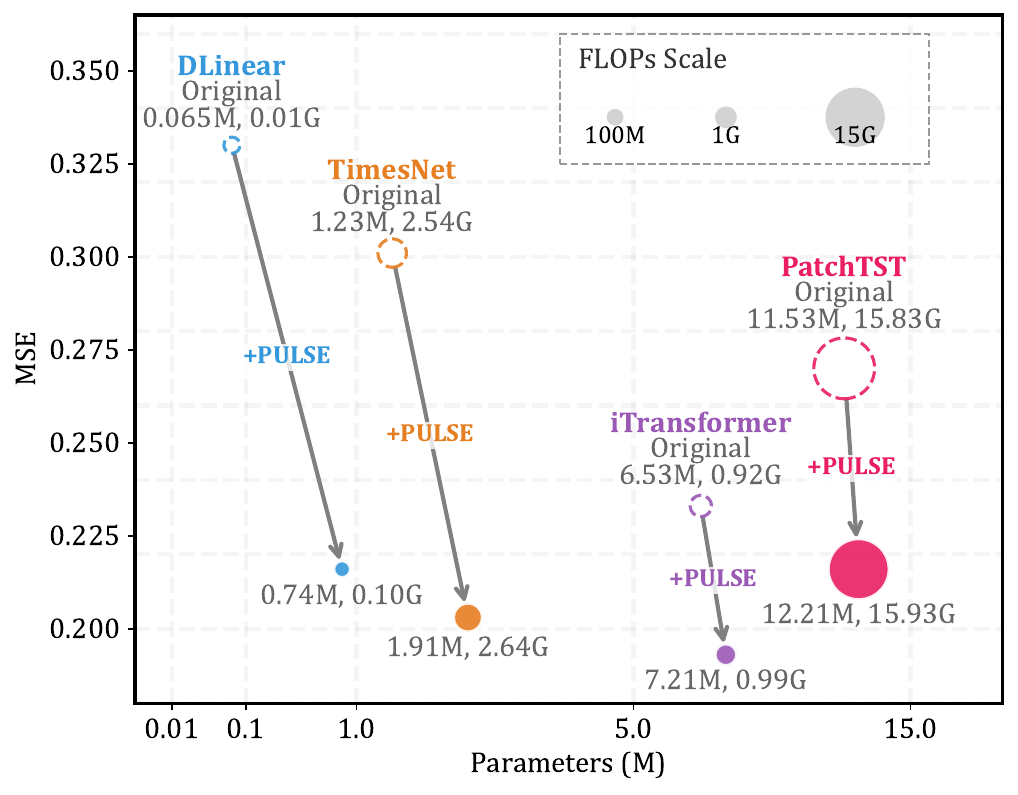} 
\caption{\textbf{Plug-and-Play Efficiency of PULSE on the Solar Dataset.} We illustrate the performance shift of four backbone models after being equipped with PULSE. The comparison covers MSE, Parameters, and FLOPs, averaged over prediction horizons $H \in \{96, 192, 336, 720\}$. Arrows indicate clear accuracy improvements with modest additional computational overhead.}
    \label{fig:plug_efficiency}
\end{figure}
\begin{table*}[t!]
\centering
\caption{\textbf{Plug-and-play accuracy--efficiency trade-off.} 
The look-back length is fixed to 96, and all results are averaged over prediction horizons 
$H \in \{96,192,336,720\}$. 
Runtime efficiency is measured by repeatedly processing a single batch 100 times; Lat. denotes the median inference latency in milliseconds, and Mem. denotes the peak GPU memory usage in GB.}
\label{tab:practical_efficiency}
\renewcommand{\arraystretch}{1.35}
\setlength{\tabcolsep}{3.5pt}
\resizebox{\textwidth}{!}{%
\begin{tabular}{l | cccc | cccc | cccc | cccc}
\noalign{\hrule height 1.5pt}
\multirow{2}{*}{Model}
& \multicolumn{4}{c|}{ETTh1}
& \multicolumn{4}{c|}{Electricity}
& \multicolumn{4}{c|}{Weather}
& \multicolumn{4}{c}{Solar} \\
\cmidrule(l){2-17}
& MSE & MAE & Lat.(ms) & Mem.(GB)
& MSE & MAE & Lat.(ms) & Mem.(GB)
& MSE & MAE & Lat.(ms) & Mem.(GB)
& MSE & MAE & Lat.(ms) & Mem.(GB) \\
\cmidrule(l){1-17}

iTransformer
& 0.454 & 0.447 & 4.068 & 0.130
& 0.178 & 0.270 & 20.910 & 0.904
& 0.258 & 0.282 & 5.029 & 0.238
& 0.233 & 0.262 & 8.358 & 0.291 \\
\rowcolor{gray!20}
\textbf{+ PULSE}
& \textbf{0.409} & \textbf{0.422} & 6.598 & 0.131
& \textbf{0.159} & \textbf{0.258} & 28.580 & 0.945
& \textbf{0.237} & \textbf{0.275} & 8.965 & 0.239
& \textbf{0.193} & \textbf{0.257} & 11.178 & 0.316 \\
\cmidrule{1-17}

PatchTST
& 0.469 & 0.454 & 8.028 & 0.415
& 0.205 & 0.290 & 87.483 & 4.315
& 0.259 & 0.281 & 22.023 & 1.163
& 0.270 & 0.307 & 38.678 & 1.868 \\
\rowcolor{gray!20}
\textbf{+ PULSE}
& \textbf{0.440} & \textbf{0.448} & 9.908 & 0.422
& \textbf{0.176} & \textbf{0.279} & 94.188 & 4.364
& \textbf{0.242} & \textbf{0.274} & 25.655 & 1.175
& \textbf{0.216} & \textbf{0.280} & 40.290 & 1.889 \\
\cmidrule{1-17}

TimesNet
& 0.458 & 0.507 & 40.848 & 0.993
& 0.192 & 0.295 & 9.843 & 0.312
& 0.259 & 0.287 & 39.980 & 0.999
& 0.301 & 0.319 & 10.138 & 0.293 \\
\rowcolor{gray!20}
\textbf{+ PULSE}
& \textbf{0.422} & \textbf{0.429} & 43.750 & 0.999
& \textbf{0.182} & \textbf{0.287} & 13.888 & 0.447
& \textbf{0.234} & \textbf{0.285} & 43.158 & 1.013
& \textbf{0.203} & \textbf{0.266} & 13.040 & 0.316 \\
\cmidrule{1-17}

DLinear
& 0.456 & 0.452 & 0.438 & 0.043
& 0.212 & 0.300 & 0.625 & 0.158
& 0.265 & 0.317 & 0.485 & 0.064
& 0.330 & 0.401 & 0.490 & 0.086 \\
\rowcolor{gray!20}
\textbf{+ PULSE}
& \textbf{0.428} & \textbf{0.424} & 3.828 & 0.077
& \textbf{0.168} & \textbf{0.267} & 8.010 & 0.409
& \textbf{0.250} & \textbf{0.285} & 4.615 & 0.164
& \textbf{0.216} & \textbf{0.272} & 5.490 & 0.210 \\
\noalign{\hrule height 1.5pt}
\end{tabular}%
}
\end{table*}
\paragraph{\textbf{Analysis of Dynamical Phase Evolution.}}
To validate Hypothesis II and demonstrate the efficacy of the Generative Phase Router, we inspect the intrinsic representations learned by PULSE. 
Figure~\ref{fig:tsne} visualizes the t-SNE~\cite{maaten2008visualizing} embeddings of the deterministic Phase Anchors, including the History Anchor ($\mathbf{A}_x$) and the generated Future Anchor ($\mathbf{A}_y$), together with their raw temporal sequences.

The visualization highlights the one-to-many mapping problem in non-stationary forecasting: although Channels 25 and 36 share similar historical patterns and cluster closely in $\mathbf{A}_x$, their future trajectories diverge markedly. 
The Generative Phase Router resolves this ambiguity by mapping proximate historical anchors to distinct regions in $\mathbf{A}_y$, enabling future-dependent phase evolution rather than direct historical copying. 
Furthermore, the temporal sequences reveal a clear structural transition: while $\mathbf{A}_x$ retains high-frequency variations to represent local non-stationary shocks, $\mathbf{A}_y$ evolves into a smoother and lower-rank trajectory that serves as a future-oriented structural reference. 
This transition suggests that PULSE actively generates the deterministic phase trajectory while separating it from transient stochastic fluctuations. 
Together, these observations support Hypothesis II by showing that PULSE learns future-oriented phase evolution instead of relying on static historical patterns. 
We provide a detailed spectral analysis of this phenomenon in Appendix~\ref{app:spectral_analysis}.

\paragraph{Plug-and-Play Efficiency Analysis.}
As shown in \cref{fig:plug_efficiency} and \cref{tab:practical_efficiency}, integrating PULSE into existing architectures yields substantial error reductions with modest overhead, indicating that the correction module improves accuracy without dominating the cost of the host model. 
On Solar, DLinear improves from 0.330 to 0.216 MSE, with FLOPs increasing only from 0.009G to 0.105G, turning a simple linear model into a competitive forecaster. 
Notably, the gains also extend to stronger backbones: TimesNet improves from 0.301 to 0.203 MSE, and iTransformer improves from 0.233 to 0.193 MSE, without changing their fundamental complexity class.

Beyond FLOPs, \cref{tab:practical_efficiency} reports actual inference latency and peak GPU memory, providing a more direct assessment of deployment cost. 
For medium-to-heavy backbones, PULSE usually introduces only a few milliseconds of additional latency and minor memory overhead. 
For example, on Solar, PatchTST improves from 0.270 to 0.216 MSE, while latency increases slightly from 38.678 ms to 40.290 ms and memory usage increases from 1.868 GB to 1.889 GB. 
For DLinear, the relative latency increase is larger because the baseline cost is nearly zero, but the absolute overhead remains small compared with the accuracy gain. 
Overall, PULSE serves as a practical plug-and-play accuracy booster, improving robustness to non-stationary shifts while maintaining acceptable runtime and memory costs across diverse backbone scales.
\section{Conclusion}
This paper introduced PULSE, a physics-informed framework that shifts non-stationary forecasting from passive fitting to generative evolution. By integrating Phase-Anchored Disentanglement and a Generative Phase Router, PULSE effectively addresses Phase Amnesia and enables accurate synthesis of future trajectories. Extensive experiments across 12 real-world datasets show that PULSE empowers a lightweight MLP backbone to achieve state-of-the-art performance on most benchmarks, while maintaining linear-level computational efficiency and remaining competitive with complex Transformer-based models. Its plug-and-play nature and strong balance of accuracy, efficiency, and robustness position PULSE as a promising foundation for challenging real-world forecasting tasks.
\section*{Acknowledgements}
This work was supported in part by the National Natural Science Foundation of China under Grant No. 62406206, by the Fundamental Research Funds for the Central Universities, by the National Natural Science Foundation of China under Grant U25A20439, by the Sichuan Provincial Natural Science Foundation under Grant No. 2026NSFSC0426, by the NSFC under Grant Nos. 62372430 and 62502505, by the Youth Innovation Promotion Association CAS under No. 2023112, by the Postdoctoral Fellowship Program of CPSF under Grant Number GZC20251078, and by the China Postdoctoral Science Foundation under Grant No. 2025M77154.
\clearpage
\section*{Impact Statement}
This paper aims to advance the field of Time Series Forecasting by addressing non-stationarity with high efficiency. While our framework has potential applications in domains such as energy and transportation, we do not foresee any specific negative societal consequences or ethical concerns that require highlighting here.
\bibliographystyle{icml2026}
\bibliography{icml2026}
\newpage
\appendix
\clearpage
\section{Theoretical Analysis and Proofs}
\label{app:theory}

\subsection{Proof of Gradient Sensitivity (Proposition~\ref{prop:gradient})}
\label{app:gradient}

\textbf{Proposition (Gradient Sensitivity under Dominant Anchor).}
\textit{
Assume an additive decomposition $\mathbf{x} = \mathbf{A} + \mathbf{R}$,
where the anchor component $\mathbf{A}$ dominates the residual $\mathbf{R}$
in variance, i.e., $\sigma^2_A \gg \sigma^2_R$.
Then, standard normalization
$\tilde{\mathbf{x}}_{\mathrm{std}} = \mathrm{RevIN}(\mathbf{x})$
scales residual gradients on the order of $O(\sigma_A^{-1})$,
whereas our formulation
$\tilde{\mathbf{x}}_{\mathrm{ours}} = \mathrm{RevIN}(\mathbf{R}) + \mathbf{A}$
scales residual gradients on the order of $O(\sigma_R^{-1})$.
}

\begin{proof}
RevIN normalizes an input sequence $\mathbf{x} \in \mathbb{R}^T$ by scaling it with its standard deviation:
\[
\mathrm{RevIN}(\mathbf{x}) = \frac{\mathbf{x}-\mu(\mathbf{x})}{\sigma(\mathbf{x})},
\]
where the mean $\mu(\mathbf{x}) = \frac{1}{T}\sum_{i=1}^T x_i$ and the variance is defined as
\[
\sigma^2(\mathbf{x}) = \frac{1}{T}\sum_{i=1}^T (x_i-\mu(\mathbf{x}))^2.
\]

Assume an additive decomposition $\mathbf{x} = \mathbf{A} + \mathbf{R}$,
with negligible covariance between $\mathbf{A}$ and $\mathbf{R}$, and that $\mathbf{A}$ dominates in variance ($\sigma^2_A \gg \sigma^2_R$).
The total variance can be expanded as:
\[
\sigma^2(\mathbf{x}) = \sigma_A^2 + \sigma_R^2 + 2\mathrm{Cov}(\mathbf{A},\mathbf{R})
\approx \sigma_A^2 \left(1 + \frac{\sigma^2_R}{\sigma^2_A}\right).
\]
This implies that the standard deviation is dominated by the anchor: $\sigma(\mathbf{x}) = \Theta(\sigma_A)$.

The Jacobian of RevIN with respect to $\mathbf{R}$ admits the standard
instance-normalization decomposition:
{\scriptsize \[
\frac{\partial \mathrm{RevIN}(\mathbf{x})}{\partial \mathbf{R}}
=
\frac{1}{\sigma(\mathbf{x})}
\left( \mathbf{I} - \frac{1}{T}\mathbf{1}\mathbf{1}^\top \right)
-
\frac{1}{T\sigma(\mathbf{x})^3}
(\mathbf{x}-\mu_x\mathbf{1})(\mathbf{x}-\mu_x\mathbf{1})^\top. 
\]}
The first term dominates with a spectral norm of $\Theta(\sigma(\mathbf{x})^{-1}) = \Theta(\sigma_A^{-1})$,
while the second term is a rank-one correction of strictly lower order.
Therefore, the gradient magnitude scales as:
\[
\left\|
\frac{\partial \mathrm{RevIN}(\mathbf{x})}{\partial \mathbf{R}}
\right\|_2
= \Theta\!\left(\frac{1}{\sigma_A}\right).
\]

In contrast, our method applies normalization directly to the residual:
\[
\tilde{\mathbf{x}}_{\mathrm{ours}}
= g(\mathbf{R}) + \mathbf{A},
\qquad
g(\mathbf{R}) = \frac{\mathbf{R}-\mu(\mathbf{R})}{\sigma(\mathbf{R})}.
\]
Since $\mathbf{A}$ is independent of $\mathbf{R}$ in this formulation,
\[
\frac{\partial \tilde{\mathbf{x}}_{\mathrm{ours}}}{\partial \mathbf{R}}
=
\frac{\partial g(\mathbf{R})}{\partial \mathbf{R}}.
\]

The Jacobian of $g$ admits an analogous decomposition,
where the scaling factor depends on the intrinsic volatility of the residual $\sigma_R$.
Hence,
\[
\left\|
\frac{\partial \tilde{\mathbf{x}}_{\mathrm{ours}}}{\partial \mathbf{R}}
\right\|_2
= \Theta\!\left(\frac{1}{\sigma_R}\right).
\]

Combining the above results yields the ratio of gradient scales:
\[
\frac{
\left\|\partial \tilde{\mathbf{x}}_{\mathrm{ours}} / \partial \mathbf{R}\right\|_2
}{
\left\|\partial \tilde{\mathbf{x}}_{\mathrm{std}} / \partial \mathbf{R}\right\|_2
}
= \Theta\!\left(\frac{\sigma_A}{\sigma_R}\right) \gg 1,
\]
which proves that our formulation prevents gradient attenuation caused by the dominant anchor.
\end{proof}
\subsection{Proof of Scale Collapse (Theorem~\ref{thm:stability})}
\label{app:stability}

\begin{theorem}[Stability Guarantee]
\label{thm:stabilitya}
For any two residual signals with correlation $\rho < 1$, the naive mixup scaling $\sigma_{\mathrm{naive}} = \sigma(\lambda\mathbf{R}_i + (1-\lambda)\mathbf{R}_j)$ systematically underestimates the feature scale and collapses to zero as $\rho \to -1$. In contrast, the proposed linear scaling $\sigma_{\mathrm{ours}} = \lambda\sigma_i + (1-\lambda)\sigma_j$ admits a strictly positive lower bound $\min(\sigma_i, \sigma_j) > 0$, ensuring well-conditioned gradients.
\end{theorem}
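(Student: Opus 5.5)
The proof rests on the variance identity for a linear combination of two signals. Write $\sigma_i=\sigma(\mathbf{R}_i)$ and $\sigma_j=\sigma(\mathbf{R}_j)$, both assumed strictly positive. Let $\rho$ be the (empirical) Pearson correlation between $\mathbf{R}_i$ and $\mathbf{R}_j$, so $\mathrm{Cov}(\mathbf{R}_i,\mathbf{R}_j)=\rho\sigma_i\sigma_j$. For $\lambda\in[0,1]$, bilinearity of the covariance gives
\[
\sigma_{\mathrm{naive}}^2=\lambda^2\sigma_i^2+(1-\lambda)^2\sigma_j^2+2\lambda(1-\lambda)\rho\,\sigma_i\sigma_j .
\]
Comparing with
\[
\sigma_{\mathrm{ours}}^2=\bigl(\lambda\sigma_i+(1-\lambda)\sigma_j\bigr)^2=\lambda^2\sigma_i^2+(1-\lambda)^2\sigma_j^2+2\lambda(1-\lambda)\sigma_i\sigma_j
\]
yields the central identity
\[
\sigma_{\mathrm{ours}}^2-\sigma_{\mathrm{naive}}^2=2\lambda(1-\lambda)(1-\rho)\,\sigma_i\sigma_j .
\]

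\textbf{Underestimation.} The right-hand side is nonnegative, and it is strictly positive whenever $\rho<1$ and $\lambda\in(0,1)$. Since both scales are nonnegative, this gives $\sigma_{\mathrm{naive}}<\sigma_{\mathrm{ours}}$. This is the precise content of "systematically underestimates": the naive estimate is always below the linear interpolation of the two scales.

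\textbf{Collapse as $\rho\to-1$.} Setting $\rho=-1$ in the expansion gives $\sigma_{\mathrm{naive}}^2=(\lambda\sigma_i-(1-\lambda)\sigma_j)^2$. This vanishes exactly at the mixing weight $\lambda^\ast=\sigma_j/(\sigma_i+\sigma_j)\in(0,1)$. Because $\sigma_{\mathrm{naive}}^2$ is continuous (affine) in $\rho$, we get $\sigma_{\mathrm{naive}}\to 0$ as $\rho\to-1$ at $\lambda=\lambda^\ast$. In the equal-scale case $\sigma_i=\sigma_j$, where $\lambda^\ast=1/2$, this is the anti-phase cancellation described in the main text. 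The statement "collapses to zero" will therefore be made precise as: at the cancelling weight $\lambda^\ast$ (or, for equal scales, at $\lambda=1/2$), the infimum over admissible pairs is zero.

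\textbf{Lower bound for SAM.} The quantity $\lambda\sigma_i+(1-\lambda)\sigma_j$ is a convex combination of $\sigma_i$ and $\sigma_j$, so it is at least $\min(\sigma_i,\sigma_j)>0$, uniformly in $\lambda$ and in $\rho$.

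\textbf{Conditioning of the decoder.} To connect this to "well-conditioned gradients," I would reuse the Jacobian computation from the proof of Proposition~\ref{prop:gradient}. There, normalization by a scale $s$ contributes a factor of order $s^{-1}$. The corresponding gradient factor in RevIN with $s=\sigma_{\mathrm{naive}}$ is unbounded as $\rho\to-1$, whereas with $s=\sigma_{\mathrm{ours}}$ it is bounded by $\min(\sigma_i,\sigma_j)^{-1}$.

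\textbf{Main obstacle.} The algebra is routine; the difficulty is that the statement is informal. Read literally, the collapse claim is false for generic $\lambda$: at $\rho=-1$ one has $\sigma_{\mathrm{naive}}=|\lambda\sigma_i-(1-\lambda)\sigma_j|$, which is zero only at $\lambda=\lambda^\ast$. The proof must therefore state clearly that collapse means this, i.e. vanishing at the cancelling weight, or equivalently vanishing of $\inf_\lambda\sigma_{\mathrm{naive}}$, rather than vanishing for every $\lambda$.
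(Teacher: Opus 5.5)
Your proposal is correct and follows the paper's proof. It uses the same variance expansion of the mixed signal (you take the difference $\sigma_{\mathrm{ours}}^2-\sigma_{\mathrm{naive}}^2$ where the paper takes the equivalent ratio $(\sigma_{\mathrm{naive}}/\sigma_{\mathrm{ours}})^2$), the same cancelling weight $\lambda^\ast=\sigma_j/(\sigma_i+\sigma_j)$ at $\rho=-1$ (the paper likewise exhibits collapse only at that weight, so your caveat is apt), and the same convex-combination lower bound. The one divergence is the conditioning remark: the paper argues through the decoder $\hat{\mathbf{y}}=\mathbf{z}\,\sigma_{\mathrm{naive}}$, where a nonzero target forces $\mathbf{z}$ to diverge, and this matches where SAM actually uses the interpolated scale, namely inside $\mathrm{RevIN}^{-1}$, more directly than your appeal to the normalization Jacobian of Proposition~\ref{prop:gradient}.
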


\begin{proof}
Consider two normalized signals $\mathbf{R}_i$ and $\mathbf{R}_j$ with standard deviations $\sigma_i$ and $\sigma_j$, and correlation coefficient $\rho$.
The variance of their linear mixup is given by
\[
\sigma_{\mathrm{naive}}^2 = \lambda^2\sigma_i^2 + (1-\lambda)^2\sigma_j^2 + 2\lambda(1-\lambda)\rho\sigma_i\sigma_j.
\]
By contrast, our method defines the scale as a convex combination
\[
\sigma_{\mathrm{ours}} = \lambda\sigma_i + (1-\lambda)\sigma_j.
\]

Dividing the two expressions yields
\[
\left(\frac{\sigma_{\mathrm{naive}}}{\sigma_{\mathrm{ours}}}\right)^2 = 1 - \frac{2\lambda(1-\lambda)\sigma_i\sigma_j(1-\rho)}{(\lambda\sigma_i + (1-\lambda)\sigma_j)^2}.
\]
becomes exactly $1 - 1 = 0$ when $\rho = -1$ and $\lambda = \frac{\sigma_j}{\sigma_i + \sigma_j}$.

If such a vanishing scale is used in decoding,
\[
\hat{\mathbf{y}} = \mathbf{z} \cdot \sigma_{\mathrm{naive}},
\]
any non-zero target $\mathbf{y}_{\mathrm{mix}}$ forces the latent representation $\mathbf{z}$ to diverge, resulting in numerical instability and gradient explosion.

In contrast, since
\[
\sigma_{\mathrm{ours}} \ge \min(\sigma_i, \sigma_j) > 0,
\]
our method prevents scale collapse and ensures well-conditioned latent representations even under adversarial phase alignment.
\end{proof}
%

\section{More Details of PULSE}

\subsection{Datasets}
\label{sec:appendix_datasets}

To ensure a comprehensive evaluation of the proposed framework, we conduct experiments on 12 widely recognized real-world datasets, covering both long-term and short-term forecasting tasks. The detailed statistics, including variable counts, sampling frequencies, and prediction settings, are summarized in Table~\ref{tab:datasets}.

\begin{table}[!t]
\centering
\renewcommand{\arraystretch}{0.9}
\caption{Dataset Statistics. Var is the number of variables, Length is the dataset length, Freq is the sampling frequency, $W$ is the global period, $T$ is the length of look-back window, and $H$ is the forecasting horizon.}
\label{tab:datasets}
\begin{small}
\begin{tabular*}{\columnwidth}{@{\extracolsep{\fill}}lcccccc}
\toprule
Dataset & Vars. & Length & Freq & $W$ & $T$ & $H$ \\
\midrule
ETTh1       & 7   & 14,400 & 1 hour  & 24  & 96 & 96--720 \\
ETTh2       & 7   & 14,400 & 1 hour  & 24  & 96 & 96--720 \\
ETTm1       & 7   & 57,600 & 15 min  & 96  & 96 & 96--720 \\
ETTm2       & 7   & 57,600 & 15 min  & 96  & 96 & 96--720 \\
Electricity & 321 & 26,304 & 1 hour  & 168 & 96 & 96--720 \\
Solar       & 137 & 52,560 & 10 min  & 144 & 96 & 96--720 \\
Traffic     & 862 & 17,544 & 1 hour  & 168 & 96 & 96--720 \\
Weather     & 21  & 52,696 & 10 min  & 144 & 96 & 96--720 \\
PEMS03      & 358 & 26,208 & 5 min   & 288 & 96 & 12--96  \\
PEMS04      & 307 & 16,992 & 5 min   & 288 & 96 & 12--96  \\
PEMS07      & 883 & 28,224 & 5 min   & 288 & 96 & 12--96  \\
PEMS08      & 170 & 17,856 & 5 min   & 288 & 96 & 12--96  \\
\bottomrule
\end{tabular*}
\end{small}
\end{table}

\vspace{-2pt}
\begin{itemize}
    \item \textbf{ETT Series}~\cite{zhou2021informer}: 
    This dataset records electricity transformer signals from July 2016 to July 2018, including oil temperature and power load features. It consists of four subsets: ETTh1/ETTh2 with 1-hour intervals and ETTm1/ETTm2 with 15-minute intervals.

    \item \textbf{Electricity}~\cite{wu2021autoformer}: 
    This dataset records hourly electricity consumption from 321 clients between 2012 and 2014.

    \item \textbf{Solar-Energy}~\cite{lai2018modeling}: 
    This dataset contains solar power production from 137 PV plants in 2006, sampled every 10 minutes.

    \item \textbf{Traffic}~\cite{wu2021autoformer}: 
    This dataset records hourly road occupancy rates from 862 sensors on San Francisco Bay area freeways from January 2015 to December 2016.

    \item \textbf{Weather}~\cite{wu2021autoformer}: 
    This dataset includes 21 meteorological indicators collected every 10 minutes from the Max Planck Biogeochemistry weather station in 2020.

    \item \textbf{PEMS Series}: 
    This dataset contains California traffic network data collected at 5-minute intervals. Following SCINet~\citep{liu2022scinet}, we use four public subsets: PEMS03, PEMS04, PEMS07, and PEMS08.
\end{itemize}
\vspace{-4pt}
\subsection{Experiment Details}
The proposed model is implemented in PyTorch~\cite{paszke2019pytorch} and executed on a single NVIDIA GeForce RTX 4090 GPU (24 GB). Training is performed using the Adam~\cite{Kingma2014AdamAM} optimizer with a fixed learning rate of 0.005 and an early stopping patience of 5 epochs, with a total budget of 30 epochs. To ensure reproducibility, all experiments are conducted with a fixed random seed of 2024. Following established benchmarks like iTransformer \citep{liuitransformer} and TimesNet \citep{wutimesnet}, the training-validation-test split ratios are set to 6:2:2 for the ETT and PEMS series, and 7:1:2 for the remaining datasets.

Regarding the architecture, the $d_{model}$ of the MLP backbone is fixed at 512, while the $d_{model}$ for auxiliary components, such as the phase router, is set to 16 or 32 depending on the task. A consistent dropout rate of 0.1 is applied across all layers. To optimize hardware utilization and convergence, the batch size is set according to dataset scale: 256 for ETT and Weather; 64 for Electricity, Traffic, and Solar; and 32 for the PEMS series.

Regarding key hyperparameters, the global period $W$ for each dataset is empirically determined by the autocorrelation function (ACF)~\cite{madsen2007time}, with values reported in Table~\ref{tab:datasets}. The codebook size is selected from $\{12, 24, 32, 48, 96\}$ in most cases, except for Weather at horizon 336, where it is set to 72. To capture multi-scale temporal dynamics, we employ task-specific timestamp features from \{MinuteOfHour, HourOfDay, DayOfWeek, DayOfMonth, DayOfYear\}. The specific combination depends on each dataset's sampling frequency and inherent periodicity. For instance, we use HourOfDay for the hourly-sampled ETTh1 dataset with a daily cycle, while additionally incorporating DayOfWeek for the Traffic dataset to capture weekly seasonality. For the router module, the patch length is tuned within $\{4, 6, 8, 12, 24\}$, and the number of attention heads is fixed at 1 to maintain computational efficiency.

\begin{table*}[t!]
\centering
\caption{\textbf{Quantification of history--future mismatch under $T=96$.}
MS, SS, and SM denote mean shift, standard-deviation shift, and spectral mismatch, respectively. 
Larger values indicate stronger mismatch between the historical input and the future target.}
\label{tab:phase_amnesia_mismatch}
\renewcommand{\arraystretch}{1.12}
\setlength{\tabcolsep}{2.6pt}
\resizebox{\textwidth}{!}{%
\begin{tabular}{c|ccc|ccc|ccc|ccc|ccc|ccc|ccc}
\toprule
\multirow{2}{*}{Horizon}
& \multicolumn{3}{c|}{ETTh1}
& \multicolumn{3}{c|}{ETTh2}
& \multicolumn{3}{c|}{ETTm1}
& \multicolumn{3}{c|}{ETTm2}
& \multicolumn{3}{c|}{Weather}
& \multicolumn{3}{c|}{Electricity}
& \multicolumn{3}{c}{Traffic} \\
\cmidrule(lr){2-4}
\cmidrule(lr){5-7}
\cmidrule(lr){8-10}
\cmidrule(lr){11-13}
\cmidrule(lr){14-16}
\cmidrule(lr){17-19}
\cmidrule(lr){20-22}
& MS & SS & SM
& MS & SS & SM
& MS & SS & SM
& MS & SS & SM
& MS & SS & SM
& MS & SS & SM
& MS & SS & SM \\
\midrule
$H=96$
& 0.332 & 0.114 & 0.267
& 0.314 & 0.163 & 0.276
& 0.409 & 0.181 & 0.260
& 0.329 & 0.151 & 0.257
& 0.408 & 0.261 & 0.234
& 0.363 & 0.056 & 0.196
& 0.704 & 0.108 & 0.260 \\
$H=192$
& 0.339 & 0.118 & 0.349
& 0.326 & 0.187 & 0.337
& 0.398 & 0.168 & 0.323
& 0.354 & 0.151 & 0.338
& 0.421 & 0.267 & 0.245
& 0.347 & 0.053 & 0.459
& 0.621 & 0.085 & 0.430 \\
$H=336$
& 0.357 & 0.125 & 0.386
& 0.349 & 0.217 & 0.371
& 0.418 & 0.167 & 0.328
& 0.380 & 0.161 & 0.350
& 0.438 & 0.297 & 0.304
& 0.361 & 0.056 & 0.580
& 0.603 & 0.081 & 0.532 \\
$H=720$
& 0.379 & 0.144 & 0.389
& 0.405 & 0.255 & 0.371
& 0.471 & 0.186 & 0.359
& 0.434 & 0.192 & 0.364
& 0.452 & 0.348 & 0.351
& 0.410 & 0.066 & 0.564
& 0.645 & 0.086 & 0.510 \\
\bottomrule
\end{tabular}%
}
\end{table*}
\subsection{Quantifying Phase Amnesia}
\label{app:phase_amnesia_quantification}

To further substantiate the Phase Amnesia failure mode, we quantify the discrepancy between the historical input and the future target. 
Given an input sequence $\mathbf{X}$ and its future target $\mathbf{Y}$, we evaluate three complementary forms of mismatch: mean shift (MS), standard-deviation shift (SS), and spectral mismatch (SM):
\begin{equation}
\begin{aligned}
&\mathrm{MS}
=
\frac{|\mu_{\mathbf{Y}}-\mu_{\mathbf{X}}|}
{|\mu_{\mathbf{Y}}|+|\mu_{\mathbf{X}}|+\epsilon}, \\
&\mathrm{SS}
=
\frac{|\sigma_{\mathbf{Y}}-\sigma_{\mathbf{X}}|}
{\sigma_{\mathbf{Y}}+\sigma_{\mathbf{X}}+\epsilon}, \\
&\mathrm{SM}
=
\frac{1}{2}\sum_f
\left|
\tilde{\mathbf{A}}_{\mathbf{X}}(f)
-
\tilde{\mathbf{A}}_{\mathbf{Y}}(f)
\right|.
\end{aligned}
\end{equation}
Here, $\mu_{\mathbf{X}}$ and $\mu_{\mathbf{Y}}$ denote the means of the historical and future windows, $\sigma_{\mathbf{X}}$ and $\sigma_{\mathbf{Y}}$ denote their standard deviations, and $\tilde{\mathbf{A}}_{\mathbf{X}}$ and $\tilde{\mathbf{A}}_{\mathbf{Y}}$ denote the normalized nonnegative spectra. 
MS and SS characterize distributional shifts in the time domain, whereas SM measures changes in frequency composition.

Table~\ref{tab:phase_amnesia_mismatch} shows that the history--future mismatch is widespread across datasets and prediction horizons. 
Even at $H=96$, all datasets exhibit non-negligible discrepancies, indicating that the historical window already provides an imperfect coordinate system for the future target. 
As the prediction horizon increases, the mismatch generally becomes more pronounced, especially in the spectral domain. 
For example, Electricity shows a sharp increase in SM from 0.196 at $H=96$ to 0.580 at $H=336$, while Traffic increases from 0.260 to 0.532 over the same horizons. 
Weather exhibits a consistent rise across MS, SS, and SM, suggesting simultaneous changes in mean, variance, and frequency composition. 
The ETT datasets also show persistent mismatch across horizons, confirming that this phenomenon is not restricted to high-dimensional benchmarks.

This empirical evidence directly supports the Phase Amnesia motivation. 
Models that rely only on historical statistics or direct pattern reuse must extrapolate from a context whose distributional and spectral properties already deviate from the future target. 
Therefore, the forecasting difficulty does not arise solely from limited model capacity, but also from the mismatch between the historical coordinate system and the future phase trajectory. 
These observations motivate an explicit generative mechanism for future phase evolution, rather than passive reuse of historical statistics or fixed periodic patterns.
\subsection{\textbf{Hyperparameter Analysis.}}
\label{sec:hyperparameter}
\begin{figure*}[htb!]
    \centering
    \includegraphics[width=\textwidth]{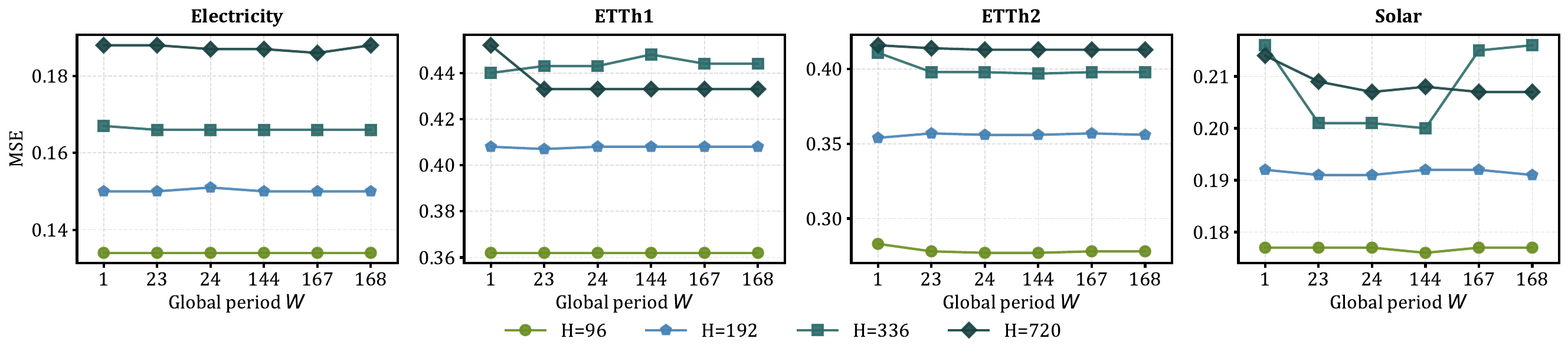} 
    \caption{Impact of the global period $W$ on model performance across Electricity, ETTh1, ETTh2, and Solar.}
    \label{fig:global_w}
\end{figure*}
\begin{figure*}[htb!]
    \centering
    \includegraphics[width=\textwidth]{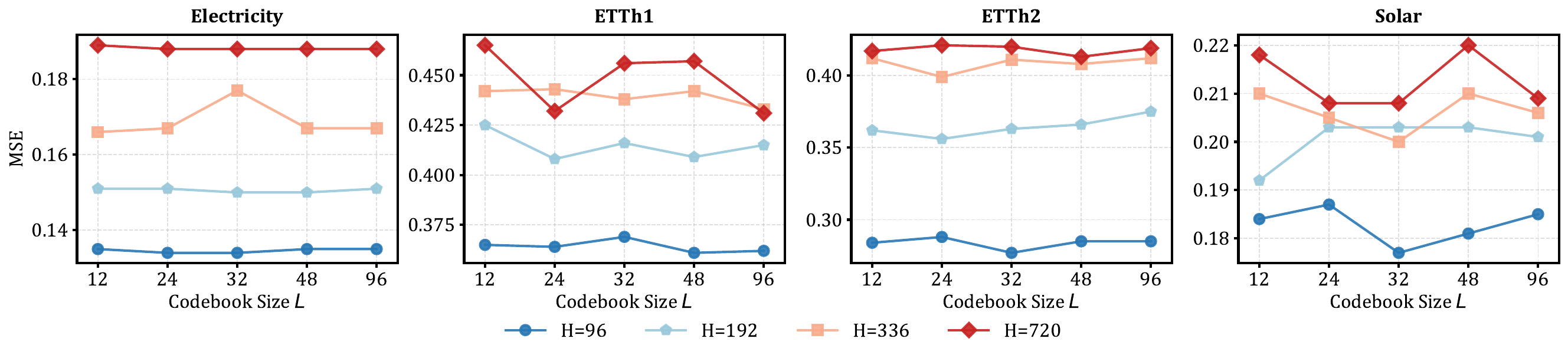} 
    \caption{Sensitivity analysis regarding codebook size $L$ across Electricity, ETTh1, ETTh2, and Solar.}
    \label{fig:codebook_L}
\end{figure*}
\begin{figure*}[htb!]
    \centering
    \includegraphics[width=\textwidth]{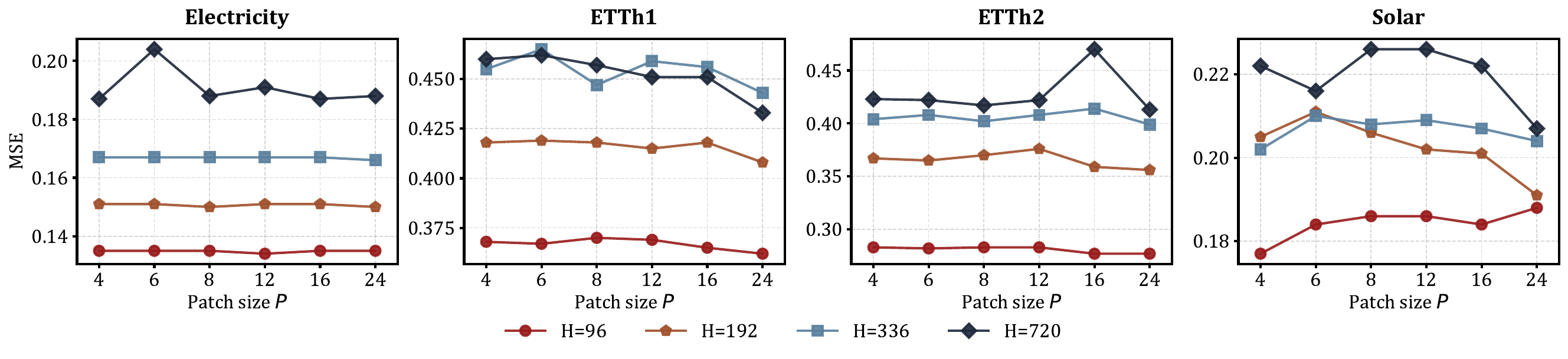} 
    \caption{Performance evaluation under varying router patch sizes $P$ across Electricity, ETTh1, ETTh2, and Solar.}
    \label{fig:router_p}
\end{figure*}
\paragraph{Robustness to Global Period $W$: Validation of Generative Evolution (Hypothesis II).} 
As visualized in Figure~\ref{fig:global_w}, PULSE demonstrates exceptional robustness to the global period setting $W$. Across datasets, performance remains remarkably stable even when $W$ deviates from the ground-truth periodicity (e.g., $W=167$ vs. $W=24$ for hourly data). This empirical stability provides strong evidence for \textbf{Hypothesis II}: unlike traditional methods that rely on passive "historical copying" or rigid frequency matching, our \textbf{Generative Phase Router} actively synthesizes the dynamical evolution operator. It learns to map arbitrary phase coordinates to correct future trajectories, effectively compensating for misaligned structural priors.
Crucially, a theoretical boundary is observed only at $W=1$, where performance degrades noticeably (e.g., in ETTh1 and Solar). Setting $W=1$ collapses the structural reference frame, reducing the Phase Anchor to a static bias. This forces the model to model high-energy structural shifts entirely through the residual stream, triggering the \textbf{Variance Masking} phenomenon (Hypothesis I) where subtle stochastic fluctuations are suppressed. Thus, providing a valid disentanglement basis ($W > 1$) is the only prerequisite for activating PULSE's generative capability.
\paragraph{Task-Dependent Sensitivity to codebook Size $L$.} 
The sensitivity analysis of codebook size $L$ (Figure~\ref{fig:codebook_L}) uncovers a fundamental distinction between ``Schedule-Driven'' and ``Physics-Driven'' systems. 
For schedule-driven domains like Electricity, performance is largely invariant to $L$. These systems follow rigid human calendars, where explicit timestamp cues (captured by our TimeEncoder) provide sufficient phase information, rendering the learnable codebook supplementary. 
In contrast, physical systems like Solar and ETTh1 exhibit high sensitivity, often requiring specific codebook resolutions (e.g., the optimal valley around $L \in [24, 48]$ for Solar). In these domains, dynamics are governed by complex physical interactions rather than strict clock time. Consequently, the TimeEncoder alone is insufficient, and the model relies heavily on the codebook to store and retrieve complex \textbf{geometric phase prototypes} (e.g., specific irradiance waveforms). This confirms the codebook's critical role as a non-parametric structural memory for capturing physical laws when explicit temporal cues are inadequate.
\paragraph{Impact of Router Patch Size $P$ on Evolution Learning.} 
Figure~\ref{fig:router_p} reveals a consistent preference for larger patch sizes ($P \in [16, 24]$) across complex datasets like ETTh1 and Solar. This trend directly validates the design of our Phase Router, which treats patches as semantic tokens to learn the continuous evolution operator.
When $P$ is too small (e.g., $P=4$), the phase tokens become overly fragmented, capturing only transient noise rather than the meaningful \textbf{local trajectory geometry}. This creates a localized ``Phase Amnesia,'' depriving the router of the context needed to extrapolate trends. Conversely, a larger patch size (e.g., $P=24$) encapsulates a complete structural unit (e.g., a full diurnal cycle), enabling the router to operate on information-rich \textbf{Phase Anchors}. This allows for a more accurate estimation of the derivatives governing the dynamical system, thereby generating smoother and more physically consistent future trajectories.

\subsection{Distributional Priors in Statistic-Aware Mixup.}
In the proposed Statistic-Aware Mixup (SAM), the interpolation coefficient $\lambda$ is sampled from a Beta distribution $\lambda \sim \mathrm{Beta}(\alpha, \alpha)$~\cite{zhang2018mixup}. The selection of this distribution and its hyperparameter $\alpha$ is grounded in the physical properties of time series residuals.
\paragraph{Theoretical Motivation: The Manifold Constraint.}
We employ the Beta distribution specifically for its topological flexibility within a bounded support. The critical design choice is the shape of the probability density. In time series forecasting, central mixing ($\lambda \approx 0.5$) often leads to \textit{destructive interference}, where phase-mismatched signals cancel each other out, creating non-physical artifacts. To mitigate this, we enforce a \textbf{U-shaped prior} by setting $\alpha < 1$. This constraint concentrates probability mass near the boundaries, ensuring that the synthesized sample remains structurally dominated by one source signal (preserving the phase manifold~\cite{verma2019manifold}) while using the other merely as a statistical perturbation for regularization.
\begin{figure}[t]
    \centering
    \includegraphics[width=0.95\linewidth]{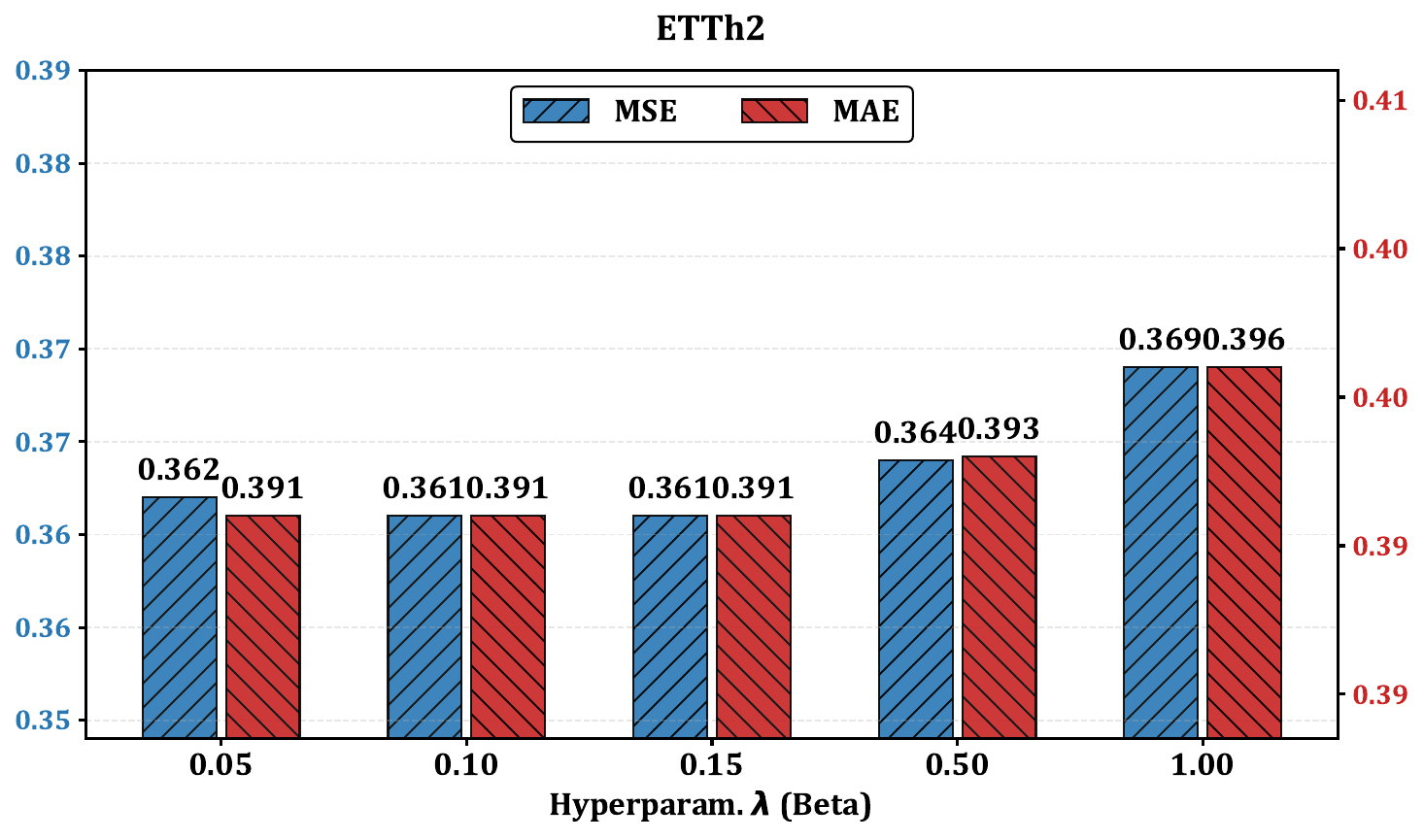}
    \caption{Sensitivity analysis of $\alpha$ on the ETTh2 dataset. The curve illustrates a general phenomenon: a stable \textbf{robustness plateau} in the U-shaped region ($\alpha \in [0.05, 0.15]$), followed by distinct degradation as the prior shifts towards a Uniform distribution ($\alpha=1.0$).}
    \label{fig:beta_sensitivity}
\end{figure}
\paragraph{Empirical Verification: Robustness and Degradation.}
Sensitivity analysis across benchmarks (exemplified by ETTh2 in Figure~\ref{fig:beta_sensitivity}) reveals a consistent phenomenon that validates our hypothesis:
\begin{itemize}
    \item \textbf{The Robustness Plateau ($\alpha \le 0.25$):} We observe a wide performance trough where the forecasting error remains minimal and invariant. Within the U-shaped region (e.g., $\alpha \in [0.05, 0.15]$), the model demonstrates high robustness, indicating that specific parameter tuning is unnecessary as long as the boundary-sampling property is maintained.
    \item \textbf{Degradation at Uniformity ($\alpha \to 1.0$):} A clear trend of performance degradation emerges as the distribution approaches uniformity ($\alpha=1.0$). This empirical evidence confirms that frequent sampling of central weights corrupts the temporal structure of residuals.
\end{itemize}
Consequently, preserving phase integrity via a U-shaped prior is a prerequisite for effective non-stationary forecasting. Based on these findings, we set $\alpha=0.15$ as the default, positioning it comfortably within the robust region.

\subsection{Spectral Analysis of Phase Anchor Evolution}
\label{app:spectral_analysis}
In this section, we provide a deeper analysis of the qualitative differences observed in Figure~\ref{fig:tsne} and Figure~\ref{fig:combined_tsne_visuals} regarding the behavior of the deterministic components: the Historical Phase Anchor ($\mathbf{A}_x$) and the Future Phase Anchor ($\mathbf{A}_y$).

\paragraph{The Phenomenon: Intensity vs. Smoothness.}
As illustrated in the time-domain visualizations of Figure~\ref{fig:tsne} and Figure~\ref{fig:combined_tsne_visuals}, a distinct morphological dichotomy exists between the historical and future anchors. The \textbf{High-Frequency Historical Anchoring} ($\mathbf{A}_x$), represented by the solid red and cyan lines in the upper panels, exhibits high-frequency oscillations and intense fluctuations. In many time steps, the amplitude of $\mathbf{A}_x$ matches or even momentarily exceeds the local variance of the raw input $X$. Conversely, the \textbf{Low-Frequency Future Evolution} ($\mathbf{A}_y$), shown in the lower panels, manifests as a smooth, continuous trajectory. It effectively filters out high-frequency noise, retaining only the fundamental low-rank periodicity and trend modulation. This transformation from a ``volatile'' history to a ``smooth'' future is not an artifact, but a deliberate consequence of our Disentangle-Evolve-Simulate design philosophy. We analyze the theoretical necessity of this behavior below.

\paragraph{$\mathbf{A}_x$: The Necessity of Energy Absorption.}
The intense fluctuation of $\mathbf{A}_x$ is a prerequisite for satisfying Hypothesis I (Wold Decomposition~\cite{wold1938study}) and mitigating the Variance Masking problem formalized in Proposition 3.1. Its primary role is to function as a Structural Trap that absorbs the dominant, high-energy components of the signal. By aggressively modeling the instantaneous phase and local volatility, $\mathbf{A}_x$ ensures that the remaining residual $\mathbf{R}_x = X - \mathbf{A}_x$ is strictly stochastic and low-energy. Furthermore, this volatility prevents Optimization Interference~\cite{koopmans1995spectral}. If $\mathbf{A}_x$ were overly smooth, significant structural trends would leak into the residual $\mathbf{R}_x$. This would inflate the variance $\sigma_{\mathbf{R}}$, causing the standard normalization (RevIN~\cite{kim2021reversible}) to suppress the gradients of subtle fluctuations (scaling by $O(\sigma_{\text{trend}}^{-1})$ instead of $O(\sigma_{\text{noise}}^{-1})$). Therefore, $\mathbf{A}_x$ must be volatile to fully capture the ``gross'' dynamics, leaving the backbone to focus purely on the fine-grained residual manifold.

\paragraph{$\mathbf{A}_y$: The Manifestation of Deterministic Laws.}
The transition to a smooth $\mathbf{A}_y$ validates Hypothesis II (Dynamical Phase Evolution). 
This smoothness signifies the recovery of the \textbf{intrinsic low-rank structure} from noisy observations. 
According to the \textbf{Low-Rank Hypothesis}~\cite{huangtimebase}, while the raw historical data $X$ exhibits high-rank characteristics due to stochastic noise corruption, deterministic phase evolution is inherently dominated by sparse basis dependencies. 
Consequently, the Phase Router performs a critical process of information condensation: by projecting the continuous time series into a compact phase resolution, it filters out high-rank noise and synthesizes a future trajectory $\mathbf{A}_y$ that aligns with the structural low-rank subspace. 
From the frequency-domain perspective, this condensation is reflected in Figure~\ref{fig:ay_residual_spectrum}, where $\mathbf{A}_y$ exhibits a substantially lower-amplitude spectrum than the future target $\mathbf{Y}$. 
This indicates that $\mathbf{A}_y$ is not intended to reconstruct the full spectral energy of $\mathbf{Y}$; instead, it extracts a compact deterministic spectral backbone, while the transient and stochastic variations are preserved in $\mathbf{R}_y$. 
In this sense, the attention mechanism acts as a learnable low-pass filter, ensuring that the generated anchor represents the expected evolution path rather than stochastic fluctuations. 
The smoothness of $\mathbf{A}_y$ therefore indicates that the model has successfully disentangled the evolving low-rank trend from the remaining high-rank residual variations.
\begin{figure*}[t]
    \centering
    \includegraphics[width=\linewidth]{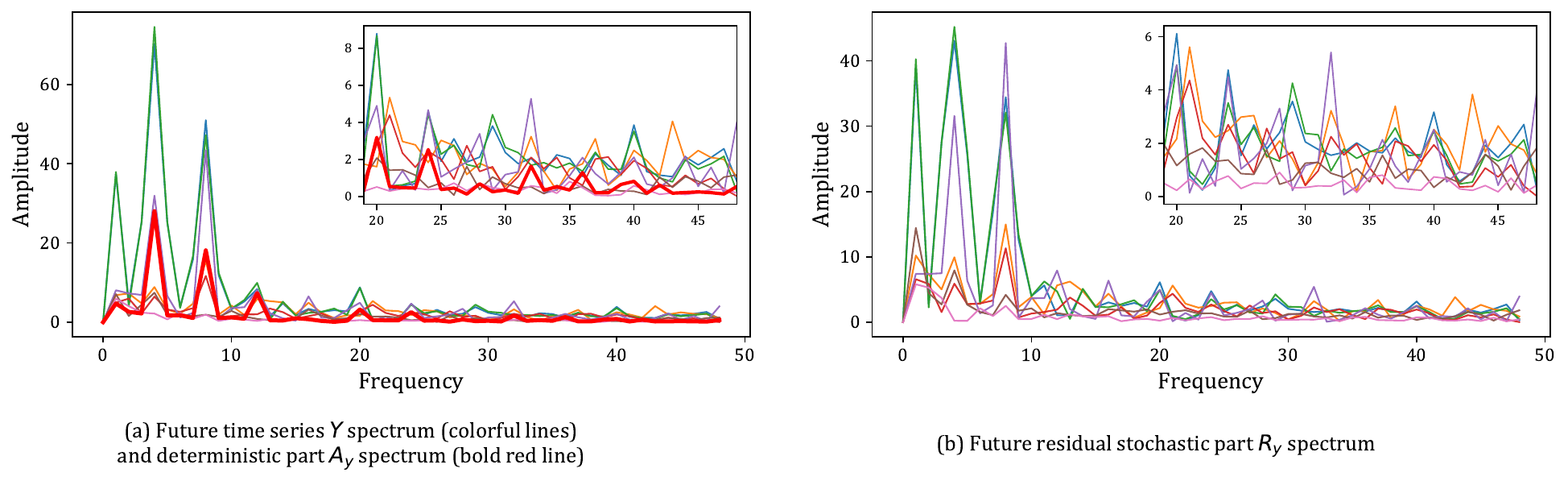}
    \caption{\textbf{Frequency-domain visualization of future-side disentanglement.}
    Panel (a) compares the spectra of the future target $\mathbf{Y}$ and the deterministic future anchor $\mathbf{A}_y$, while Panel (b) shows the spectrum of the stochastic residual $\mathbf{R}_y$. 
    $\mathbf{A}_y$ extracts a low-amplitude deterministic spectral backbone, whereas $\mathbf{R}_y$ captures the remaining stochastic variations.}
    \label{fig:ay_residual_spectrum}
\end{figure*}
\begin{table*}[t!]
\centering
\caption{\textbf{Effect of scaling up $d_{\mathrm{model}}$.}
Each metric is averaged over prediction horizons $H \in \{96,192,336,720\}$. 
The compact default setting $d_{\mathrm{model}}=16/32$ achieves the best overall performance.}
\label{tab:capacity_low_rank}
\renewcommand{\arraystretch}{1.05}
\scriptsize
\resizebox{0.8\textwidth}{!}{
\begin{tabular}{c|cc|cc|cc|cc|cc|cc}
\toprule
\multirow{2}{*}{$d_{\mathrm{model}}$}
& \multicolumn{2}{c|}{ETTh1}
& \multicolumn{2}{c|}{ETTh2}
& \multicolumn{2}{c|}{ETTm1}
& \multicolumn{2}{c|}{ETTm2}
& \multicolumn{2}{c|}{Electricity}
& \multicolumn{2}{c}{Weather} \\
\cmidrule(lr){2-3}
\cmidrule(lr){4-5}
\cmidrule(lr){6-7}
\cmidrule(lr){8-9}
\cmidrule(lr){10-11}
\cmidrule(lr){12-13}
& MSE & MAE
& MSE & MAE
& MSE & MAE
& MSE & MAE
& MSE & MAE
& MSE & MAE \\
\midrule
$16/32$ 
& \textbf{0.412} & \textbf{0.422}
& \textbf{0.361} & \textbf{0.391}
& \textbf{0.363} & \textbf{0.388}
& \textbf{0.262} & \textbf{0.314}
& \textbf{0.160} & \textbf{0.255}
& \textbf{0.239} & \textbf{0.271} \\

$64$ 
& 0.420 & 0.425
& 0.436 & 0.395
& 0.368 & 0.390
& 0.268 & 0.317
& 0.164 & 0.257
& 0.242 & 0.274 \\

$128$ 
& 0.421 & 0.426
& 0.370 & 0.397
& 0.368 & 0.391
& 0.268 & 0.317
& 0.164 & 0.257
& 0.243 & 0.276 \\

$256$ 
& 0.426 & 0.426
& 0.377 & 0.401
& 0.401 & 0.404
& 0.279 & 0.325
& 0.185 & 0.279
& 0.274 & 0.304 \\

$512$ 
& 0.593 & 0.509
& 0.416 & 0.432
& 0.435 & 0.425
& 0.297 & 0.334
& 0.203 & 0.298
& 0.279 & 0.309 \\
\bottomrule
\end{tabular}
}
\vspace{-6pt}
\end{table*}

\begin{table*}[t]
\centering
\caption{\textbf{Extended look-back forecasting results.}
$T$ denotes the look-back window length, and each metric is averaged over 
$H \in \{96,192,336,720\}$. Lower values indicate better performance.}
\label{tab:long_lookback_results}
\setlength{\tabcolsep}{2.5pt}
\renewcommand{\arraystretch}{1.08}
\scriptsize
\resizebox{0.9\textwidth}{!}{
\begin{tabular}{cc|cc|cc|cc|cc|cc|cc|cc|cc}
\toprule
& Dataset
& \multicolumn{2}{c|}{ETTh1}
& \multicolumn{2}{c|}{ETTh2}
& \multicolumn{2}{c|}{ETTm1}
& \multicolumn{2}{c|}{ETTm2}
& \multicolumn{2}{c|}{Electricity}
& \multicolumn{2}{c|}{Solar}
& \multicolumn{2}{c|}{Traffic}
& \multicolumn{2}{c}{Weather} \\
\cmidrule(lr){3-4}
\cmidrule(lr){5-6}
\cmidrule(lr){7-8}
\cmidrule(lr){9-10}
\cmidrule(lr){11-12}
\cmidrule(lr){13-14}
\cmidrule(lr){15-16}
\cmidrule(lr){17-18}
$T$
& Model
& MSE & MAE
& MSE & MAE
& MSE & MAE
& MSE & MAE
& MSE & MAE
& MSE & MAE
& MSE & MAE
& MSE & MAE \\
\midrule
\multirow{3}{*}{$336$}
& PULSE
& \textbf{0.409} & \textbf{0.425}
& \textbf{0.339} & \textbf{0.384}
& \textbf{0.344} & \textbf{0.374}
& \textbf{0.249} & \textbf{0.307}
& \textbf{0.149} & \textbf{0.246}
& 0.199 & \textbf{0.236}
& \textbf{0.409} & \textbf{0.276}
& \textbf{0.222} & \textbf{0.262} \\
& CFPT
& 0.423 & 0.435
& 0.341 & 0.385
& 0.348 & 0.376
& 0.260 & 0.317
& 0.155 & 0.249
& 0.207 & 0.256
& \textbf{0.409} & 0.277
& 0.225 & \textbf{0.262} \\
& CycleNet
& 0.415 & 0.426
& 0.355 & 0.398
& 0.355 & 0.379
& 0.251 & 0.309
& 0.158 & 0.250
& \textbf{0.194} & 0.255
& 0.413 & 0.281
& 0.226 & 0.266 \\
\midrule
\multirow{3}{*}{$512$}
& PULSE
& \textbf{0.410} & \textbf{0.427}
& \textbf{0.335} & \textbf{0.385}
& \textbf{0.342} & \textbf{0.374}
& \textbf{0.248} & \textbf{0.308}
& \textbf{0.150} & \textbf{0.249}
& \textbf{0.194} & \textbf{0.234}
& 0.406 & 0.277
& \textbf{0.220} & 0.260 \\
& CFPT
& 0.416 & 0.432
& 0.339 & 0.388
& 0.352 & 0.379
& 0.262 & 0.319
& 0.155 & \textbf{0.249}
& 0.197 & 0.252
& \textbf{0.401} & \textbf{0.272}
& 0.223 & 0.262 \\
& CycleNet
& 0.423 & 0.434
& 0.347 & 0.394
& 0.358 & 0.382
& 0.252 & 0.311
& 0.157 & 0.251
& 0.211 & 0.267
& 0.405 & 0.278
& 0.227 & \textbf{0.257} \\
\midrule
\multirow{3}{*}{$720$}
& PULSE
& \textbf{0.409} & \textbf{0.428}
& \textbf{0.342} & \textbf{0.388}
& \textbf{0.342} & \textbf{0.377}
& \textbf{0.247} & \textbf{0.308}
& \textbf{0.149} & \textbf{0.250}
& 0.191 & \textbf{0.235}
& \textbf{0.398} & 0.275
& \textbf{0.219} & \textbf{0.261} \\
& CFPT
& 0.426 & 0.441
& 0.347 & 0.395
& 0.353 & 0.381
& 0.263 & 0.322
& 0.155 & \textbf{0.250}
& 0.212 & 0.260
& \textbf{0.398} & \textbf{0.274}
& 0.243 & 0.276 \\
& CycleNet
& 0.430 & 0.439
& 0.345 & 0.394
& 0.355 & 0.381
& 0.249 & 0.312
& 0.157 & \textbf{0.250}
& \textbf{0.189} & 0.247
& 0.403 & 0.282
& 0.224 & 0.266 \\
\bottomrule
\end{tabular}
}
\end{table*}
\begin{table*}[t]
\centering
\caption{\textbf{Plug-and-play results on the Electricity dataset with $T=720$.}
$T$ denotes the look-back window length and $H$ denotes the prediction horizon. 
Lower MSE and MAE indicate better forecasting performance.}
\label{tab:plug_play_electricity_t720}
\setlength{\tabcolsep}{2.2pt}
\renewcommand{\arraystretch}{1.08}
\scriptsize
\resizebox{0.96\textwidth}{!}{
\begin{tabular}{c|cccc|cccc|cccc|cccc|cccc}
\toprule
& \multicolumn{4}{c|}{iTransformer}
& \multicolumn{4}{c|}{DLinear}
& \multicolumn{4}{c|}{TimesNet}
& \multicolumn{4}{c|}{SparseTSF}
& \multicolumn{4}{c}{PatchTST} \\
\cmidrule(lr){2-5}
\cmidrule(lr){6-9}
\cmidrule(lr){10-13}
\cmidrule(lr){14-17}
\cmidrule(lr){18-21}

& \multicolumn{2}{c}{Original} & \multicolumn{2}{c|}{+PULSE}
& \multicolumn{2}{c}{Original} & \multicolumn{2}{c|}{+PULSE}
& \multicolumn{2}{c}{Original} & \multicolumn{2}{c|}{+PULSE}
& \multicolumn{2}{c}{Original} & \multicolumn{2}{c|}{+PULSE}
& \multicolumn{2}{c}{Original} & \multicolumn{2}{c}{+PULSE} \\
\cmidrule(lr){2-3}
\cmidrule(lr){4-5}
\cmidrule(lr){6-7}
\cmidrule(lr){8-9}
\cmidrule(lr){10-11}
\cmidrule(lr){12-13}
\cmidrule(lr){14-15}
\cmidrule(lr){16-17}
\cmidrule(lr){18-19}
\cmidrule(lr){20-21}

$H$ 
& MSE & MAE & MSE & MAE
& MSE & MAE & MSE & MAE
& MSE & MAE & MSE & MAE
& MSE & MAE & MSE & MAE
& MSE & MAE & MSE & MAE \\
\midrule
$96$
& 0.135 & 0.233 & \textbf{0.127} & \textbf{0.224}
& 0.141 & 0.244 & \textbf{0.129} & \textbf{0.224}
& 0.202 & 0.308 & \textbf{0.178} & \textbf{0.284}
& 0.139 & 0.239 & \textbf{0.132} & \textbf{0.223}
& 0.141 & 0.240 & \textbf{0.130} & \textbf{0.227} \\

$192$
& 0.155 & 0.253 & \textbf{0.147} & \textbf{0.250}
& 0.155 & 0.258 & \textbf{0.150} & \textbf{0.245}
& 0.218 & 0.322 & \textbf{0.193} & \textbf{0.299}
& 0.155 & 0.250 & \textbf{0.147} & \textbf{0.240}
& 0.156 & 0.256 & \textbf{0.146} & \textbf{0.243} \\

$336$
& 0.169 & 0.267 & \textbf{0.161} & \textbf{0.266}
& 0.170 & 0.275 & \textbf{0.165} & \textbf{0.268}
& 0.232 & 0.332 & \textbf{0.208} & \textbf{0.320}
& 0.171 & 0.265 & \textbf{0.158} & \textbf{0.259}
& 0.172 & 0.267 & \textbf{0.163} & \textbf{0.261} \\

$720$
& 0.204 & 0.301 & \textbf{0.173} & \textbf{0.280}
& 0.209 & 0.309 & \textbf{0.181} & \textbf{0.285}
& 0.299 & 0.375 & \textbf{0.240} & \textbf{0.344}
& 0.208 & 0.300 & \textbf{0.184} & \textbf{0.285}
& 0.208 & 0.299 & \textbf{0.184} & \textbf{0.294} \\
\bottomrule
\end{tabular}
}
\end{table*}
\subsection{\textbf{Computational Complexity and Efficiency Analysis.}}
\label{sec:Computational}
\paragraph{Length-Invariant Attention via Phase Tokenization.}
Conventional Transformers suffer from token proliferation, where a sequence of length $T$ yields $N \propto T$ tokens, inevitably leading to quadratic attention costs $\mathbf{O}(T^2)$ \cite{vaswani2017attention}. Even patch-based methods~\mbox{\cite{nie2023a}} remain bound by linear token growth. PULSE fundamentally resolves this bottleneck via \textbf{Phase Tokenization}. Instead of slicing time, we project the series into a fixed phase space defined by a tunable resolution $P$ (where $P \in \{4, \dots, 24\}$). This strategy enforces the token count to be a structural constant, independent of the look-back window $T$. Consequently, the Phase Router operates on a fixed $P \times P$ interaction matrix. Since $P \ll T$, the complexity of the attention mechanism is reduced to $\mathbf{O}(P^2)$, which effectively behaves as $\mathbf{O}(1)$ relative to the input length. This creates a ``zero-latency'' global mixer that completely decouples the computational cost of modeling dependencies from the sequence length.

\paragraph{Efficiency Origins: The Low-Rank Hypothesis.}
The computational efficiency of PULSE is not an engineered coincidence but a direct derivation from the \textbf{Low-Rank Hypothesis}~\mbox{\cite{huangtimebase}}. 
We posit that deterministic phase evolution resides on a highly compact manifold dominated by sparse basis dependencies. 
Crucially, our Phase Tokenization captures this structure by learning \textbf{condensed phase relationships}. 
By projecting the continuous time series into a compact phase resolution $P$, the model is forced to distill scattered and redundant temporal observations into a few dense phase tokens. 
This information condensation filters out high-rank noise and aligns the representation space with the intrinsic low-rank manifold. 
The capacity-scaling results in Table~\ref{tab:capacity_low_rank} further support this assumption: increasing the auxiliary hidden dimension $d_{\mathrm{model}}$ beyond the compact default setting does not yield systematic gains, but instead often leads to degraded performance. 
This suggests that the dominant phase-evolution dynamics are already sufficiently represented in a low-dimensional latent space, while excessive channel capacity introduces redundant degrees of freedom that may disturb the compact phase structure.

This theoretical alignment fundamentally obviates the need for over-parameterized capacity, enabling a \textbf{synergistic reduction in both width and depth}. 
Specifically, since the tokens encode condensed phase dependencies with high information density, extremely narrow channels ($d_{\mathrm{model}} \in \{16, 32\}$) are sufficient to encode the evolution, reducing element-wise FLOPs by orders of magnitude compared to standard forecasters with $d_{\mathrm{model}}=512$~\mbox{\cite{nie2023a,liuitransformer,dai2024periodicity}}. 
\textbf{Simultaneously}, this condensed global view renders deep hierarchical abstraction unnecessary. 
Unlike high-rank patterns that require deep stacking, typically 2--4 layers, to gradually expand receptive fields, our low-rank phase tokens inherently provide a global structure. 
Consequently, a \textbf{single layer} of interaction in the Phase Router is sufficient to capture the dominant evolution dynamics, avoiding the latency and memory-access overhead of deep networks.

\paragraph{Overall Complexity Formulation.}
The total computational complexity is formally derived as the sum of the linear projection layers and the constant-cost router attention:
\begin{equation}
    \Omega(\text{PULSE}) = \underbrace{\mathbf{O}(T)}_{\text{Linear Projections}} + \underbrace{\mathbf{O}(P^2)}_{\text{Phase Router}}
\end{equation}
Given that the phase resolution $P$ is a small constant (max 24) and $d_{model}$ is minimized ($16 \sim 32$) due to the low-rank constraint, the quadratic attention term becomes mathematically negligible compared to the linear term. The system is thus dominated by the lightweight linear projections, resulting in an asymptotically linear complexity $\mathbf{O}(T)$. This theoretically confirms that PULSE achieves the inference speed of linear models while retaining the non-linear generative capabilities of the attention mechanism within the low-rank phase subspace.
\begin{table*}[t]
\centering
\caption{\textbf{Additional forecasting results with MSE, MAE, and MASE.}
Results are reported on five normal-scale datasets and one large-scale dataset with $T=96$.
Each metric is averaged over prediction horizons $H \in \{96,192,336,720\}$, 
and MASE is computed with seasonal period $m=1$.
Lower values indicate better performance.}
\label{tab:additional_mase_results}
\setlength{\tabcolsep}{2.8pt}
\renewcommand{\arraystretch}{1.08}
\scriptsize
\resizebox{\textwidth}{!}{
\begin{tabular}{l|ccc|ccc|ccc|ccc|ccc|ccc}
\toprule
Dataset
& \multicolumn{3}{c|}{SD}
& \multicolumn{3}{c|}{AQShunyi}
& \multicolumn{3}{c|}{AQWan}
& \multicolumn{3}{c|}{ZafNoo}
& \multicolumn{3}{c|}{CzeLan}
& \multicolumn{3}{c}{Wind} \\
\cmidrule(lr){2-4}
\cmidrule(lr){5-7}
\cmidrule(lr){8-10}
\cmidrule(lr){11-13}
\cmidrule(lr){14-16}
\cmidrule(lr){17-19}
Model
& MSE & MAE & MASE
& MSE & MAE & MASE
& MSE & MAE & MASE
& MSE & MAE & MASE
& MSE & MAE & MASE
& MSE & MAE & MASE \\
\midrule
PULSE
& \textbf{1.299} & \textbf{0.201} & \textbf{1.597}
& \textbf{0.734} & \textbf{0.517} & \textbf{3.469}
& \textbf{0.841} & \textbf{0.510} & \textbf{3.516}
& \textbf{0.564} & \textbf{0.461} & \textbf{2.558}
& \textbf{0.242} & \textbf{0.282} & \textbf{4.576}
& \textbf{0.977} & \textbf{0.688} & \textbf{6.457} \\

CFPT
& 1.493 & 0.209 & 1.663
& 0.779 & 0.526 & 3.527
& 0.887 & 0.516 & 3.554
& 0.566 & 0.462 & 2.563
& 0.257 & 0.290 & 4.871
& 1.018 & 0.717 & 6.728 \\

CycleNet
& 1.341 & 0.254 & 2.027
& 0.764 & 0.521 & 3.494
& 0.868 & 0.512 & 3.526
& 0.567 & 0.462 & 2.563
& 0.251 & 0.284 & 4.605
& 0.998 & 0.706 & 6.628 \\
\bottomrule
\end{tabular}
}

\end{table*}

\begin{table*}[t]
\centering
\caption{\textbf{Plug-and-play MASE results with $T=720$.}
$T$ denotes the look-back window length. 
The columns $96$, $192$, $336$, and $720$ denote prediction horizons. 
MASE is computed with seasonal period $m=96$. 
Best results between each backbone and its +PULSE variant are highlighted in \textbf{bold}.}
\label{tab:mase_plug_play_results}
\setlength{\tabcolsep}{4.0pt}
\renewcommand{\arraystretch}{1.08}
\scriptsize
\resizebox{0.8\textwidth}{!}{
\begin{tabular}{l|cccc|cccc|cccc}
\toprule
Dataset
& \multicolumn{4}{c|}{Wind}
& \multicolumn{4}{c|}{AQShunyi}
& \multicolumn{4}{c}{AQWan} \\
\cmidrule(lr){2-5}
\cmidrule(lr){6-9}
\cmidrule(lr){10-13}
Model
& 96 & 192 & 336 & 720
& 96 & 192 & 336 & 720
& 96 & 192 & 336 & 720 \\
\midrule
iTransformer
& 0.747 & 0.836 & 0.896 & 0.931
& 0.760 & 0.797 & 0.817 & 0.875
& 0.780 & 0.806 & 0.833 & 0.888 \\
+PULSE
& \textbf{0.689} & \textbf{0.788} & \textbf{0.846} & \textbf{0.900}
& \textbf{0.731} & \textbf{0.770} & \textbf{0.781} & \textbf{0.813}
& \textbf{0.736} & \textbf{0.764} & \textbf{0.821} & \textbf{0.832} \\
\midrule
DLinear
& 0.750 & 0.809 & 0.861 & \textbf{0.894}
& 0.752 & 0.789 & 0.820 & 0.911
& 0.769 & 0.809 & 0.833 & 0.901 \\
+PULSE
& \textbf{0.716} & \textbf{0.778} & \textbf{0.842} & 0.895
& \textbf{0.731} & \textbf{0.768} & \textbf{0.797} & \textbf{0.863}
& \textbf{0.730} & \textbf{0.768} & \textbf{0.817} & \textbf{0.875} \\
\midrule
TimesNet
& 0.793 & 0.914 & 0.982 & 1.062
& 0.755 & 0.790 & 0.802 & \textbf{0.842}
& 0.839 & 0.861 & 0.824 & \textbf{0.822} \\
+PULSE
& \textbf{0.729} & \textbf{0.802} & \textbf{0.849} & \textbf{0.919}
& \textbf{0.734} & \textbf{0.787} & \textbf{0.799} & 0.849
& \textbf{0.736} & \textbf{0.792} & \textbf{0.803} & 0.850 \\
\midrule
PatchTST
& 0.760 & 0.848 & 0.882 & 0.924
& 0.762 & 0.802 & 0.821 & 0.855
& 0.769 & 0.805 & 0.819 & 0.893 \\
+PULSE
& \textbf{0.691} & \textbf{0.795} & \textbf{0.860} & \textbf{0.904}
& \textbf{0.741} & \textbf{0.782} & \textbf{0.799} & \textbf{0.852}
& \textbf{0.743} & \textbf{0.776} & \textbf{0.802} & \textbf{0.862} \\
\bottomrule
\end{tabular}
}
\end{table*}
\section{More Results of PULSE}
\subsection{Extended Forecasting Analysis}
\label{app:extended_forecasting}

Table~\ref{tab:full_results} presents the comprehensive performance of PULSE against state-of-the-art baselines under the standard look-back setting $T=96$. 
Beyond aggregate metrics, a deeper inspection of the error degradation trend reveals the root cause of PULSE's superiority: its resistance to error accumulation over long prediction horizons. 
While conventional models often suffer from rapid performance decay, or ``Scale Collapse'', as the forecasting horizon $H$ extends to $720$, PULSE maintains a significantly flatter degradation trajectory. 
A representative example is the volatile \textbf{Solar} dataset, where runner-up baselines exhibit pronounced error increases at long horizons, whereas PULSE preserves more stable predictive accuracy. 
This suggests that the proposed Phase-Anchored Disentanglement does not merely fit short-term local patterns, but actively generates a physically consistent future manifold, thereby mitigating the variance collapse commonly observed in extended forecasting.

To further examine whether this advantage depends on the standard short-context setting, we additionally evaluate longer look-back windows. 
The look-back length of $96$ follows the standard benchmark protocol used in prior studies, while Table~\ref{tab:long_lookback_results} reports the results under longer historical contexts, namely $T \in \{336,512,720\}$. 
For each dataset, the three row groups correspond to look-back lengths $336$, $512$, and $720$, respectively, and each metric is averaged over prediction horizons $H \in \{96,192,336,720\}$. 
The results show that PULSE remains effective beyond the short-context setting and continues to achieve strong overall performance across diverse datasets. 
Notably, longer histories do not invalidate the advantage of PULSE; instead, the model consistently benefits from its phase-evolution mechanism, which extracts stable deterministic structure while suppressing redundant or noisy historical variations. 
This further supports our central claim that PULSE improves long-term forecasting by modeling future phase evolution, rather than by relying on direct historical copying or increasing the look-back window alone.

\subsection{Deep Dive into Plug-and-Play Dynamics}
\label{app:plug_and_play_dynamics}

This intrinsic capability to mitigate long-term error accumulation is not limited to PULSE as a standalone model; it also transfers effectively to other architectures when deployed as a plug-and-play module. 
As detailed in Table~\ref{tab:full_pulg-and-play_results}, this effect is most evident in challenging long-term forecasting scenarios where backbone models typically suffer from substantial error accumulation.

To further isolate this behavior, Table~\ref{tab:plug_play_electricity_t720} reports plug-and-play results on the Electricity dataset with a long look-back window $T=720$. 
The results consistently show that adding PULSE improves different architectural families, including Transformer-based, linear, convolutional, and sparse forecasting models. 
More importantly, the improvement persists across all prediction horizons, indicating that PULSE does not merely provide a short-horizon correction but acts as a stable phase-aware refinement mechanism. 
As the horizon extends to $H=720$, the gains become more pronounced for several backbones, suggesting that PULSE effectively suppresses long-term error accumulation by decoupling stable periodic laws from evolving residual variations. 
Therefore, PULSE functions as a general plug-and-play statistical correction layer, enhancing the robustness of existing forecasters under extended forecasting windows.

\begin{table*}[t]
\setlength{\aboverulesep}{1.0pt} 
\setlength{\belowrulesep}{1.0pt} 
\centering
\caption{Full multivariate time series forecasting results of Table~\ref{tab:results} across all four prediction horizons ($H \in \{96, 192, 336, 720\}$). The look-back length is fixed at 96. We reproduce the results for CFPT~\cite{koucfpt}, TimeXer~\cite{wang2024timexer}, and CycleNet~\cite{lin2024cyclenet}, while other baseline results are sourced from iTransformer~\cite{liuitransformer}. The best results are highlighted in \textbf{bold}, and the second-best are \underline{underlined}.}
\label{tab:full_results}
\fontsize{4.7}{5.7}\selectfont
\setlength{\tabcolsep}{2.6pt} 
\renewcommand{\arraystretch}{1.0} 

\resizebox{\textwidth}{!}{%
\begin{tabular}{cc | cc | cc | cc | cc | cc | cc | cc | cc | cc | cc}
\bottomrule
\multicolumn{2}{c|}{Model}
 & \multicolumn{2}{c|}{\begin{tabular}[c]{@{}c@{}}PULSE\\(ours)\end{tabular}} 
 & \multicolumn{2}{c|}{\begin{tabular}[c]{@{}c@{}}CFPT\\\textcolor{blue}{(\citeyear{koucfpt})}\end{tabular}} 
 & \multicolumn{2}{c|}{\begin{tabular}[c]{@{}c@{}}TimeXer\\\textcolor{blue}{(\citeyear{wang2024timexer})}\end{tabular}} 
 & \multicolumn{2}{c|}{\begin{tabular}[c]{@{}c@{}}CycleNet\\\textcolor{blue}{(\citeyear{lin2024cyclenet})}\end{tabular}} 
 & \multicolumn{2}{c|}{\begin{tabular}[c]{@{}c@{}}iTransformer\\\textcolor{blue}{(\citeyear{liuitransformer})}\end{tabular}} 
 & \multicolumn{2}{c|}{\begin{tabular}[c]{@{}c@{}}MSGNet\\\textcolor{blue}{(\citeyear{cai2024msgnet})}\end{tabular}} 
 & \multicolumn{2}{c|}{\begin{tabular}[c]{@{}c@{}}TimesNet\\\textcolor{blue}{(\citeyear{wutimesnet})}\end{tabular}} 
 & \multicolumn{2}{c|}{\begin{tabular}[c]{@{}c@{}}PatchTST\\\textcolor{blue}{(\citeyear{nie2023a})}\end{tabular}} 
 & \multicolumn{2}{c|}{\begin{tabular}[c]{@{}c@{}}Crossformer\\\textcolor{blue}{(\citeyear{zhang2023crossformer})}\end{tabular}} 
 & \multicolumn{2}{c}{\begin{tabular}[c]{@{}c@{}}DLinear\\\textcolor{blue}{(\citeyear{zeng2023transformers})}\end{tabular}} \\
 \midrule
\multicolumn{2}{c|}{Metric} & MSE & MAE & MSE & MAE & MSE & MAE & MSE & MAE & MSE & MAE & MSE & MAE & MSE & MAE & MSE & MAE & MSE & MAE & MSE & MAE \\
\midrule
\multirow{5}{*}{\rotatebox{90}{ETTh1}} 
& 96  & \textbf{0.362} & \textbf{0.390} & \underline{0.372} & \underline{0.391} & 0.382 & 0.403 & 0.375 & 0.395 & 0.386 & 0.405 & 0.390 & 0.411 & 0.384 & 0.402 & 0.414 & 0.419 & 0.423 & 0.448 & 0.386 & 0.400 \\
& 192 & \textbf{0.408} & \textbf{0.418} & \underline{0.425} & \underline{0.421} & 0.429 & 0.435 & 0.436 & 0.428 & 0.441 & 0.436 & 0.443 & 0.442 & 0.436 & 0.429 & 0.460 & 0.445 & 0.471 & 0.474 & 0.437 & 0.432 \\
& 336 & \textbf{0.443} & \textbf{0.433} & \underline{0.467} & \underline{0.442} & 0.468 & 0.448 & 0.496 & 0.455 & 0.487 & 0.458 & 0.482 & 0.469 & 0.491 & 0.469 & 0.501 & 0.466 & 0.570 & 0.546 & 0.481 & 0.459 \\
& 720 & \textbf{0.433} & \textbf{0.448} & \underline{0.466} & \underline{0.461} & 0.469 & 0.461 & 0.520 & 0.484 & 0.503 & 0.491 & 0.496 & 0.488 & 0.521 & 0.500 & 0.500 & 0.488 & 0.653 & 0.621 & 0.519 & 0.516 \\
\cmidrule{2-22}
& Avg & \textbf{0.412} & \textbf{0.422} & \underline{0.433} & \underline{0.429} & 0.437 & 0.437 & 0.457 & 0.441 & 0.454 & 0.447 & 0.453 & 0.453 & 0.458 & 0.450 & 0.469 & 0.455 & 0.529 & 0.522 & 0.456 & 0.452 \\
\midrule

\multirow{5}{*}{\rotatebox{90}{ETTh2}} 
& 96  & \textbf{0.277} & \textbf{0.330} & \underline{0.285} & \underline{0.336} & 0.286 & 0.338 & 0.298 & 0.344 & 0.297 & 0.349 & 0.329 & 0.371 & 0.340 & 0.374 & 0.302 & 0.348 & 0.745 & 0.584 & 0.333 & 0.387 \\
& 192 & \textbf{0.356} & \textbf{0.381} & \underline{0.363} & \underline{0.388} & \underline{0.363} & 0.389 & 0.372 & 0.396 & 0.380 & 0.400 & 0.402 & 0.414 & 0.402 & 0.414 & 0.388 & 0.400 & 0.877 & 0.656 & 0.477 & 0.476 \\
& 336 & \textbf{0.399} & \textbf{0.417} & \underline{0.413} & 0.426 & 0.414 & \underline{0.423} & 0.431 & 0.439 & 0.428 & 0.432 & 0.440 & 0.445 & 0.452 & 0.452 & 0.426 & 0.433 & 1.043 & 0.731 & 0.594 & 0.541 \\
& 720 & 0.413 & 0.436 & \textbf{0.396} & \textbf{0.422} & \underline{0.408} & \underline{0.432} & 0.450 & 0.458 & 0.427 & 0.445 & 0.480 & 0.477 & 0.462 & 0.468 & 0.431 & 0.446 & 1.104 & 0.763 & 0.831 & 0.657 \\
\cmidrule{2-22}
& Avg & \textbf{0.361} & \textbf{0.391} & \underline{0.364} & \underline{0.393} & 0.368 & 0.396 & 0.388 & 0.409 & 0.383 & 0.407 & 0.413 & 0.427 & 0.414 & 0.427 & 0.387 & 0.407 & 0.942 & 0.684 & 0.559 & 0.515 \\
\midrule

\multirow{5}{*}{\rotatebox{90}{ETTm1}} 
& 96  & \textbf{0.303} & \textbf{0.347} & \underline{0.316} & \underline{0.356} & 0.318 & 0.356 & 0.319 & 0.360 & 0.334 & 0.368 & 0.319 & 0.366 & 0.338 & 0.375 & 0.329 & 0.367 & 0.404 & 0.426 & 0.345 & 0.372 \\
& 192 & \textbf{0.349} & \textbf{0.379} & \underline{0.354} & \underline{0.380} & 0.362 & 0.383 & 0.360 & 0.381 & 0.377 & 0.391 & 0.377 & 0.397 & 0.374 & 0.387 & 0.367 & 0.385 & 0.450 & 0.451 & 0.380 & 0.389 \\
& 336 & \textbf{0.375} & \textbf{0.398} & \underline{0.383} & \underline{0.400} & 0.395 & 0.407 & 0.389 & 0.403 & 0.426 & 0.420 & 0.417 & 0.422 & 0.410 & 0.411 & 0.399 & 0.410 & 0.532 & 0.515 & 0.413 & 0.413 \\
& 720 & \textbf{0.424} & \textbf{0.426} & \underline{0.444} & \underline{0.434} & 0.452 & 0.441 & 0.447 & 0.441 & 0.491 & 0.459 & 0.487 & 0.463 & 0.478 & 0.450 & 0.454 & 0.439 & 0.666 & 0.589 & 0.474 & 0.453 \\
\cmidrule{2-22}
& Avg & \textbf{0.363} & \textbf{0.388} & \underline{0.374} & \underline{0.393} & 0.382 & 0.397 & 0.379 & 0.396 & 0.407 & 0.410 & 0.400 & 0.412 & 0.400 & 0.406 & 0.387 & 0.400 & 0.513 & 0.495 & 0.403 & 0.407 \\
\midrule

\multirow{5}{*}{\rotatebox{90}{ETTm2}} 
& 96  & \textbf{0.163} & 0.248 & 0.167 & 0.249 & 0.171 & 0.256 & \textbf{0.163} & \textbf{0.246} & 0.180 & 0.264 & 0.182 & 0.266 & 0.187 & 0.267 & 0.175 & 0.259 & 0.287 & 0.366 & 0.193 & 0.292 \\
& 192 & \textbf{0.227} & \underline{0.291} & 0.232 & 0.292 & 0.237 & 0.299 & \underline{0.229} & \textbf{0.290} & 0.250 & 0.309 & 0.248 & 0.306 & 0.249 & 0.309 & 0.241 & 0.302 & 0.414 & 0.492 & 0.284 & 0.362 \\
& 336 & \textbf{0.283} & \underline{0.330} & 0.290 & 0.331 & 0.296 & 0.338 & \underline{0.284} & \textbf{0.327} & 0.311 & 0.348 & 0.312 & 0.346 & 0.321 & 0.351 & 0.305 & 0.343 & 0.597 & 0.542 & 0.369 & 0.427 \\
& 720 & \textbf{0.374} & \textbf{0.387} & \underline{0.385} & \underline{0.389} & 0.392 & 0.394 & 0.389 & 0.391 & 0.412 & 0.407 & 0.414 & 0.404 & 0.408 & 0.403 & 0.402 & 0.400 & 1.730 & 1.042 & 0.554 & 0.522 \\
\cmidrule{2-22}
& Avg & \textbf{0.262} & \textbf{0.314} & 0.269 & \underline{0.315} & 0.274 & 0.322 & \underline{0.266} & \textbf{0.314} & 0.288 & 0.332 & 0.289 & 0.330 & 0.291 & 0.333 & 0.281 & 0.326 & 0.757 & 0.611 & 0.350 & 0.401 \\
\midrule

\multirow{5}{*}{\rotatebox{90}{Electricity}} 
& 96  & \textbf{0.134} & \textbf{0.228} & \underline{0.136} & \underline{0.231} & 0.140 & 0.242 & \underline{0.136} & 0.229 & 0.148 & 0.240 & 0.165 & 0.274 & 0.168 & 0.272 & 0.181 & 0.270 & 0.219 & 0.314 & 0.197 & 0.282 \\
& 192 & \textbf{0.150} & \textbf{0.243} & \underline{0.153} & 0.246 & 0.157 & 0.256 & \underline{0.152} & \underline{0.244} & 0.162 & 0.253 & 0.185 & 0.292 & 0.184 & 0.289 & 0.188 & 0.274 & 0.231 & 0.322 & 0.196 & 0.285 \\
& 336 & \textbf{0.166} & \textbf{0.261} & \underline{0.168} & 0.265 & 0.176 & 0.275 & 0.170 & \underline{0.264} & 0.178 & 0.269 & 0.197 & 0.304 & 0.198 & 0.300 & 0.204 & 0.293 & 0.246 & 0.337 & 0.209 & 0.301 \\
& 720 & \textbf{0.188} & \textbf{0.289} & \underline{0.199} & \underline{0.293} & 0.211 & 0.306 & 0.212 & 0.299 & 0.225 & 0.317 & 0.231 & 0.332 & 0.220 & 0.320 & 0.246 & 0.324 & 0.280 & 0.363 & 0.245 & 0.333 \\
\cmidrule{2-22}
& Avg & \textbf{0.160} & \textbf{0.255} & \underline{0.164} & \underline{0.259} & 0.171 & 0.270 & 0.168 & 0.259 & 0.178 & 0.270 & 0.194 & 0.301 & 0.193 & 0.295 & 0.205 & 0.290 & 0.244 & 0.334 & 0.212 & 0.300 \\
\midrule
\multirow{5}{*}{\rotatebox{90}{Solar}} 
& 96  & \textbf{0.176} & \textbf{0.229} & 0.197 & \underline{0.242} & 0.215 & 0.295 & \underline{0.190} & 0.247 & 0.203 & 0.237 & 0.210 & 0.246 & 0.250 & 0.292 & 0.234 & 0.286 & 0.310 & 0.331 & 0.290 & 0.378 \\
& 192 & \textbf{0.192} & \textbf{0.245} & 0.227 & \underline{0.261} & 0.236 & 0.301 & \underline{0.210} & 0.266 & 0.233 & \underline{0.261} & 0.265 & 0.290 & 0.296 & 0.318 & 0.267 & 0.310 & 0.734 & 0.725 & 0.320 & 0.398 \\
& 336 & \textbf{0.200} & \textbf{0.247} & 0.248 & 0.277 & 0.252 & 0.307 & \underline{0.217} & \underline{0.266} & 0.248 & 0.273 & 0.294 & 0.318 & 0.319 & 0.330 & 0.290 & 0.315 & 0.750 & 0.735 & 0.353 & 0.415 \\
& 720 & \textbf{0.208} & \textbf{0.249} & 0.259 & 0.282 & 0.244 & 0.305 & \underline{0.223} & \underline{0.266} & 0.249 & 0.275 & 0.285 & 0.315 & 0.338 & 0.337 & 0.289 & 0.317 & 0.769 & 0.765 & 0.356 & 0.413 \\
\cmidrule{2-22}
& Avg & \textbf{0.194} & \textbf{0.243} & 0.233 & 0.267 & 0.237 & 0.302 & \underline{0.210} & \underline{0.261} & 0.233 & 0.262 & 0.263 & 0.292 & 0.301 & 0.319 & 0.270 & 0.307 & 0.641 & 0.639 & 0.330 & 0.401 \\
\midrule

\multirow{5}{*}{\rotatebox{90}{Traffic}} 
& 96  & 0.414 & \textbf{0.268} & 0.444 & 0.274 & 0.428 & 0.271 & 0.458 & 0.296 & \textbf{0.395} & \underline{0.268} & 0.608 & 0.349 & 0.593 & 0.321 & 0.462 & 0.290 & 0.522 & 0.290 & 0.650 & 0.396 \\
& 192 & 0.444 & \underline{0.278} & 0.460 & 0.280 & 0.448 & 0.282 & 0.457 & 0.294 & \textbf{0.417} & \textbf{0.276} & 0.634 & 0.371 & 0.617 & 0.336 & 0.466 & 0.290 & 0.530 & 0.293 & 0.598 & 0.370 \\
& 336 & 0.458 & 0.287 & 0.477 & 0.289 & 0.473 & 0.289 & 0.470 & 0.299 & \textbf{0.433} & \textbf{0.283} & 0.669 & 0.388 & 0.629 & 0.336 & 0.482 & 0.300 & 0.558 & 0.305 & 0.605 & 0.373 \\
& 720 & 0.525 & 0.312 & \underline{0.499} & 0.313 & 0.516 & 0.307 & 0.502 & 0.314 & \textbf{0.467} & \textbf{0.302} & 0.729 & 0.420 & 0.640 & 0.350 & 0.514 & 0.320 & 0.589 & 0.328 & 0.645 & 0.394 \\
\cmidrule{2-22}
& Avg & 0.460 & 0.286 & 0.470 & 0.289 & 0.466 & 0.287 & 0.472 & 0.301 & \textbf{0.428} & \textbf{0.282} & 0.660 & 0.382 & 0.620 & 0.336 & 0.481 & 0.300 & 0.550 & 0.304 & 0.625 & 0.383 \\
\midrule

\multirow{5}{*}{\rotatebox{90}{Weather}} 
& 96  & \textbf{0.150} & \textbf{0.196} & \underline{0.154} & \underline{0.200} & 0.157 & 0.205 & 0.158 & 0.203 & 0.174 & 0.214 & 0.163 & 0.212 & 0.172 & 0.220 & 0.177 & 0.210 & 0.158 & 0.230 & 0.196 & 0.255 \\
& 192 & \textbf{0.200} & 0.245 & \underline{0.203} & \textbf{0.242} & 0.204 & 0.247 & 0.207 & 0.247 & 0.221 & 0.254 & 0.211 & 0.254 & 0.219 & 0.261 & 0.225 & 0.250 & 0.206 & 0.277 & 0.237 & 0.296 \\
& 336 & \textbf{0.260} & 0.291 & \underline{0.261} & \textbf{0.286} & \underline{0.261} & 0.290 & 0.262 & 0.289 & 0.278 & 0.296 & 0.273 & 0.299 & 0.280 & 0.306 & 0.278 & 0.290 & 0.272 & 0.335 & 0.283 & 0.335 \\
& 720 & 0.347 & 0.353 & \textbf{0.340} & \textbf{0.339} & \textbf{0.340} & \underline{0.341} & 0.344 & 0.344 & 0.358 & 0.349 & 0.351 & 0.348 & 0.365 & 0.359 & 0.354 & 0.340 & 0.398 & 0.418 & 0.345 & 0.381 \\
\cmidrule{2-22}
& Avg & \textbf{0.239} & 0.271 & \underline{0.240} & \textbf{0.267} & 0.241 & \underline{0.271} & 0.243 & \underline{0.271} & 0.258 & 0.278 & 0.249 & 0.278 & 0.259 & 0.287 & 0.259 & 0.273 & 0.259 & 0.315 & 0.265 & 0.317 \\
\midrule

\multirow{5}{*}{\rotatebox{90}{PEMS03}} 
& 12  & \textbf{0.062} & \textbf{0.165} & 0.070 & 0.178 & 0.070 & 0.173 & \underline{0.066} & \underline{0.172} & 0.071 & 0.174 & 0.078 & 0.187 & 0.085 & 0.192 & 0.099 & 0.216 & 0.090 & 0.203 & 0.122 & 0.243 \\
& 24  & \textbf{0.079} & \textbf{0.188} & 0.101 & 0.213 & 0.092 & 0.194 & \underline{0.089} & 0.201 & 0.093 & 0.201 & 0.108 & 0.218 & 0.118 & 0.223 & 0.142 & 0.259 & 0.121 & 0.240 & 0.201 & 0.317 \\
& 48  & \textbf{0.111} & \textbf{0.221} & 0.162 & 0.272 & 0.129 & 0.229 & 0.136 & 0.247 & \underline{0.125} & 0.236 & 0.178 & 0.272 & 0.155 & 0.260 & 0.211 & 0.319 & 0.202 & 0.317 & 0.333 & 0.425 \\
& 96  & \underline{0.160} & \textbf{0.263} & 0.239 & 0.336 & \textbf{0.157} & \underline{0.261} & 0.182 & 0.282 & 0.164 & 0.275 & 0.238 & 0.328 & 0.228 & 0.317 & 0.269 & 0.370 & 0.262 & 0.367 & 0.457 & 0.515 \\
\cmidrule{2-22}
& Avg & \textbf{0.103} & \textbf{0.209} & 0.143 & 0.250 & \underline{0.112} & \underline{0.214} & 0.118 & 0.226 & 0.113 & 0.222 & 0.150 & 0.251 & 0.147 & 0.248 & 0.180 & 0.291 & 0.169 & 0.282 & 0.278 & 0.375 \\
\midrule

\multirow{5}{*}{\rotatebox{90}{PEMS04}} 
& 12  & \textbf{0.071} & \textbf{0.172} & 0.088 & 0.197 & \underline{0.074} & \underline{0.178} & 0.078 & 0.186 & 0.078 & 0.183 & 0.086 & 0.199 & 0.087 & 0.195 & 0.105 & 0.224 & 0.098 & 0.218 & 0.148 & 0.272 \\
& 24  & \textbf{0.084} & \underline{0.191} & 0.114 & 0.226 & \underline{0.087} & \textbf{0.181} & 0.099 & 0.212 & 0.095 & 0.205 & 0.101 & 0.218 & 0.103 & 0.215 & 0.153 & 0.275 & 0.131 & 0.256 & 0.224 & 0.340 \\
& 48  & \textbf{0.107} & \underline{0.218} & 0.178 & 0.286 & \underline{0.110} & \textbf{0.214} & 0.133 & 0.248 & 0.120 & 0.233 & 0.127 & 0.247 & 0.136 & 0.250 & 0.229 & 0.339 & 0.205 & 0.326 & 0.355 & 0.437 \\
& 96  & \underline{0.155} & \underline{0.258} & 0.252 & 0.347 & \textbf{0.148} & \textbf{0.251} & 0.167 & 0.281 & 0.150 & 0.262 & 0.174 & 0.292 & 0.190 & 0.303 & 0.291 & 0.389 & 0.402 & 0.457 & 0.452 & 0.504 \\
\cmidrule{2-22}
& Avg & \textbf{0.104} & \underline{0.210} & 0.158 & 0.264 & \underline{0.105} & \textbf{0.209} & 0.119 & 0.232 & 0.111 & 0.221 & 0.122 & 0.239 & 0.129 & 0.241 & 0.195 & 0.307 & 0.209 & 0.314 & 0.295 & 0.388 \\
\midrule

\multirow{5}{*}{\rotatebox{90}{PEMS07}} 
& 12  & \textbf{0.056} & \textbf{0.151} & 0.064 & 0.164 & \underline{0.057} & \underline{0.152} & 0.062 & 0.162 & 0.067 & 0.165 & 0.079 & 0.182 & 0.082 & 0.181 & 0.095 & 0.207 & 0.094 & 0.200 & 0.115 & 0.242 \\
& 24  & \textbf{0.073} & \textbf{0.172} & 0.091 & 0.196 & \underline{0.079} & \underline{0.179} & 0.086 & 0.192 & 0.088 & 0.190 & 0.099 & 0.206 & 0.101 & 0.204 & 0.150 & 0.262 & 0.139 & 0.247 & 0.210 & 0.329 \\
& 48  & \textbf{0.099} & \underline{0.200} & 0.141 & 0.247 & \textbf{0.099} & \textbf{0.191} & 0.128 & 0.234 & 0.110 & 0.215 & 0.133 & 0.239 & 0.134 & 0.238 & 0.253 & 0.340 & 0.311 & 0.369 & 0.398 & 0.458 \\
& 96  & 0.146 & 0.237 & 0.202 & 0.297 & \textbf{0.107} & \textbf{0.205} & 0.176 & 0.268 & \underline{0.139} & \underline{0.245} & 0.179 & 0.279 & 0.181 & 0.279 & 0.346 & 0.404 & 0.396 & 0.442 & 0.594 & 0.553 \\
\cmidrule{2-22}
& Avg & \underline{0.094} & \underline{0.190} & 0.125 & 0.226 & \textbf{0.085} & \textbf{0.182} & 0.113 & 0.214 & 0.101 & 0.204 & 0.122 & 0.227 & 0.125 & 0.226 & 0.211 & 0.303 & 0.235 & 0.315 & 0.329 & 0.396 \\
\midrule

\multirow{5}{*}{\rotatebox{90}{PEMS08}} 
& 12  & \textbf{0.075} & \textbf{0.176} & 0.085 & 0.190 & \textbf{0.075} & \underline{0.176} & 0.082 & 0.185 & \underline{0.079} & 0.182 & 0.105 & 0.211 & 0.112 & 0.212 & 0.168 & 0.232 & 0.165 & 0.214 & 0.154 & 0.276 \\
& 24  & \underline{0.104} & \textbf{0.197} & 0.118 & 0.225 & \textbf{0.102} & \underline{0.201} & 0.117 & 0.226 & 0.115 & 0.219 & 0.141 & 0.243 & 0.141 & 0.238 & 0.224 & 0.281 & 0.215 & 0.260 & 0.248 & 0.353 \\
& 48  & \textbf{0.141} & \textbf{0.229} & 0.204 & 0.290 & \underline{0.158} & \underline{0.248} & 0.169 & 0.268 & 0.186 & 0.235 & 0.211 & 0.300 & 0.198 & 0.283 & 0.321 & 0.354 & 0.315 & 0.355 & 0.440 & 0.470 \\
& 96  & \underline{0.233} & \underline{0.278} & 0.342 & 0.360 & 0.366 & 0.377 & 0.233 & 0.306 & \textbf{0.221} & \textbf{0.267} & 0.364 & 0.387 & 0.320 & 0.351 & 0.408 & 0.417 & 0.377 & 0.397 & 0.674 & 0.565 \\
\cmidrule{2-22}
& Avg & \textbf{0.138} & \textbf{0.220} & 0.187 & 0.266 & 0.175 & 0.250 & \underline{0.150} & 0.246 & \underline{0.150} & \underline{0.226} & 0.205 & 0.285 & 0.193 & 0.271 & 0.280 & 0.321 & 0.268 & 0.307 & 0.379 & 0.416 \\
\midrule

\multicolumn{2}{c|}{\textbf{1\textsuperscript{st} Count}} & \textbf{47} & \textbf{40} & 2 & 5 & 8 & 7 & 1 & 4 & 6 & 5 & 0 & 0 & 0 & 0 & 0 & 0 & 0 & 0 & 0 & 0 \\

\bottomrule
\end{tabular}%
}
\end{table*}
\begin{table*}[t]
\centering
\caption{Detailed evaluation of plug-and-play capability. This table presents the specific results corresponding to Table~\ref{tab:plug and play}. `Prom.' denotes the relative improvement based on the average results (Avg) across all horizons. Best results are highlighted in \textbf{bold}.}
\label{tab:full_pulg-and-play_results}
\renewcommand{\arraystretch}{1.1} 
\setlength{\tabcolsep}{3pt}      
\tiny
\resizebox{\textwidth}{!}{%
\begin{tabular}{cc|cccc|cccc|cccc|cccc}
\toprule
\multicolumn{2}{c}{Methods} & 
\multicolumn{4}{c}{iTransformer} & 
\multicolumn{4}{c}{PatchTST} & 
\multicolumn{4}{c}{TimesNet} & 
\multicolumn{4}{c}{DLinear} \\
\cmidrule(lr){3-6} \cmidrule(lr){7-10} \cmidrule(lr){11-14} \cmidrule(lr){15-18}
\multicolumn{2}{c}{Metric} & 
\multicolumn{2}{c}{Original} & \multicolumn{2}{c}{+PULSE} & 
\multicolumn{2}{c}{Original} & \multicolumn{2}{c}{+PULSE} & 
\multicolumn{2}{c}{Original} & \multicolumn{2}{c}{+PULSE} & 
\multicolumn{2}{c}{Original} & \multicolumn{2}{c}{+PULSE} \\
\cmidrule(lr){3-4} \cmidrule(lr){5-6} \cmidrule(lr){7-8} \cmidrule(lr){9-10} \cmidrule(lr){11-12} \cmidrule(lr){13-14} \cmidrule(lr){15-16} \cmidrule(lr){17-18}
Data & $H$ & MSE & MAE & MSE & MAE & MSE & MAE & MSE & MAE & MSE & MAE & MSE & MAE & MSE & MAE & MSE & MAE \\
\midrule

\multirow{6}{*}{\rotatebox{90}{ETTh1}} 
& 96 & 0.386 & 0.405 & \textbf{0.364} & \textbf{0.389} & 0.414 & 0.419 & \textbf{0.398} & \textbf{0.408} & 0.384 & \textbf{0.402} & \textbf{0.380} & 0.406 & 0.386 & 0.400 & \textbf{0.376} & \textbf{0.389} \\ 
& 192 & 0.441 & 0.436 & \textbf{0.402} & \textbf{0.415} & 0.460 & 0.445 & \textbf{0.431} & \textbf{0.435} & 0.436 & \textbf{0.429} & \textbf{0.407} & 0.430 & 0.437 & 0.432 & \textbf{0.420} & \textbf{0.418} \\ 
& 336 & 0.487 & 0.458 & \textbf{0.428} & \textbf{0.434} & 0.501 & 0.466 & \textbf{0.449} & \textbf{0.459} & 0.491 & 0.515 & \textbf{0.448} & \textbf{0.456} & 0.481 & 0.459 & \textbf{0.455} & \textbf{0.434} \\ 
& 720 & 0.503 & 0.491 & \textbf{0.441} & \textbf{0.451} & 0.500 & \textbf{0.488} & \textbf{0.480} & 0.489 & 0.521 & 0.558 & \textbf{0.451} & \textbf{0.424} & 0.519 & 0.516 & \textbf{0.461} & \textbf{0.455} \\ 
& \textit{Avg} & 0.454 & 0.447 & \textbf{0.409} & \textbf{0.422} & 0.469 & 0.454 & \textbf{0.440} & \textbf{0.448} & 0.458 & 0.507 & \textbf{0.422} & \textbf{0.429} & 0.456 & 0.452 & \textbf{0.428} & \textbf{0.424} \\ 
& Prom. & \multicolumn{1}{c}{--}&\multicolumn{1}{c}{--} & \textbf{9.9\%} & \textbf{5.6\%} &\multicolumn{1}{c}{--} &\multicolumn{1}{c}{--} & \textbf{6.2\%} & \textbf{1.3\%} & \multicolumn{1}{c}{--}&\multicolumn{1}{c}{--} & \textbf{7.9\%} & \textbf{15.4\%} &\multicolumn{1}{c}{--} & \multicolumn{1}{c}{--}& \textbf{6.1\%} & \textbf{6.2\%} \\ 
\midrule

\multirow{6}{*}{\rotatebox{90}{ETTh2}} 
& 96 & 0.297 & 0.349 & \textbf{0.274} & \textbf{0.332} & 0.302 & 0.348 & \textbf{0.291} & \textbf{0.346} & 0.340 & 0.374 & \textbf{0.295} & \textbf{0.347} & 0.333 & 0.387 & \textbf{0.290} & \textbf{0.335} \\ 
& 192 & 0.380 & 0.400 & \textbf{0.365} & \textbf{0.388} & 0.388 & 0.400 & \textbf{0.369} & \textbf{0.392} & 0.402 & 0.414 & \textbf{0.380} & \textbf{0.398} & 0.477 & 0.467 & \textbf{0.372} & \textbf{0.388} \\ 
& 336 & 0.428 & 0.432 & \textbf{0.416} & \textbf{0.428} & 0.426 & 0.433 & \textbf{0.415} & \textbf{0.430} & 0.452 & 0.452 & \textbf{0.424} & \textbf{0.439} & 0.594 & 0.541 & \textbf{0.408} & \textbf{0.421} \\ 
& 720 & \textbf{0.427} & \textbf{0.445} & 0.432 & 0.448 & 0.431 & 0.446 & \textbf{0.423} & \textbf{0.445} & 0.462 & 0.468 & \textbf{0.432} & \textbf{0.452} & 0.831 & 0.657 & \textbf{0.421} & \textbf{0.440} \\ 
& \textit{Avg} & 0.383 & 0.407 & \textbf{0.372} & \textbf{0.399} & 0.387 & 0.407 & \textbf{0.375} & \textbf{0.403} & 0.414 & 0.427 & \textbf{0.383} & \textbf{0.409} & 0.559 & 0.515 & \textbf{0.373} & \textbf{0.396} \\ 
& Prom. &\multicolumn{1}{c}{--} &\multicolumn{1}{c}{--} & \textbf{2.9\%} & \textbf{2.0\%} &\multicolumn{1}{c}{--} & \multicolumn{1}{c}{--}& \textbf{3.1\%} & \textbf{1.0\%} &\multicolumn{1}{c}{--} & \multicolumn{1}{c}{--}& \textbf{7.5\%} & \textbf{4.2\%} &\multicolumn{1}{c}{--} &\multicolumn{1}{c}{--} & \textbf{33.3\%} & \textbf{23.1\%} \\ 
\midrule

\multirow{6}{*}{\rotatebox{90}{ETTm1}} 
& 96 & 0.386 & 0.405 & \textbf{0.300} & \textbf{0.342} & 0.329 & 0.367 & \textbf{0.313} & \textbf{0.359} & 0.338 & 0.375 & \textbf{0.319} & \textbf{0.362} & 0.345 & 0.372 & \textbf{0.314} & \textbf{0.354} \\ 
& 192 & 0.441 & 0.436 & \textbf{0.344} & \textbf{0.371} & 0.367 & \textbf{0.385} & \textbf{0.354} & 0.387 & 0.374 & 0.387 & \textbf{0.360} & \textbf{0.386} & 0.380 & 0.389 & \textbf{0.363} & \textbf{0.386} \\ 
& 336 & 0.487 & 0.458 & \textbf{0.372} & \textbf{0.392} & 0.399 & 0.410 & \textbf{0.378} & \textbf{0.405} & 0.410 & 0.411 & \textbf{0.378} & \textbf{0.399} & 0.413 & 0.413 & \textbf{0.389} & \textbf{0.402} \\ 
& 720 & 0.503 & 0.491 & \textbf{0.430} & \textbf{0.428} & 0.454 & 0.439 & \textbf{0.423} & \textbf{0.433} & 0.478 & 0.450 & \textbf{0.432} & \textbf{0.439} & 0.474 & 0.453 & \textbf{0.439} & \textbf{0.433} \\ 
& \textit{Avg} & 0.407 & 0.410 & \textbf{0.362} & \textbf{0.383} & 0.387 & 0.400 & \textbf{0.367} & \textbf{0.396} & 0.400 & 0.406 & \textbf{0.372} & \textbf{0.397} & 0.403 & 0.407 & \textbf{0.376} & \textbf{0.394} \\ 
& Prom. &\multicolumn{1}{c}{--} & \multicolumn{1}{c}{--}& \textbf{11.1\%} & \textbf{6.6\%} &\multicolumn{1}{c}{--} &\multicolumn{1}{c}{--} & \textbf{5.2\%} & \textbf{1.0\%} & \multicolumn{1}{c}{--}&\multicolumn{1}{c}{--} & \textbf{7.0\%} & \textbf{2.2\%} &\multicolumn{1}{c}{--} &\multicolumn{1}{c}{--} & \textbf{6.7\%} & \textbf{3.2\%} \\ 
\midrule

\multirow{6}{*}{\rotatebox{90}{ETTm2}} 
& 96 & 0.180 & 0.264 & \textbf{0.165} & \textbf{0.248} & 0.175 & 0.259 & \textbf{0.164} & \textbf{0.249} & 0.187 & 0.267 & \textbf{0.176} & \textbf{0.263} & 0.193 & 0.292 & \textbf{0.170} & \textbf{0.255} \\ 
& 192 & 0.250 & 0.309 & \textbf{0.229} & \textbf{0.291} & 0.241 & 0.302 & \textbf{0.231} & \textbf{0.297} & 0.249 & 0.309 & \textbf{0.200} & \textbf{0.271} & 0.284 & 0.362 & \textbf{0.235} & \textbf{0.299} \\ 
& 336 & 0.311 & 0.348 & \textbf{0.286} & \textbf{0.329} & 0.305 & 0.343 & \textbf{0.289} & \textbf{0.335} & 0.321 & 0.351 & \textbf{0.304} & \textbf{0.343} & 0.369 & 0.427 & \textbf{0.295} & \textbf{0.338} \\ 
& 720 & 0.412 & 0.407 & \textbf{0.395} & \textbf{0.394} & 0.402 & 0.400 & \textbf{0.392} & \textbf{0.395} & 0.408 & 0.403 & \textbf{0.395} & \textbf{0.396} & 0.554 & 0.522 & \textbf{0.396} & \textbf{0.395} \\ 
& \textit{Avg} & 0.288 & 0.332 & \textbf{0.269} & \textbf{0.316} & 0.281 & 0.326 & \textbf{0.269} & \textbf{0.319} & 0.291 & 0.333 & \textbf{0.269} & \textbf{0.318} & 0.350 & 0.401 & \textbf{0.274} & \textbf{0.322} \\ 
& Prom. & \multicolumn{1}{c}{--}& \multicolumn{1}{c}{--}& \textbf{6.6\%} & \textbf{4.8\%} &\multicolumn{1}{c}{--} &\multicolumn{1}{c}{--} & \textbf{4.3\%} & \textbf{2.1\%} &\multicolumn{1}{c}{--} &\multicolumn{1}{c}{--} & \textbf{7.6\%} & \textbf{4.5\%} &\multicolumn{1}{c}{--} &\multicolumn{1}{c}{--} & \textbf{21.7\%} & \textbf{19.7\%} \\ 
\midrule

\multirow{6}{*}{\rotatebox{90}{Electricity}} 
& 96 & 0.148 & 0.240 & \textbf{0.129} & \textbf{0.224} & 0.181 & 0.270 & \textbf{0.156} & \textbf{0.263} & 0.168 & 0.272 & \textbf{0.158} & \textbf{0.262} & 0.197 & 0.282 & \textbf{0.141} & \textbf{0.236} \\ 
& 192 & 0.162 & \textbf{0.253} & \textbf{0.156} & 0.254 & 0.188 & 0.274 & \textbf{0.165} & \textbf{0.265} & 0.184 & 0.289 & \textbf{0.171} & \textbf{0.275} & 0.196 & 0.285 & \textbf{0.160} & \textbf{0.256} \\ 
& 336 & 0.178 & 0.269 & \textbf{0.158} & \textbf{0.258} & 0.204 & 0.293 & \textbf{0.182} & \textbf{0.283} & 0.198 & 0.300 & \textbf{0.185} & \textbf{0.290} & 0.209 & 0.301 & \textbf{0.175} & \textbf{0.275} \\ 
& 720 & 0.225 & 0.317 & \textbf{0.192} & \textbf{0.295} & 0.246 & 0.324 & \textbf{0.200} & \textbf{0.304} & 0.220 & \textbf{0.320} & \textbf{0.213} & 0.322 & 0.245 & 0.333 & \textbf{0.197} & \textbf{0.299} \\ 
& \textit{Avg} & 0.178 & 0.270 & \textbf{0.159} & \textbf{0.258} & 0.205 & 0.290 & \textbf{0.176} & \textbf{0.279} & 0.192 & 0.295 & \textbf{0.182} & \textbf{0.287} & 0.212 & 0.300 & \textbf{0.168} & \textbf{0.267} \\ 
& Prom. &\multicolumn{1}{c}{--} &\multicolumn{1}{c}{--} & \textbf{10.7\%} & \textbf{4.4\%} &\multicolumn{1}{c}{--} &\multicolumn{1}{c}{--} & \textbf{14.1\%} & \textbf{3.8\%} &\multicolumn{1}{c}{--} &\multicolumn{1}{c}{--} & \textbf{5.2\%} & \textbf{2.7\%} &\multicolumn{1}{c}{--} &\multicolumn{1}{c}{--} & \textbf{20.8\%} & \textbf{11.0\%} \\ 
\midrule

\multirow{6}{*}{\rotatebox{90}{Traffic}} 
& 96 & 0.395 & 0.268 & \textbf{0.389} & \textbf{0.253} & \textbf{0.462} & 0.295 & 0.523 & \textbf{0.294} & 0.593 & 0.321 & \textbf{0.521} & \textbf{0.297} & 0.650 & 0.396 & \textbf{0.504} & \textbf{0.300} \\ 
& 192 & 0.417 & 0.276 & \textbf{0.413} & \textbf{0.264} & \textbf{0.466} & 0.296 & 0.541 & \textbf{0.291} & 0.617 & 0.336 & \textbf{0.535} & \textbf{0.302} & 0.598 & 0.370 & \textbf{0.504} & \textbf{0.307} \\ 
& 336 & \textbf{0.433} & 0.283 & 0.433 & \textbf{0.275} & \textbf{0.482} & 0.304 & 0.584 & \textbf{0.300} & 0.629 & \textbf{0.336} & \textbf{0.591} & 0.354 & 0.605 & 0.373 & \textbf{0.512} & \textbf{0.314} \\ 
& 720 & 0.467 & 0.302 & \textbf{0.461} & \textbf{0.295} & \textbf{0.514} & 0.322 & 0.610 & \textbf{0.318} & 0.640 & 0.350 & \textbf{0.565} & \textbf{0.324} & 0.645 & 0.394 & \textbf{0.541} & \textbf{0.331} \\ 
& \textit{Avg} & 0.428 & 0.282 & \textbf{0.424} & \textbf{0.272} & \textbf{0.481} & 0.304 & 0.565 & \textbf{0.301} & 0.620 & 0.336 & \textbf{0.553} & \textbf{0.319} & 0.625 & 0.383 & \textbf{0.515} & \textbf{0.313} \\ 
& Prom. &\multicolumn{1}{c}{--} &\multicolumn{1}{c}{--} & \textbf{0.9\%} & \textbf{3.5\%} &\multicolumn{1}{c}{--} &\multicolumn{1}{c}{--} & \textcolor{red}{-17.5\%} & \textbf{1.0\%} &\multicolumn{1}{c}{--} & \multicolumn{1}{c}{--}& \textbf{10.8\%} & \textbf{5.1\%} &\multicolumn{1}{c}{--} &\multicolumn{1}{c}{--} & \textbf{17.6\%} & \textbf{18.3\%} \\ 
\midrule

\multirow{6}{*}{\rotatebox{90}{Weather}} 
& 96 & 0.174 & 0.241 & \textbf{0.145} & \textbf{0.191} & 0.177 & 0.218 & \textbf{0.152} & \textbf{0.198} & 0.172 & 0.220 & \textbf{0.159} & \textbf{0.212} & 0.196 & 0.255 & \textbf{0.164} & \textbf{0.216} \\ 
& 192 & 0.221 & \textbf{0.254} & \textbf{0.204} & 0.254 & 0.225 & 0.259 & \textbf{0.208} & \textbf{0.252} & 0.219 & \textbf{0.261} & \textbf{0.211} & 0.263 & 0.237 & 0.296 & \textbf{0.210} & \textbf{0.258} \\ 
& 336 & 0.278 & \textbf{0.296} & \textbf{0.264} & 0.301 & 0.278 & 0.297 & \textbf{0.263} & \textbf{0.294} & 0.280 & \textbf{0.306} & \textbf{0.227} & 0.310 & 0.283 & 0.335 & \textbf{0.269} & \textbf{0.301} \\ 
& 720 & 0.358 & \textbf{0.347} & \textbf{0.334} & 0.352 & 0.354 & \textbf{0.348} & \textbf{0.345} & 0.350 & 0.365 & 0.359 & \textbf{0.338} & \textbf{0.355} & \textbf{0.345} & 0.381 & 0.357 & \textbf{0.364} \\ 
& \textit{Avg} & 0.258 & 0.282 & \textbf{0.237} & \textbf{0.275} & 0.259 & 0.281 & \textbf{0.242} & \textbf{0.274} & 0.259 & 0.287 & \textbf{0.234} & \textbf{0.285} & 0.265 & 0.317 & \textbf{0.250} & \textbf{0.285} \\ 
& Prom. &\multicolumn{1}{c}{--} &\multicolumn{1}{c}{--} & \textbf{8.1\%} & \textbf{2.5\%} &\multicolumn{1}{c}{--} &\multicolumn{1}{c}{--} & \textbf{6.6\%} & \textbf{2.5\%} &\multicolumn{1}{c}{--} &\multicolumn{1}{c}{--} & \textbf{9.7\%} & \textbf{0.7\%} &\multicolumn{1}{c}{--} & \multicolumn{1}{c}{--}& \textbf{5.7\%} & \textbf{10.1\%} \\ 
\midrule

\multirow{6}{*}{\rotatebox{90}{Solar}} 
& 96 & 0.203 & \textbf{0.237} & \textbf{0.166} & 0.240 & 0.234 & 0.286 & \textbf{0.186} & \textbf{0.247} & 0.250 & 0.292 & \textbf{0.162} & \textbf{0.238} & 0.290 & 0.378 & \textbf{0.187} & \textbf{0.246} \\ 
& 192 & 0.233 & \textbf{0.261} & \textbf{0.197} & 0.266 & 0.267 & 0.310 & \textbf{0.219} & \textbf{0.288} & 0.296 & 0.318 & \textbf{0.200} & \textbf{0.271} & 0.320 & 0.398 & \textbf{0.218} & \textbf{0.274} \\ 
& 336 & 0.248 & 0.273 & \textbf{0.200} & \textbf{0.260} & 0.290 & 0.315 & \textbf{0.221} & \textbf{0.283} & 0.319 & 0.330 & \textbf{0.239} & \textbf{0.291} & 0.353 & 0.415 & \textbf{0.225} & \textbf{0.285} \\ 
& 720 & 0.249 & 0.275 & \textbf{0.208} & \textbf{0.262} & 0.289 & 0.317 & \textbf{0.239} & \textbf{0.301} & 0.338 & 0.337 & \textbf{0.210} & \textbf{0.265} & 0.356 & 0.413 & \textbf{0.235} & \textbf{0.281} \\ 
& \textit{Avg} & 0.233 & 0.262 & \textbf{0.193} & \textbf{0.257} & 0.270 & 0.307 & \textbf{0.216} & \textbf{0.280} & 0.301 & 0.319 & \textbf{0.203} & \textbf{0.266} & 0.330 & 0.401 & \textbf{0.216} & \textbf{0.272} \\ 
& Prom. &\multicolumn{1}{c}{--} &\multicolumn{1}{c}{--} & \textbf{17.2\%} & \textbf{1.9\%} &\multicolumn{1}{c}{--} &\multicolumn{1}{c}{--} & \textbf{20.0\%} & \textbf{8.8\%} & \multicolumn{1}{c}{--}&\multicolumn{1}{c}{--} & \textbf{32.6\%} & \textbf{16.6\%} &\multicolumn{1}{c}{--} &\multicolumn{1}{c}{--} & \textbf{34.5\%} & \textbf{32.2\%} \\ 
\bottomrule
\end{tabular}%
}
\end{table*}

\subsection{Additional MASE-based Evaluation}
\label{app:mase_results}

Beyond MSE and MAE, we further report MASE to evaluate whether the improvement of PULSE remains consistent under a scale-normalized metric. 
MASE normalizes the forecasting error by the error of a seasonal naive baseline:
\begin{equation}
\mathrm{MASE}
=
\frac{
\frac{1}{n}\sum_{t=1}^{n}
\left|y_t-\hat{y}_t\right|
}{
\frac{1}{n-m}\sum_{t=m+1}^{n}
\left|y_t-y_{t-m}\right|
+\epsilon
},
\end{equation}
where $m$ is the seasonal period and $\epsilon$ is a small constant for numerical stability.

We consider two MASE settings. 
In Table~\ref{tab:additional_mase_results}, MASE is computed with $m=1$ under the standard look-back window $T=96$, and all metrics are averaged over prediction horizons $H \in \{96,192,336,720\}$. 
In Table~\ref{tab:mase_plug_play_results}, MASE is computed with $m=96$ under the longer look-back window $T=720$ to evaluate plug-and-play transferability. 
When $m=1$, MASE values greater than one are expected and should not be interpreted as an isolated failure. 
This is because all models perform direct multi-step forecasting in a one-shot manner, while the naive reference with $m=1$ relies on immediate lag-one observations and can therefore be relatively strong. 
Thus, the key comparison is whether different methods are evaluated under the same protocol, where lower MASE indicates better normalized forecasting accuracy.

Table~\ref{tab:additional_mase_results} reports results on five normal-scale datasets from TFB, including AQShunyi, AQWan, ZafNoo, CzeLan, and Wind~\cite{10.14778/3665844.3665863}, together with the large-scale SD dataset from LargeST~\cite{liu2023largest}. 
Across all six datasets, PULSE consistently achieves the best MSE, MAE, and MASE. 
This shows that the advantage of PULSE is not limited to scale-dependent metrics, but also holds when the error is normalized by a seasonal naive baseline. 
The gains are especially meaningful on challenging datasets such as CzeLan and Wind, where relatively large MASE values indicate a stronger naive reference and a more demanding normalized forecasting task.

Table~\ref{tab:mase_plug_play_results} further evaluates whether this normalized improvement transfers to existing backbone models. 
On Wind, AQShunyi, and AQWan with $T=720$, adding PULSE reduces MASE across nearly all backbone--horizon combinations. 
This confirms that PULSE not only performs strongly as a standalone forecaster, but also serves as a general plug-and-play correction module that improves normalized forecasting accuracy for diverse backbones.

\section{More Visualization Results}

In this section, we provide extended qualitative visualizations to demonstrate the effectiveness of PULSE. 
Specifically, Figure~\ref{fig:combined_tsne_visuals} presents the T-SNE analysis of the learned phase anchors compared with raw data. 
Figure~\ref{fig:multi_dataset_visuals} provides additional forecasting visualizations on Electricity, Solar, Traffic, and Weather.
Figure~\ref{fig:pulse_comparison_full_page} provides a comprehensive comparison of the plug-and-play forecasting results across four baseline models (DLinear, PatchTST, TimesNet, and iTransformer) on the Electricity dataset.

\begin{figure*}[t]
    \centering
    \begin{subfigure}{\textwidth}
        \centering
        \includegraphics[width=\textwidth]{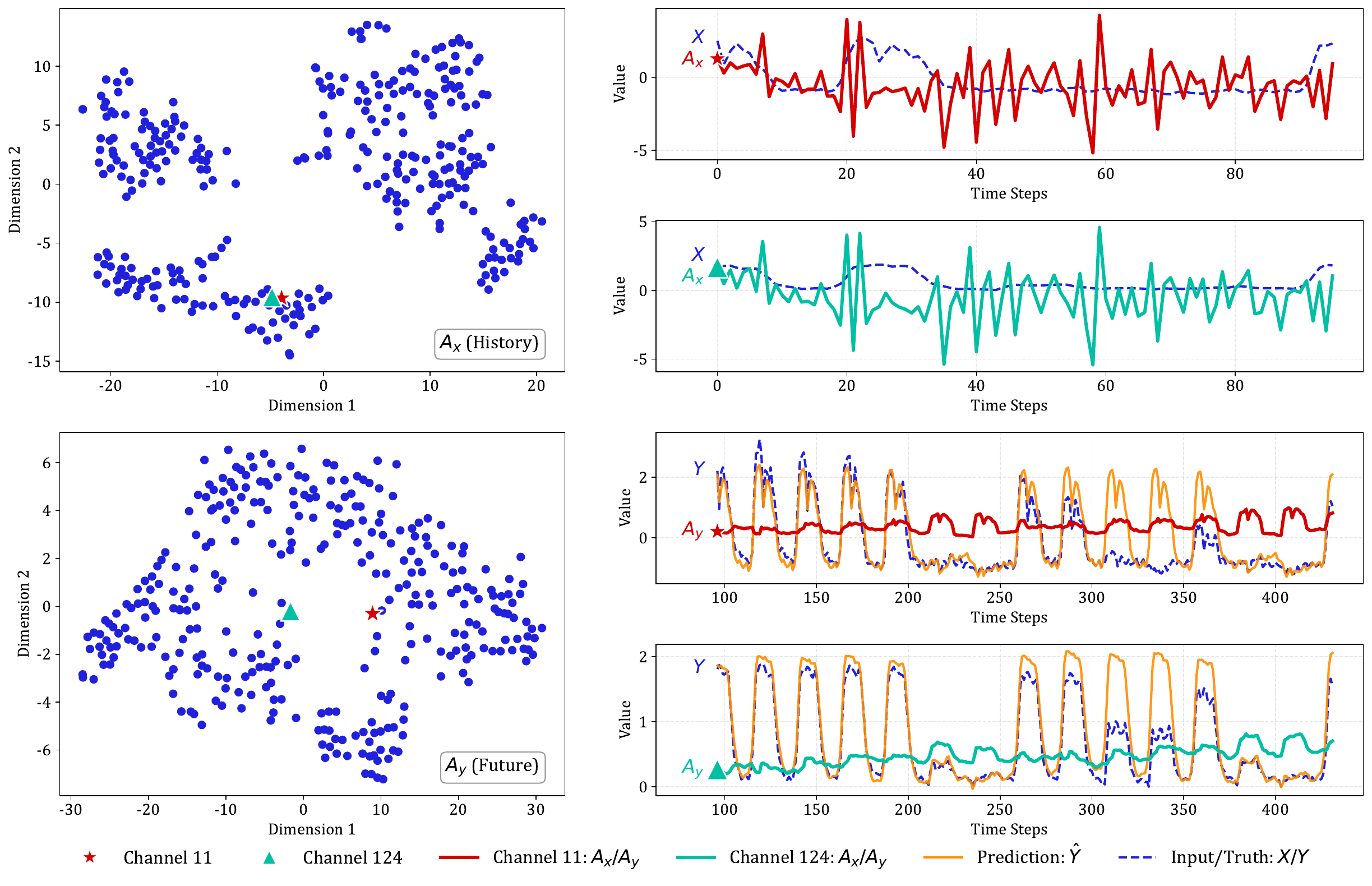}
    \end{subfigure}
    \vspace{-10pt}
    \begin{subfigure}{\textwidth}
        \centering
        \includegraphics[width=\textwidth]{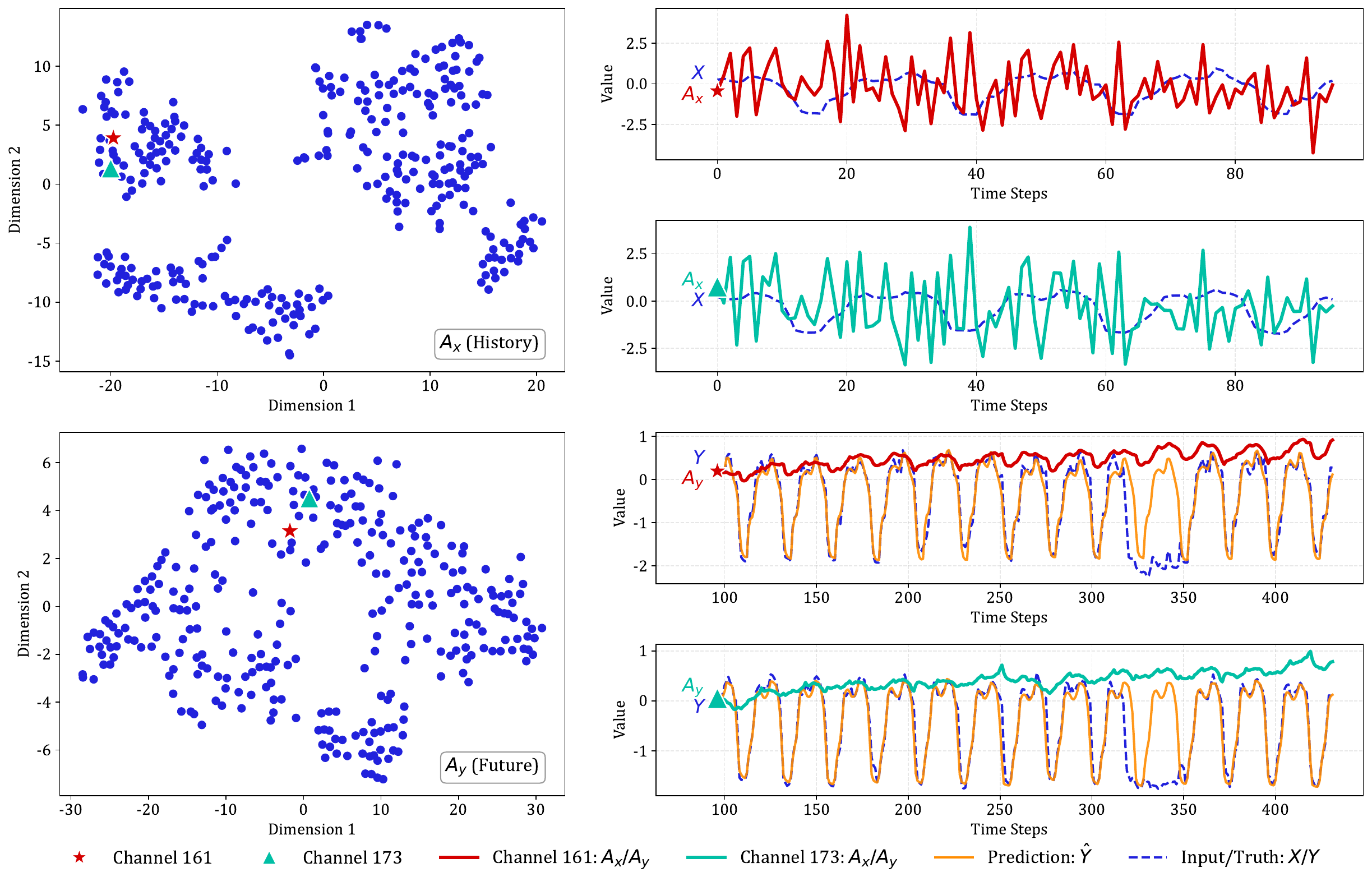}
    \end{subfigure}
    \caption{Visual analysis of learned Phase Anchors versus raw data on the Electricity dataset via T-SNE. The visualization illustrates that PULSE successfully captures the underlying structural similarities between distinct channels, effectively mapping input sequences $X$ and their corresponding future ground truth $Y$ into a coherent representation space.}
    \label{fig:combined_tsne_visuals}
\end{figure*}

\begin{figure*}[t] 
    \centering
    \newcommand{\figheight}{0.21\textheight} 
    
    \begin{subfigure}[b]{0.48\textwidth}
        \centering
        \includegraphics[width=\linewidth,height=\figheight]{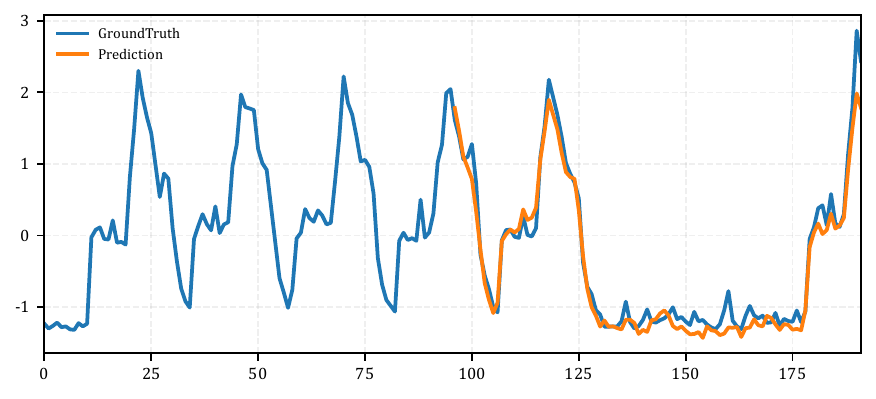} 
    \end{subfigure}
    \hfill
    \begin{subfigure}[b]{0.48\textwidth}
        \centering
        \includegraphics[width=\linewidth,height=\figheight]{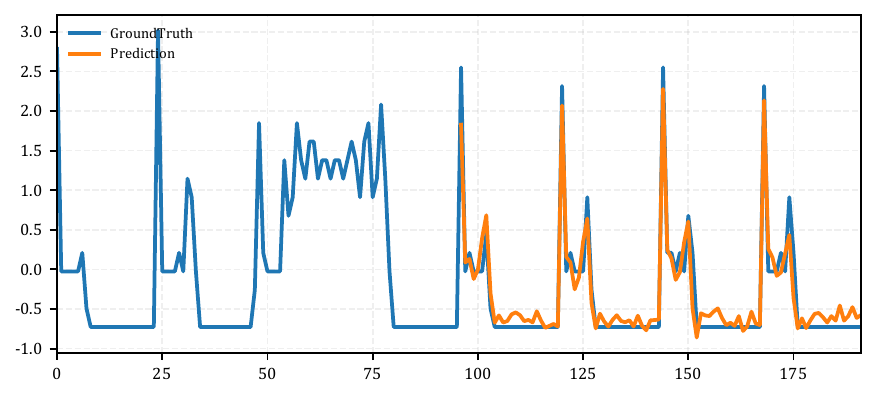}
    \end{subfigure}
    \vspace{-2pt}
    \caption*{\textbf{(a) Electricity}}
    
    \begin{subfigure}[b]{0.48\textwidth}
        \centering
        \includegraphics[width=\linewidth,height=\figheight]{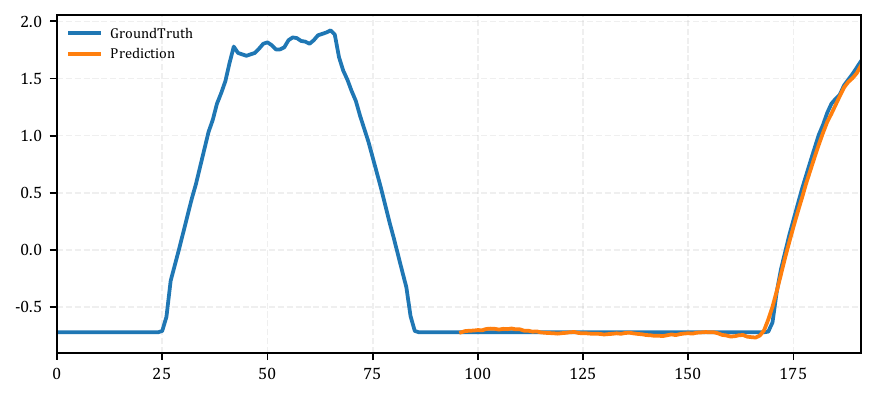}
    \end{subfigure}
    \hfill
    \begin{subfigure}[b]{0.48\textwidth}
        \centering
        \includegraphics[width=\linewidth,height=\figheight]{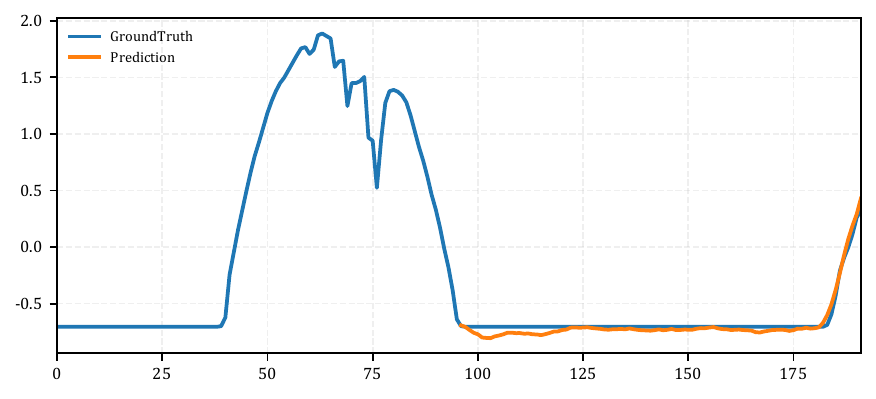}
    \end{subfigure}
    \vspace{-2pt}
    \caption*{\textbf{(b) Solar}}
    
    \begin{subfigure}[b]{0.48\textwidth}
        \centering
        \includegraphics[width=\linewidth,height=\figheight]{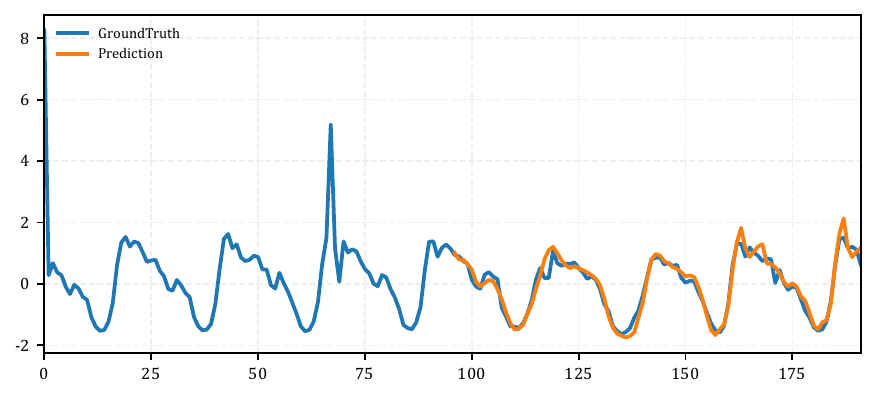}
    \end{subfigure}
    \hfill
    \begin{subfigure}[b]{0.48\textwidth}
        \centering
        \includegraphics[width=\linewidth,height=\figheight]{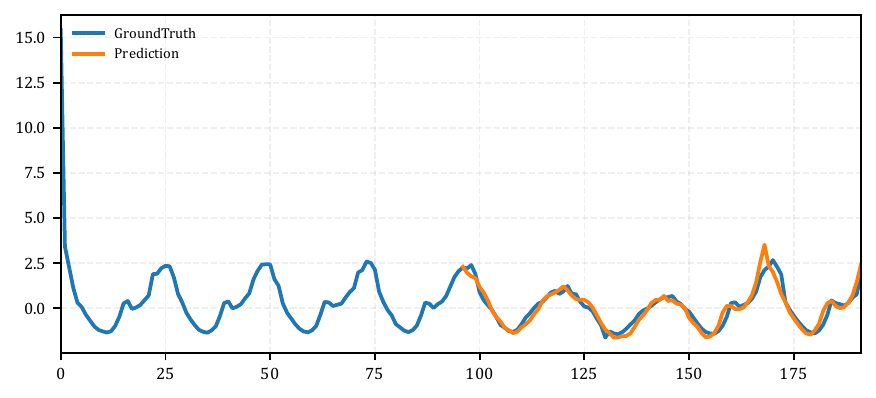}
    \end{subfigure}
    \vspace{-2pt}
    \caption*{\textbf{(c) Traffic}}
    
    \begin{subfigure}[b]{0.48\textwidth}
        \centering
        \includegraphics[width=\linewidth,height=\figheight]{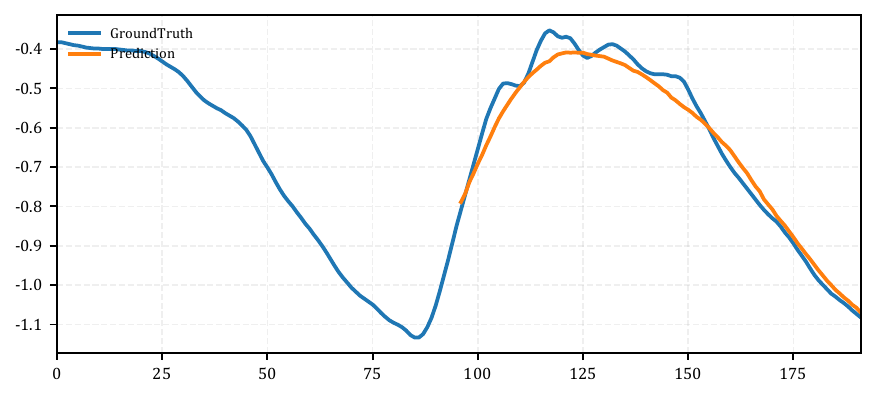}
    \end{subfigure}
    \hfill
    \begin{subfigure}[b]{0.48\textwidth}
        \centering
        \includegraphics[width=\linewidth,height=\figheight]{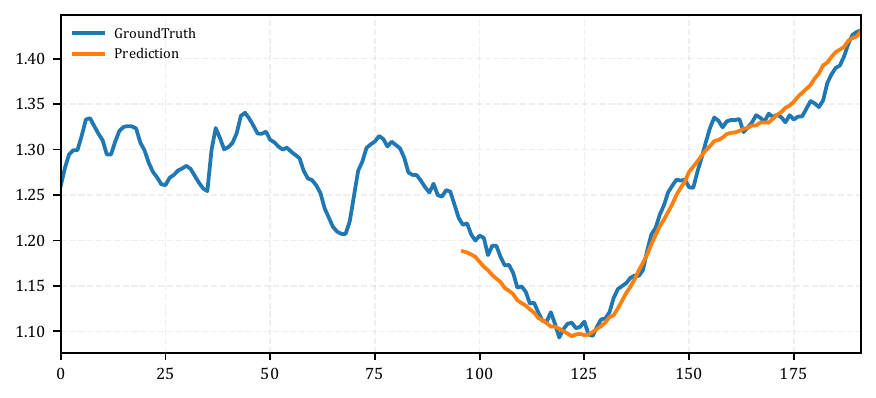}
    \end{subfigure}
    \vspace{-2pt}
    \caption*{\textbf{(d) Weather}}
    
    \caption{Visualization of the forecasting results on four representative datasets. 
    From top to bottom, the results are shown on Electricity, Solar, Traffic, and Weather. 
    For each dataset, two representative forecasting cases are displayed side by side, comparing the predicted sequence with the corresponding ground truth.}
    \label{fig:multi_dataset_visuals}
\end{figure*}

\begin{figure*}[t] 
    \centering
    
    \begin{subfigure}[b]{0.48\textwidth}
        \centering
        \includegraphics[width=\linewidth]{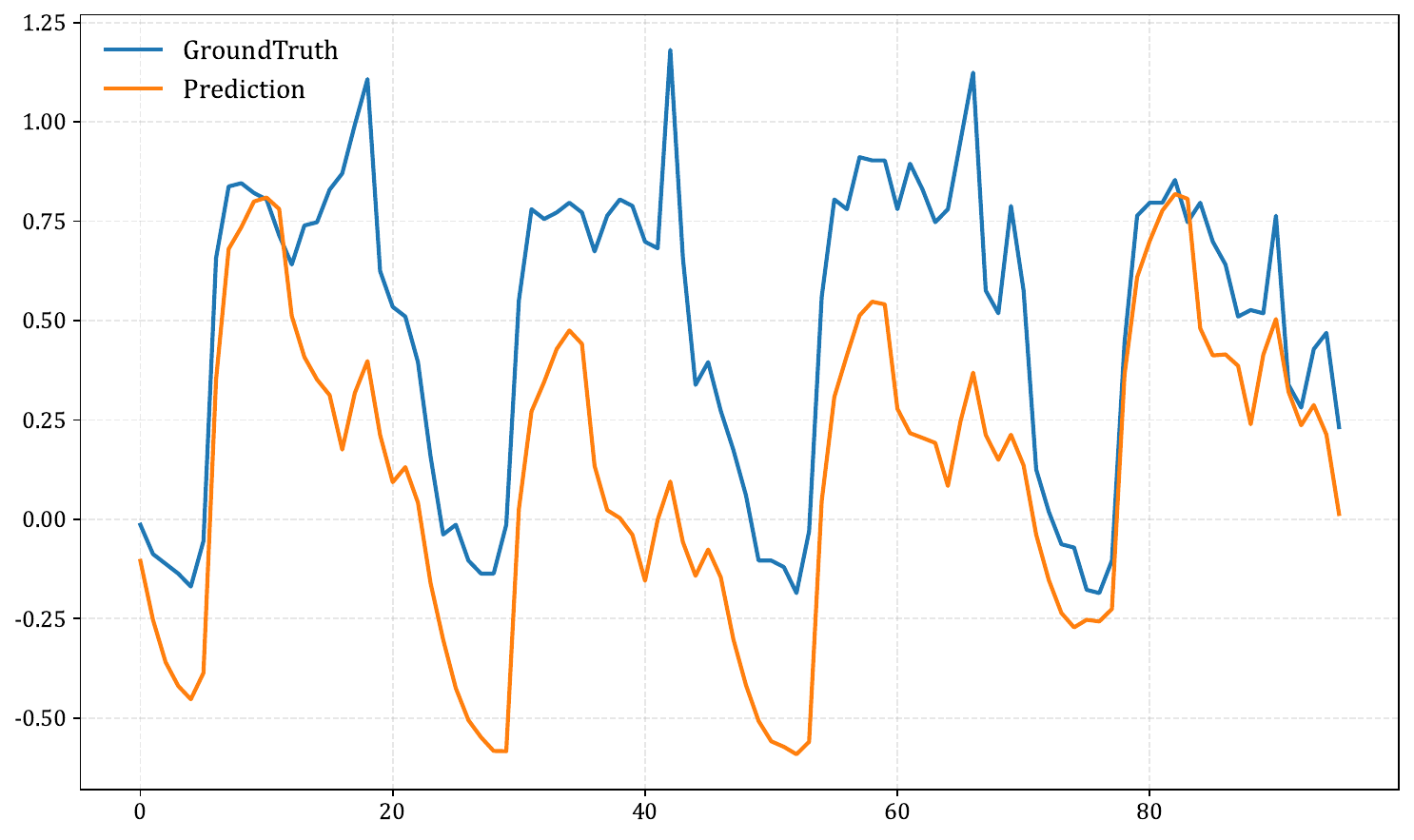} 
        \caption*{DLinear}
    \end{subfigure}
    \hfill
    \begin{subfigure}[b]{0.48\textwidth}
        \centering
        \includegraphics[width=\linewidth]{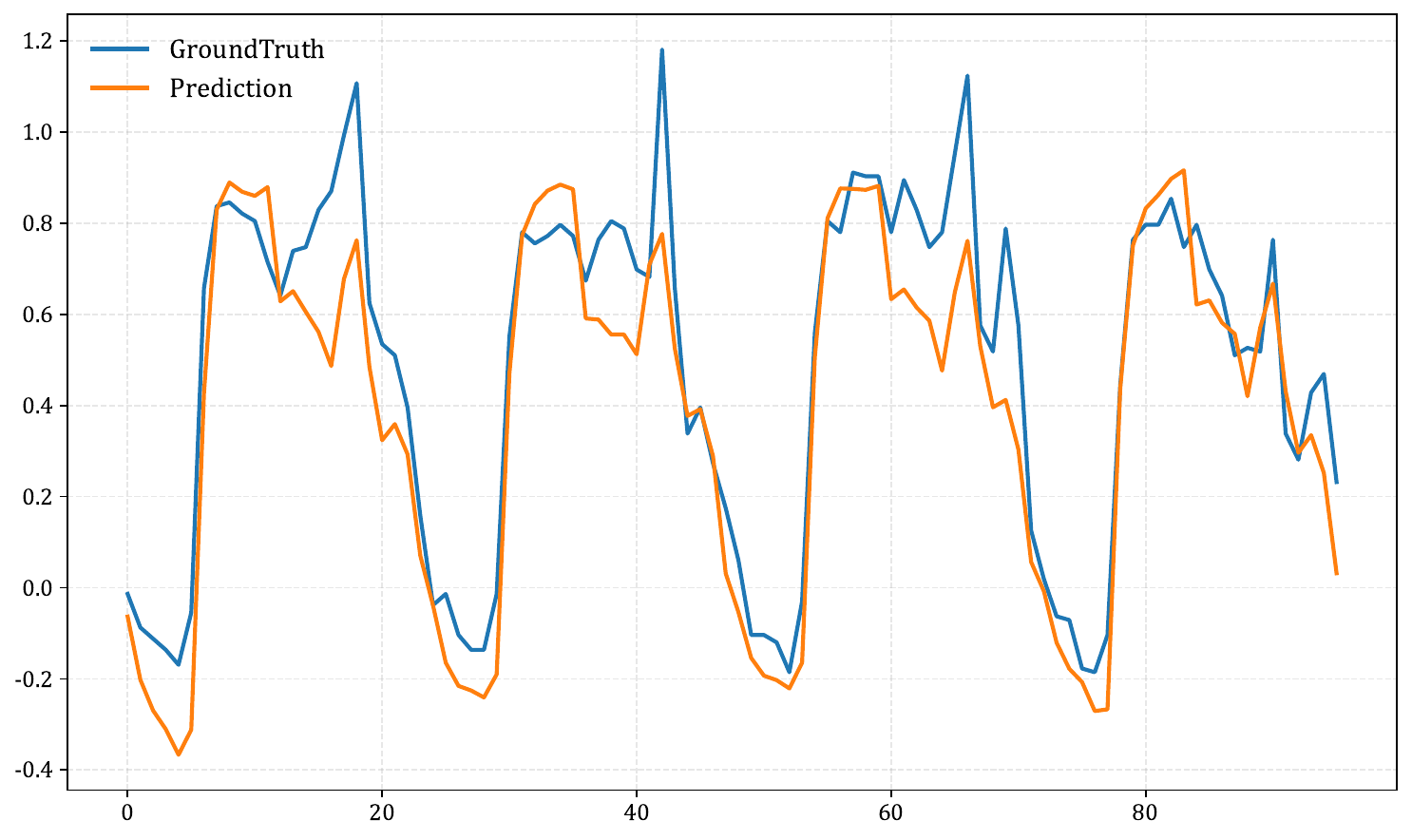}
        \caption*{DLinear + PULSE}
    \end{subfigure}
    \vfill 
    
    \begin{subfigure}[b]{0.48\textwidth}
        \centering
        \includegraphics[width=\linewidth]{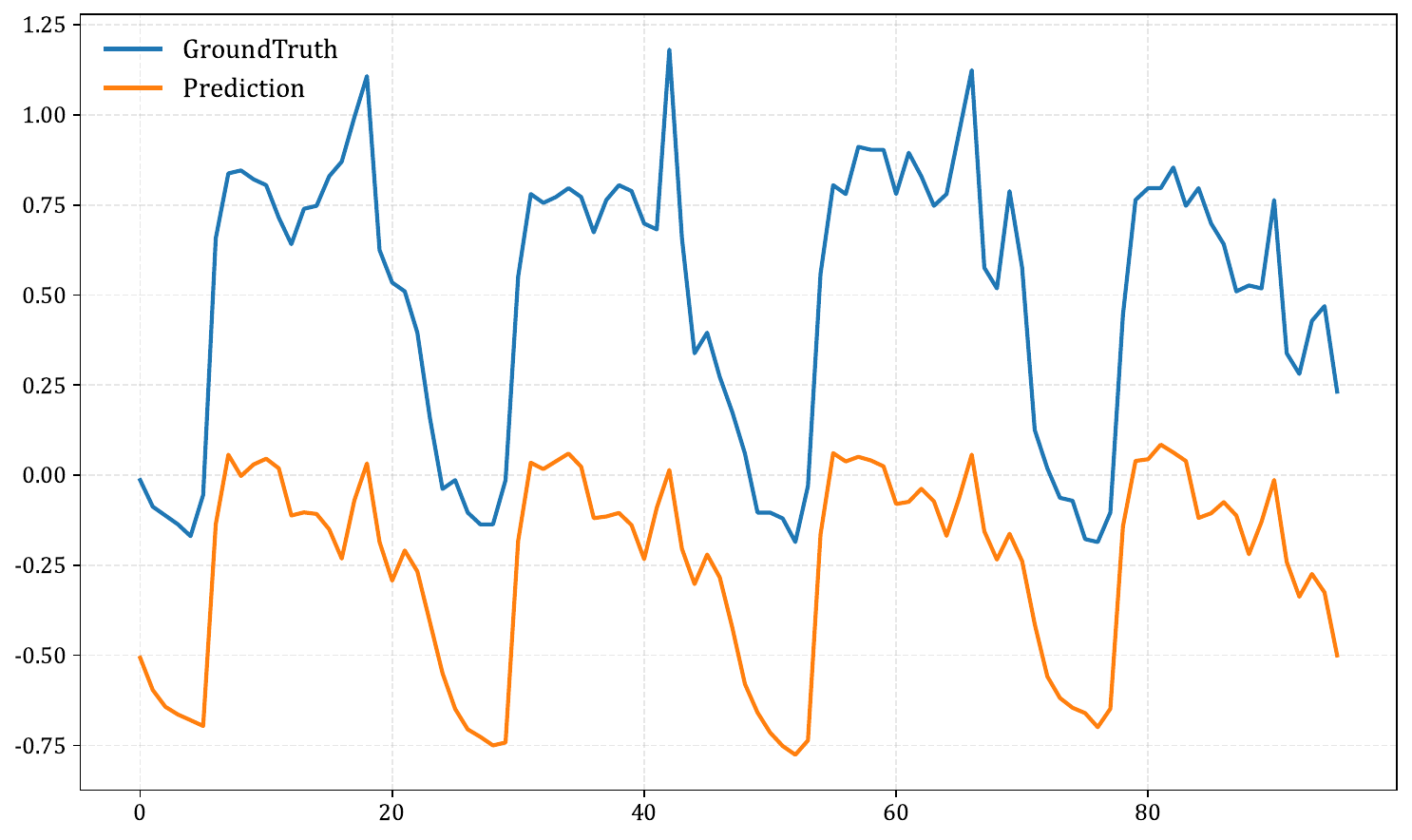}
        \caption*{PatchTST}
    \end{subfigure}
    \hfill
    \begin{subfigure}[b]{0.48\textwidth}
        \centering
        \includegraphics[width=\linewidth]{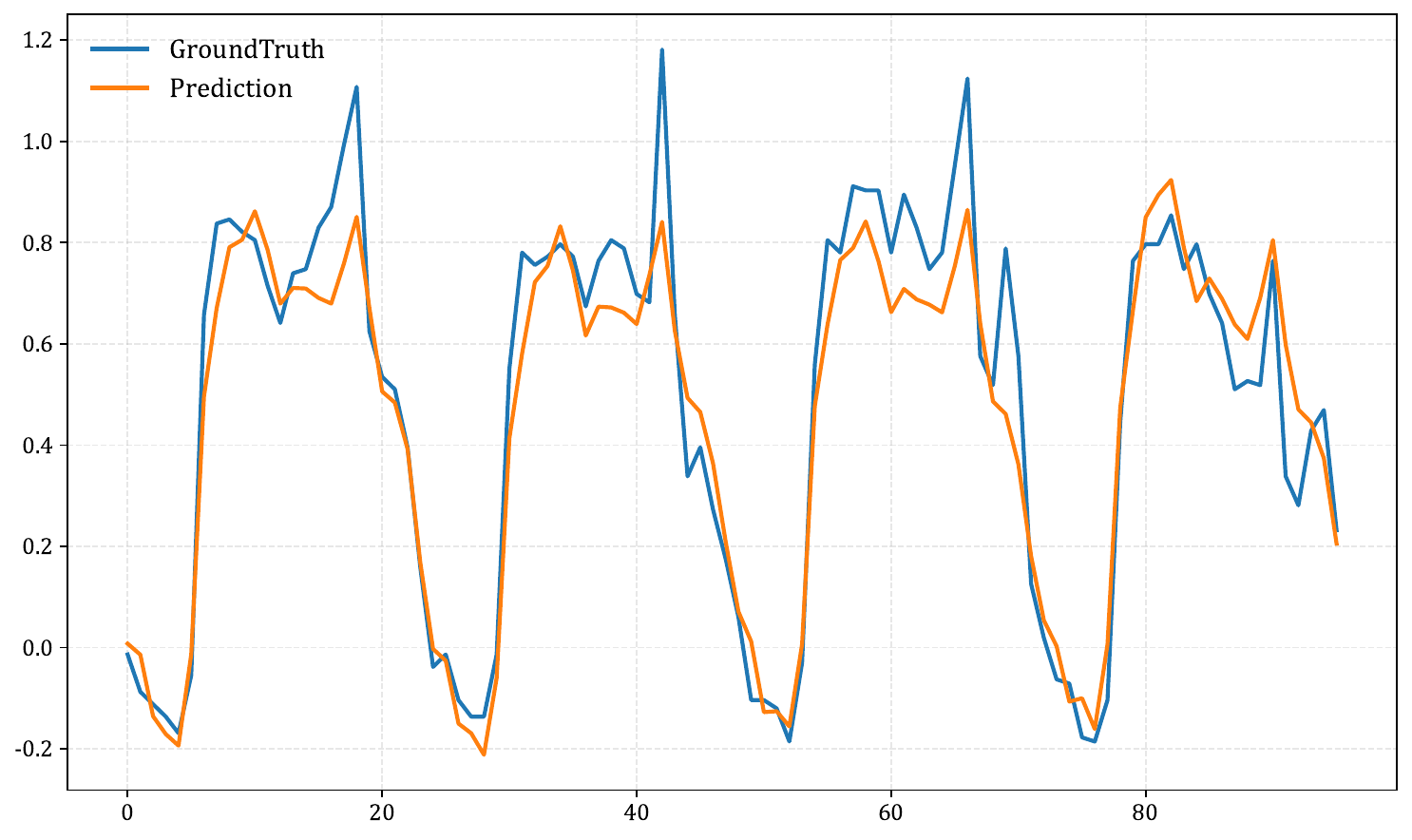}
        \caption*{PatchTST + PULSE}
    \end{subfigure}
    \vfill
    
    \begin{subfigure}[b]{0.48\textwidth}
        \centering
        \includegraphics[width=\linewidth]{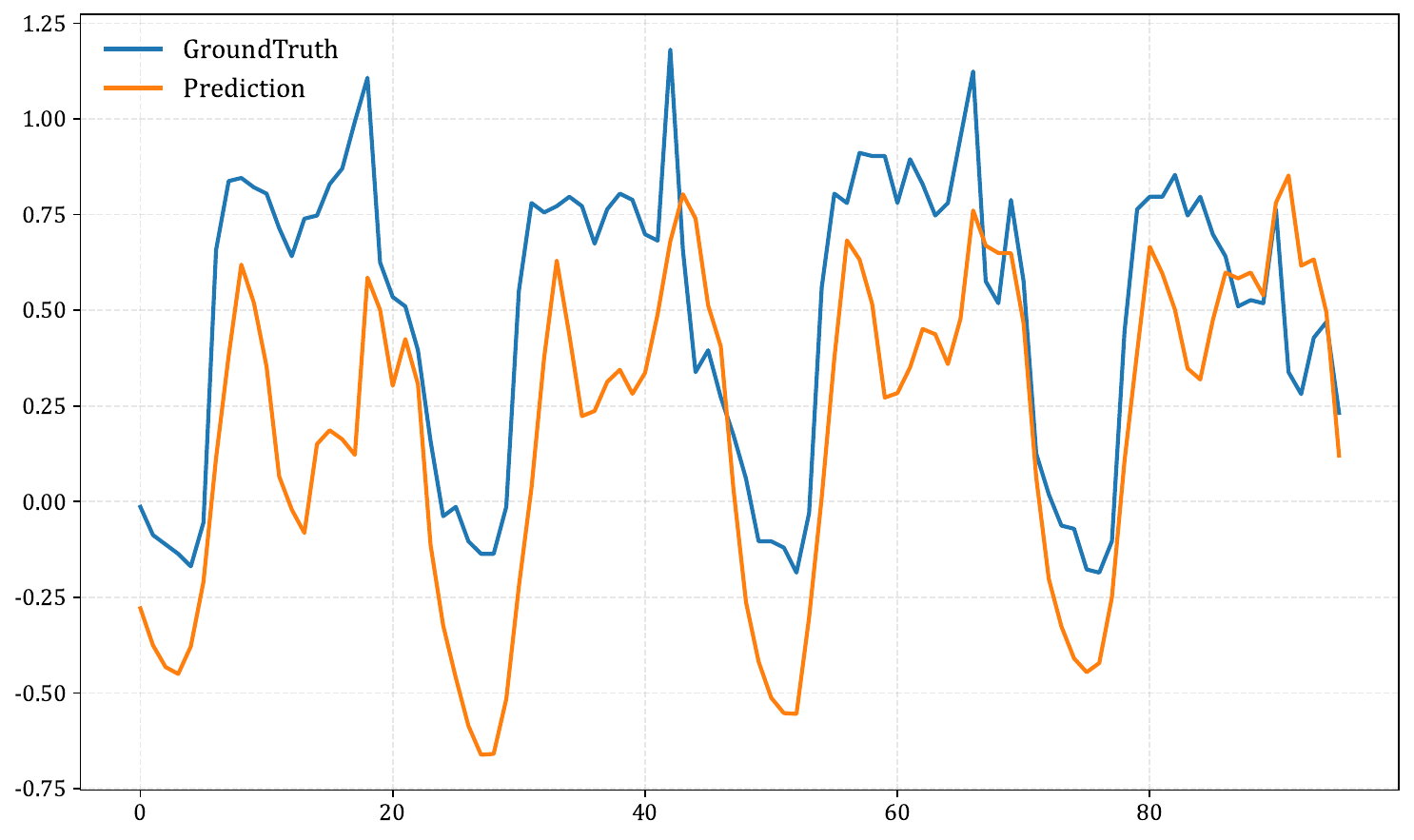}
        \caption*{TimesNet}
    \end{subfigure}
    \hfill
    \begin{subfigure}[b]{0.48\textwidth}
        \centering
        \includegraphics[width=\linewidth]{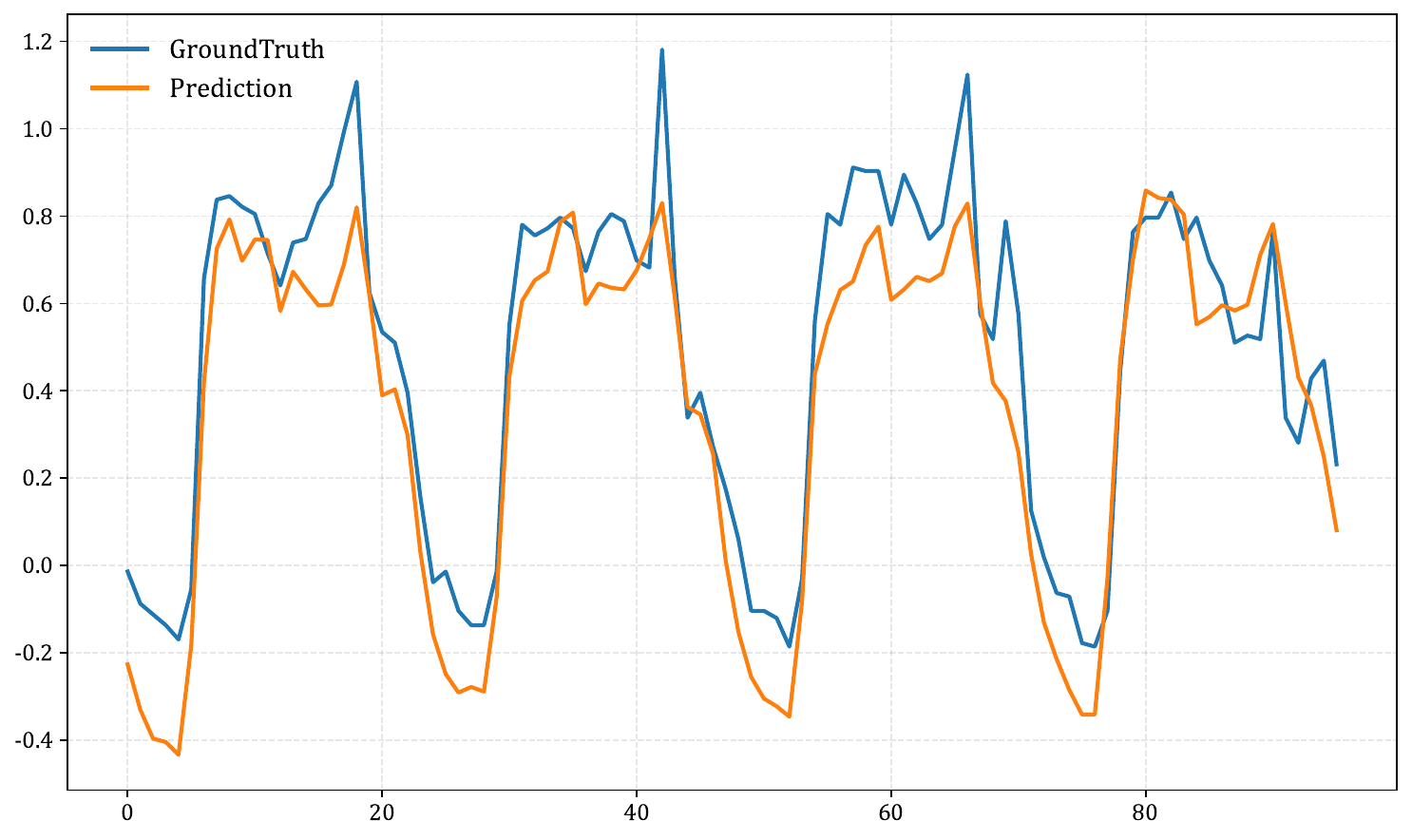}
        \caption*{TimesNet + PULSE}
    \end{subfigure}
    \vfill
    
    \begin{subfigure}[b]{0.48\textwidth}
        \centering
        \includegraphics[width=\linewidth]{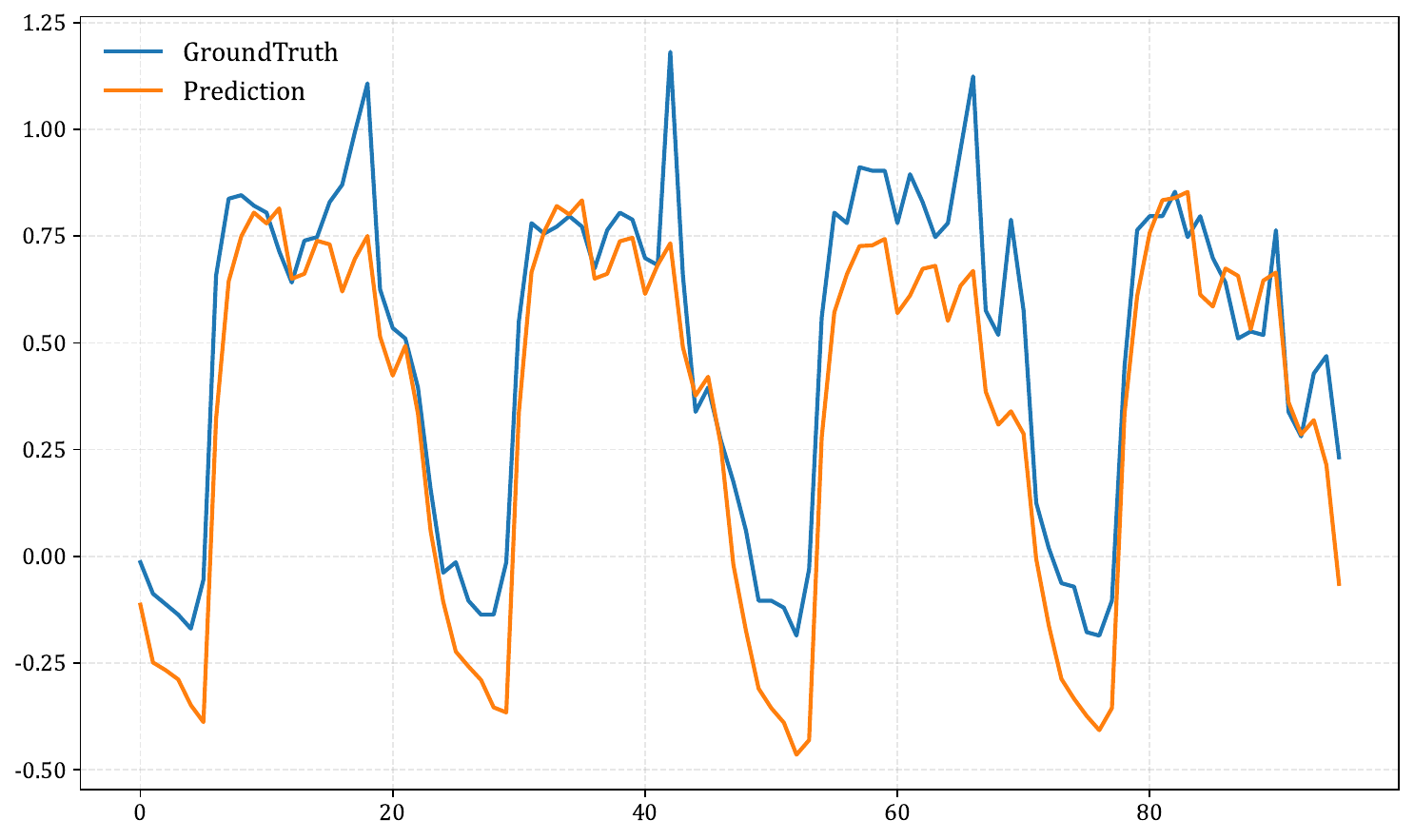}
        \caption*{iTransformer}
    \end{subfigure}
    \hfill
    \begin{subfigure}[b]{0.48\textwidth}
        \centering
        \includegraphics[width=\linewidth]{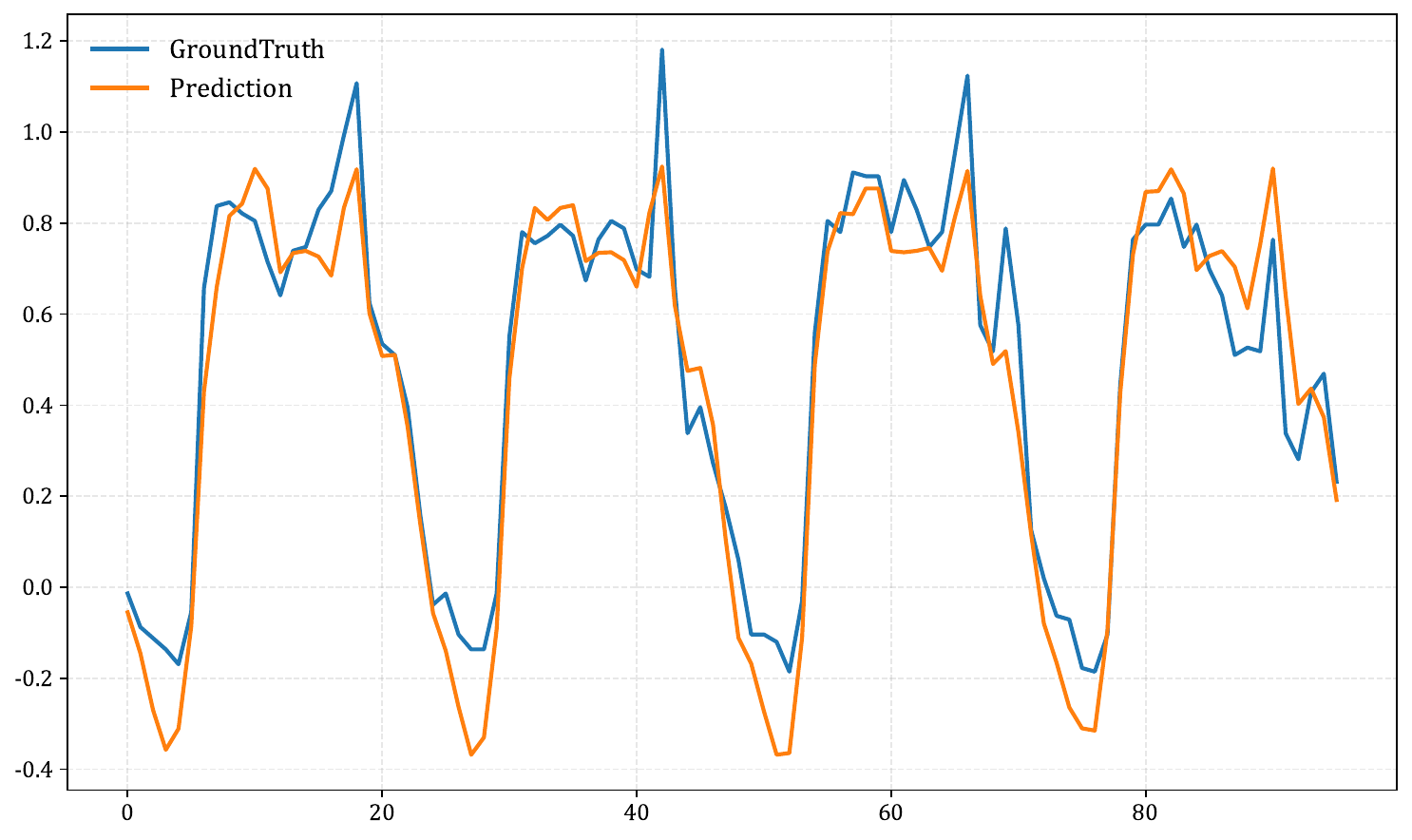}
        \caption*{iTransformer + PULSE}
    \end{subfigure}
    
    \caption{Visualization of the plug-and-play forecasting results (input-96-predict-96) on the Electricity dataset. Each row compares the original performance of DLinear, PatchTST, TimesNet, and iTransformer (left) against the enhanced results after integrating our PULSE framework (right). It is evident that PULSE consistently rectifies the ``Phase Amnesia'' issue, leading to more precise alignment with the ground truth.}
    \label{fig:pulse_comparison_full_page}
\end{figure*}
\end{document}